\theoremstyle{plain}
\newtheorem{lemma}{Lemma}
\theoremstyle{definition}
\theoremstyle{remark}
\icmltitlerunning{Byzantine-Robust Learning on Heterogeneous Data via Gradient Splitting}
\crefname{ineq}{Inequality}{Inequalities}
\crefname{assumption}{Assumption}{Assumptions}
\crefname{proposition}{Proposition}{Propositions}
\newcommand{\scalefactor}{0.88}
\def\eqref#1{equation~\ref{#1}}
\def\1{\bm{1}}
\def\vg{{\bm{g}}}
\def\vw{{\bm{w}}}
\def\vx{{\bm{x}}}
\DeclareMathAlphabet{\mathsfit}{\encodingdefault}{\sfdefault}{m}{sl}
\SetMathAlphabet{\mathsfit}{bold}{\encodingdefault}{\sfdefault}{bx}{n}
\def\gB{{\mathcal{B}}}
\def\gH{{\mathcal{H}}}
\def\gI{{\mathcal{I}}}
\def\gO{{\mathcal{O}}}
\def\gS{{\mathcal{S}}}
\def\sN{{\mathbb{N}}}
\def\sR{{\mathbb{R}}}
\newcommand{\E}{\mathbb{E}}
\newcommand{\Var}{\mathrm{Var}}
\newcommand{\vdelta}{\bm{\delta}}
\newcommand{\vX}{\bm{X}}
\newcommand{\vxi}{\bm{\xi}}
\newcommand{\nclients}{n}
\newcommand{\nattackers}{f}
\newcommand{\honestclients}{\mathcal{H}}
\newcommand{\byzantineclients}{\mathcal{B}}
\newcommand{\sample}{\vxi}
\newcommand{\samples}[1]{\vxi_{#1}}
\newcommand{\grad}[2]{\bar{\vg}_{#1}^{#2}}
\newcommand{\sgrad}[2]{\vg_{#1}^{#2}}
\newcommand{\avgsgrad}[1]{\vg^{#1}}
\newcommand{\aggsgrad}[1]{\hat{\vg}^{#1}}
\newcommand{\subgrad}[2]{\bar{\vg}_{#1}^{(#2)}}
\newcommand{\subsgrad}[2]{\vg_{#1}^{(#2)}}
\newcommand{\subavgsgrad}[1]{\vg^{(#1)}}
\newcommand{\subaggsgrad}[1]{\hat{\vg}^{(#1)}}
\newcommand{\param}[1][]{\vw^{#1}}
\newcommand{\localparam}[2]{\vw_{#1}^{#2}}
\newcommand{\nparams}{d}
\newcommand{\loss}[1][]{\mathcal{L}_{#1}}
\newcommand{\batch}[2]{\vxi_{#1}^{#2}}
\newcommand{\agg}{\mathcal{A}}
\newcommand{\proposedmethod}{GAS\xspace}
\newcommand{\shorten}{GrAdient Splitting\xspace}
\newcommand{\ngroups}{p}
\newcommand{\indexset}[1]{\mathcal{J}_{#1}}
\newcommand{\component}[2]{[#1]_{#2}}
\newcommand{\clientscore}[1]{s_{#1}}
\newcommand{\groupscore}[2]{s_{#1}^{(#2)}}
\newcommand{\target}{y}
\begin{document}

\twocolumn[
\icmltitle{Byzantine-Robust Learning on Heterogeneous Data via Gradient Splitting}

% It is OKAY to include author information, even for blind
% submissions: the style file will automatically remove it for you
% unless you've provided the [accepted] option to the icml2022
% package.

% List of affiliations: The first argument should be a (short)
% identifier you will use later to specify author affiliations
% Academic affiliations should list Department, University, City, Region, Country
% Industry affiliations should list Company, City, Region, Country

% You can specify symbols, otherwise they are numbered in order.
% Ideally, you should not use this facility. Affiliations will be numbered
% in order of appearance and this is the preferred way.
\icmlsetsymbol{equal}{*}
\icmlsetsymbol{intern}{\dag}

\begin{icmlauthorlist}
\icmlauthor{Yuchen Liu}{zju,equal,intern}
\icmlauthor{Chen Chen}{sony,equal}
\icmlauthor{Lingjuan Lyu}{sony}
\icmlauthor{Fangzhao Wu}{microsoft}
\icmlauthor{Sai Wu}{zju}
\icmlauthor{Gang Chen}{zju}
\end{icmlauthorlist}

\icmlaffiliation{zju}{Key Lab of Intelligent Computing Based Big Data of Zhejiang Province, Zhejiang University, Hangzhou, China}
\icmlaffiliation{sony}{Sony AI}
\icmlaffiliation{microsoft}{Microsoft}

\icmlcorrespondingauthor{Lingjuan Lyu}{lingjuan.lv@sony.com}

% You may provide any keywords that you
% find helpful for describing your paper; these are used to populate
% the "keywords" metadata in the PDF but will not be shown in the document
\icmlkeywords{Machine Learning, ICML}

\vskip 0.3in
]

% \printAffiliationsAndNotice{}  % leave blank if no need to mention equal contribution
\printAffiliationsAndNotice{\icmlEqualContribution \icmlIntern} % otherwise use the standard text.

\begin{abstract}
Federated learning has exhibited vulnerabilities to Byzantine attacks, where the Byzantine attackers can send arbitrary gradients to a central server to destroy the convergence and performance of the global model.
A wealth of robust AGgregation Rules (AGRs) have been proposed to defend against Byzantine attacks.
However, Byzantine clients can still circumvent robust AGRs when data is non-Identically and Independently Distributed (non-IID).
In this paper, we first reveal the root causes of performance degradation of current robust AGRs in non-IID settings: the curse of dimensionality and gradient heterogeneity.
In order to address this issue, we propose \proposedmethod, a \shorten approach that can successfully adapt existing robust AGRs to non-IID settings. 
We also provide a detailed convergence analysis when the existing robust AGRs are combined with \proposedmethod.
Experiments on various real-world datasets verify the efficacy of our proposed \proposedmethod.
The implementation code is provided in \url{https://github.com/YuchenLiu-a/byzantine-gas}.
\end{abstract}

\section{Introduction}
\label{sec:intro}

Federated Learning (FL) \cite{mcmahan2017fl,lyu2020threats,zhao2020privacy} provides a privacy-aware and distributed machine learning paradigm.
It has recently attracted widespread attention as a result of emerging data silos and growing privacy awareness.
In this paradigm, data owners (clients) repeatedly use their private data to compute local gradients and send them to a central server for aggregation.
In this way, clients can collaborate to train a model without exposing their private data.
However, the distributed property of FL also makes it vulnerable to Byzantine attacks \cite{blanchard2017krum, guerraoui2018bulyan,chen2020robust}, in which Byzantine clients can send arbitrary messages to the central server to bias the global model.
Moreover, it is challenging for the server to identify the Byzantine clients, since the server can neither access clients' training data nor monitor their local training process.

In order to defend against Byzantine attacks, the community has proposed a wealth of defenses \cite{blanchard2017krum, guerraoui2018bulyan, yin2018mediantrmean}.
Most defenses abandon the averaging step adopted by conventional FL frameworks, e.g., FedAvg \cite{mcmahan2017fl}.
Instead, they use robust AGgregation Rules (AGRs) to aggregate local gradients and compute the global gradient.
Most existing robust AGRs assume that the data distribution on different clients is Identically and Independently Distributed (IID) \cite{bernstein2018signsgd, ghosh2019robust}.
In fact, the data is usually heterogeneous, i.e., non-IID, in real-world FL applications \cite{mcmahan2017fl, kairouz2021advances,lyu2022privacy,zhang2023delving,Chen22CalFAT}.
In this paper, we focus on defending against Byzantine attacks in the more realistic non-IID setting.

In the non-IID setting, defending against Byzantine attacks becomes more challenging \cite{karimireddy2022bucketing, acharya2022cgm}.
Robust AGRs that try to include \emph{all} the honest gradients in aggregation \cite{blanchard2017krum,shejwalkar2021dnc} fail to handle the curse of dimensionality \cite{guerraoui2018bulyan}.
Byzantine clients can take advantage of the high dimension of gradients and participate in aggregation.
As a result, the global gradient is manipulated away from the optimal gradient, i.e., the average of honest gradients.
Other robust AGRs \cite{guerraoui2018bulyan,yin2018mediantrmean} aggregate \emph{fewer} gradients to ensure that only honest gradients participate in aggregation.
However, the global gradient is still of limited utility due to gradient heterogeneity \cite{li2020fedprox,karimireddy2020scaffold} in the non-IID setting.
In summary, most existing AGRs fail to address both the curse of dimensionality \cite{guerraoui2018bulyan} and gradient heterogeneity \cite{karimireddy2022bucketing} at the same time.
Consequently, they fail to achieve satisfactory performance in the non-IID setting.

Motivated by the above observations, we propose a \shorten based approach called \proposedmethod for Byzantine robustness in non-IID settings. 
In particular, to address the curse of dimensionality, \proposedmethod splits each high-dimensional gradient into low-dimensional sub-vectors and detects Byzantine gradients with the sub-vectors.
To handle the gradient heterogeneity, \proposedmethod aggregates all the identified honest gradients.

Our contributions in this work are summarized below.
\begin{itemize}
    \item We reveal the root causes of defending against Byzantine attacks in the non-IID setting: the gradient heterogeneity and the curse of dimensionality. 
    Gradient heterogeneity makes it hard for Byzantine defenses to obtain a global gradient close to the optimal.
    The curse of dimensionality enables the Byzantine gradients to circumvent defenses that aggregate more gradients. 
    To the best of our knowledge, no existing defense can address both issues at the same time.  
    \item We propose a novel and compatible approach called \proposedmethod which consists of three steps: 
    1. splitting the high-dimensional gradients into low-dimensional sub-vectors;
    2. penalizing each gradient by a score with a robust AGR based on their split low-dimensional sub-vectors to circumvent the curse of dimensionality;
    3. identifying the gradients with low scores as honest ones and aggregating all the identified honest gradients to tackle the gradient heterogeneity issue. 
    In step 2, \proposedmethod can apply any robust AGR to low-dimensional sub-vectors for identification, offering great compatibility. 
    \item We provide convergence analysis for our proposed \proposedmethod.
     Extensive experiments on four real-world datasets across various non-IID settings empirically validate the effectiveness and superiority of our \proposedmethod.
\end{itemize}

\section{Related Works}
\label{sec:related_works}

\textbf{IID defenses.}
\citet{blanchard2017krum} first introduce Byzantine robust learning and propose a distance-based AGR called Multi-Krum.
\citet{yin2018mediantrmean} theoretically analyze the statistical optimality of Median and Trimmed Mean.
\citet{guerraoui2018bulyan} propsoe Bulyan that applies a variant of Trimmed Mean as a post-processing method to handle the curse of dimensionality. 
\citet{pillutla2019geometric} discuss the Byzantine robustness of Geometric Median and propose a computationally efficient approximation of Geometric Median. 
\citet{shejwalkar2021dnc} propose to perform dimensionality reduction using random sampling, followed by spectral-based outlier removal. 
These defenses assume the data is IID.
Their efficacy is therefore limited in more realistic FL applications where the data is non-IID. 

\textbf{Non-IID defenses.}
Recent works have also explored defenses applicable to the non-IID setting.
\citet{park2021sageflow} can only achieve Byzantine robustness when the server has a validation set, which compromises the privacy principle of the FL \cite{mcmahan2017fl}.
\citet{data2021byzantine} adapt a robust mean estimation approach to FL in order to combat the Byzantine attack in the non-IID setting.
However, it requires $\Omega(d^2)$ time ($d$ is the number of model parameters), which is unacceptable due to the high dimensionality of model parameters.
\citet{el2021mda} consider Byzantine robustness in the asynchronous communication and unconstrained topologies settings.
\citet{acharya2022cgm} propose to apply geometric median only to the sparsified gradients to save computation cost.
\citet{karimireddy2022bucketing} perform a bucketing process before aggregation to reduce the gradient heterogeneity.
However, most of these methods ignore the curse of dimensionality \cite{guerraoui2018bulyan}, which becomes intractable in the non-IID setting (refer to \cref{sec:motivation} for more discussion).
As a result, they fail to achieve satisfactory performance in the non-IID setting.

\begin{figure*}[t]
\centering
\includegraphics[width=0.6\textwidth]{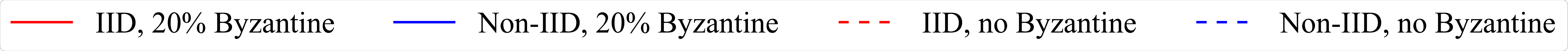}
\\
\begin{subfigure}[t]{0.48\textwidth}
\centering
\includegraphics[width=0.32\textwidth]{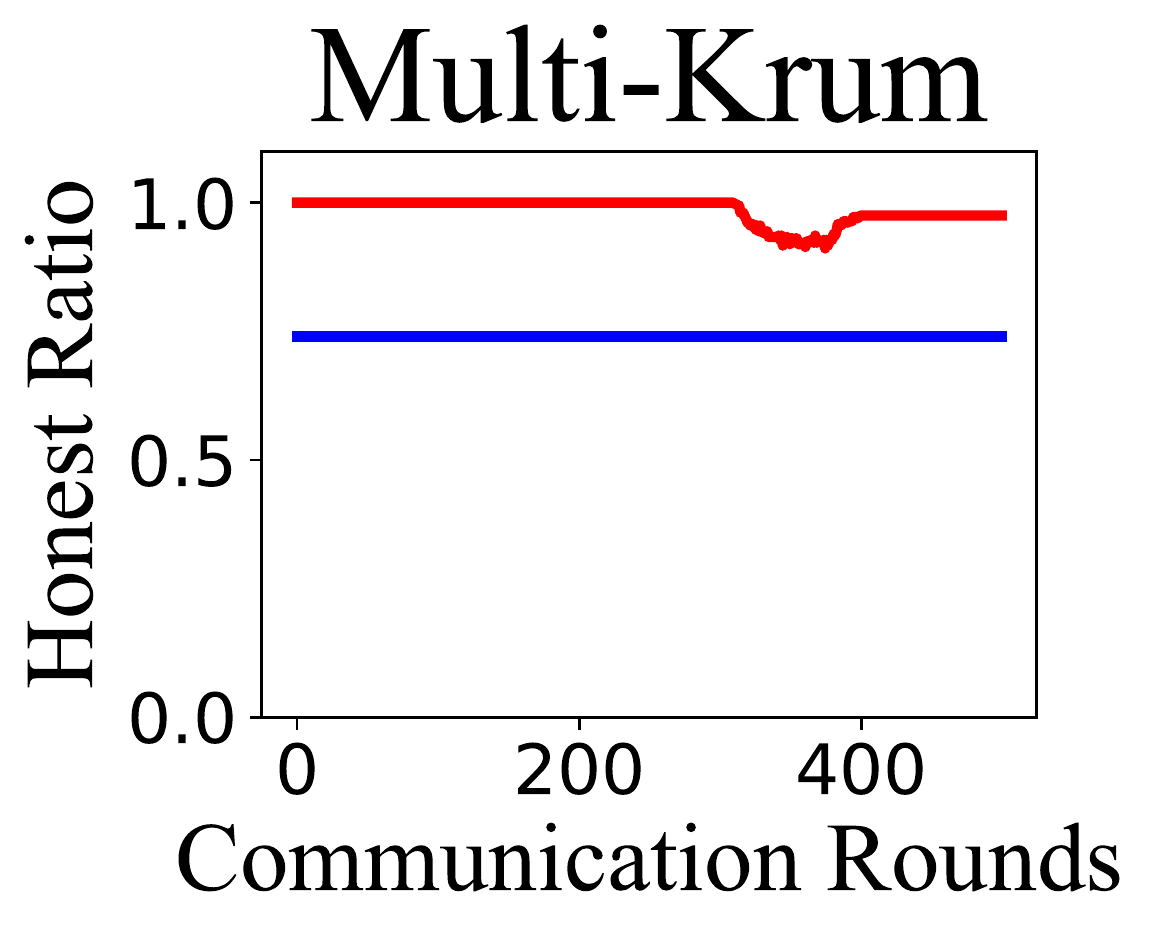}
\includegraphics[width=0.32\textwidth]{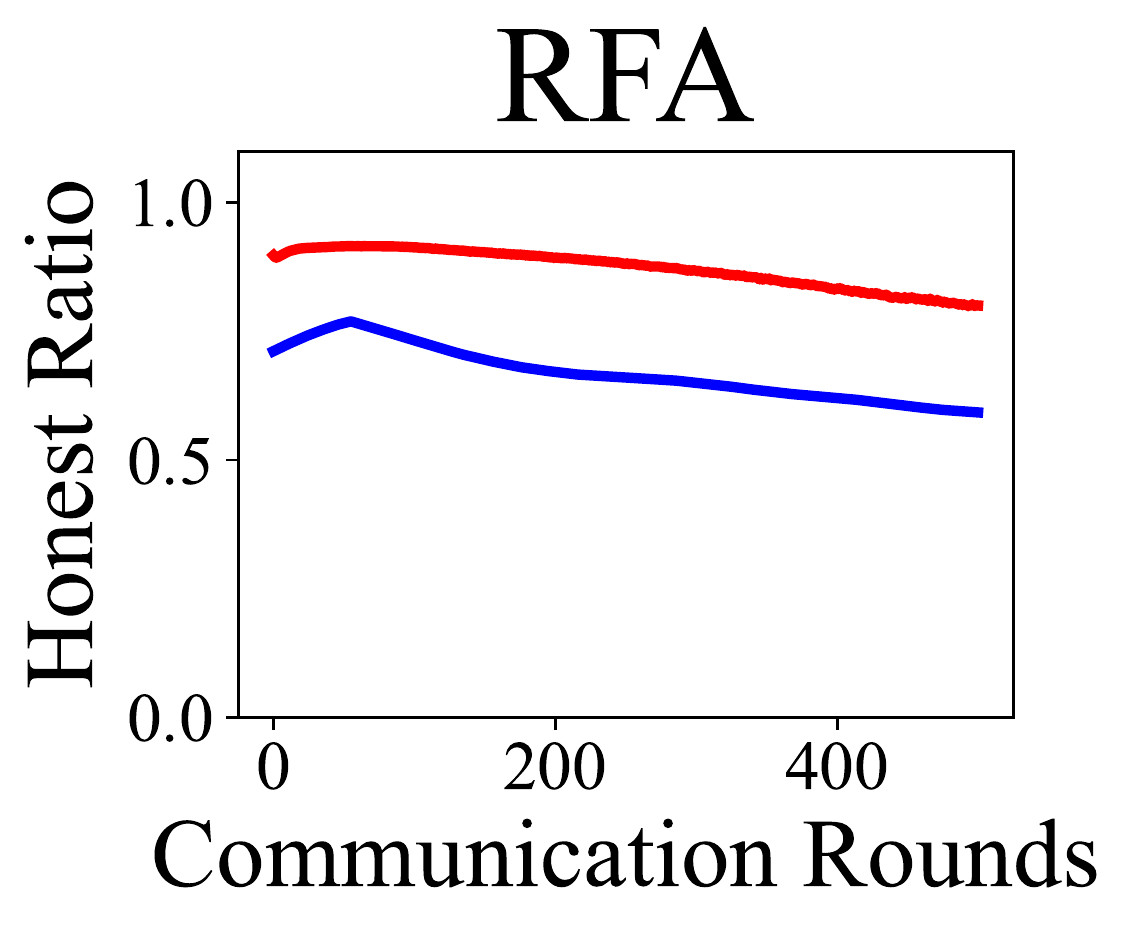}
\includegraphics[width=0.32\textwidth]{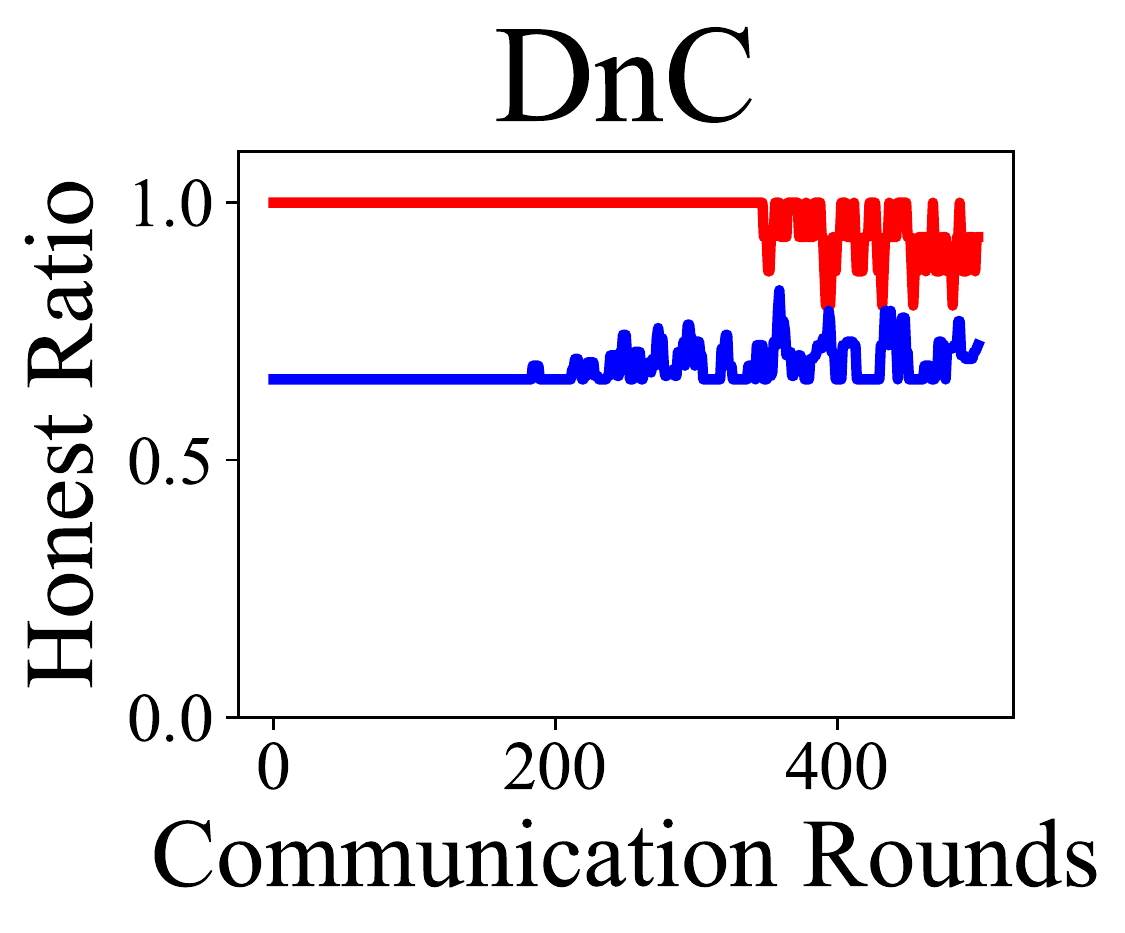}
\caption{The ratio of honest gradients that participate in the aggregation for robust AGRs that try to include all honest gradients (Multi-Krum \cite{blanchard2017krum}, RFA \cite{pillutla2019geometric}, DnC \cite{shejwalkar2021dnc}).}
\label{fig:more_selection}
\end{subfigure}
\hfill
\begin{subfigure}[t]{0.48\textwidth}
\centering
\includegraphics[width=0.32\textwidth]{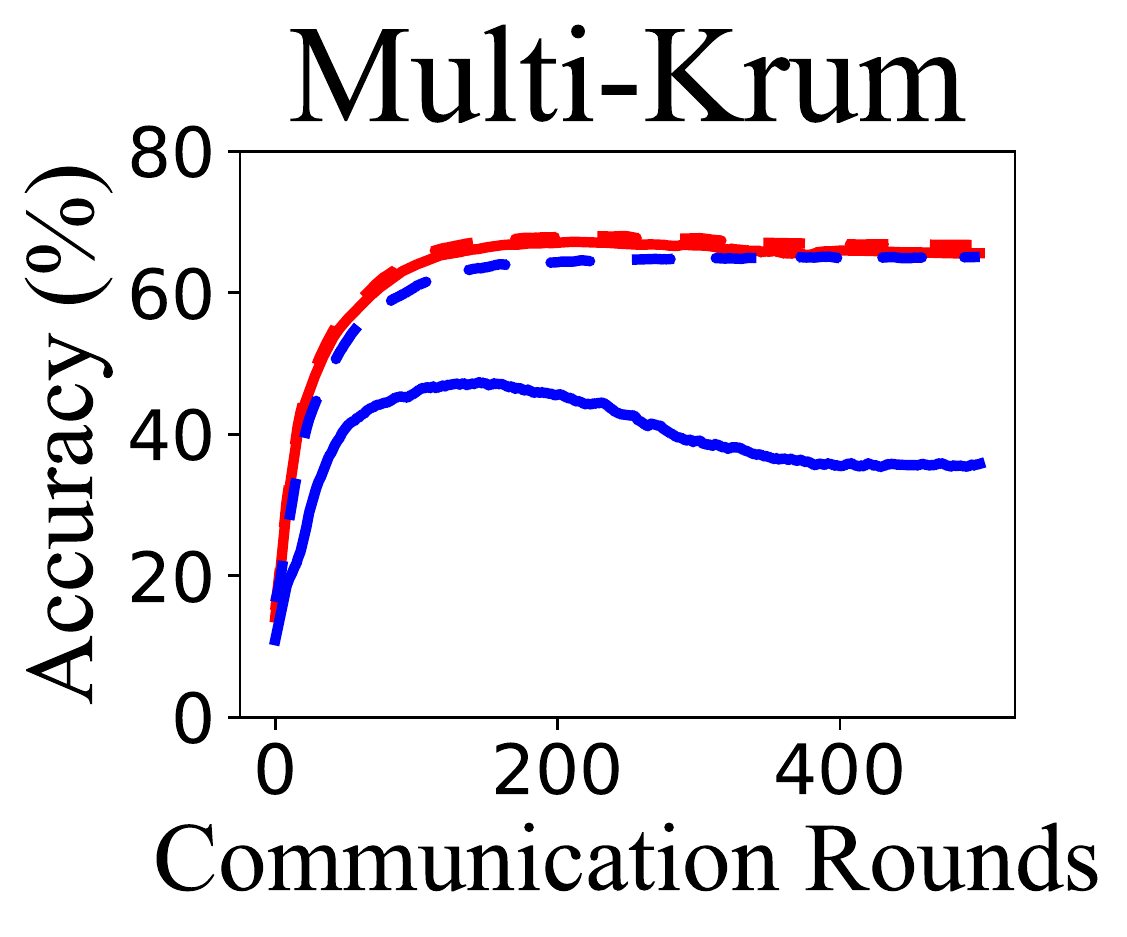}
\includegraphics[width=0.32\textwidth]{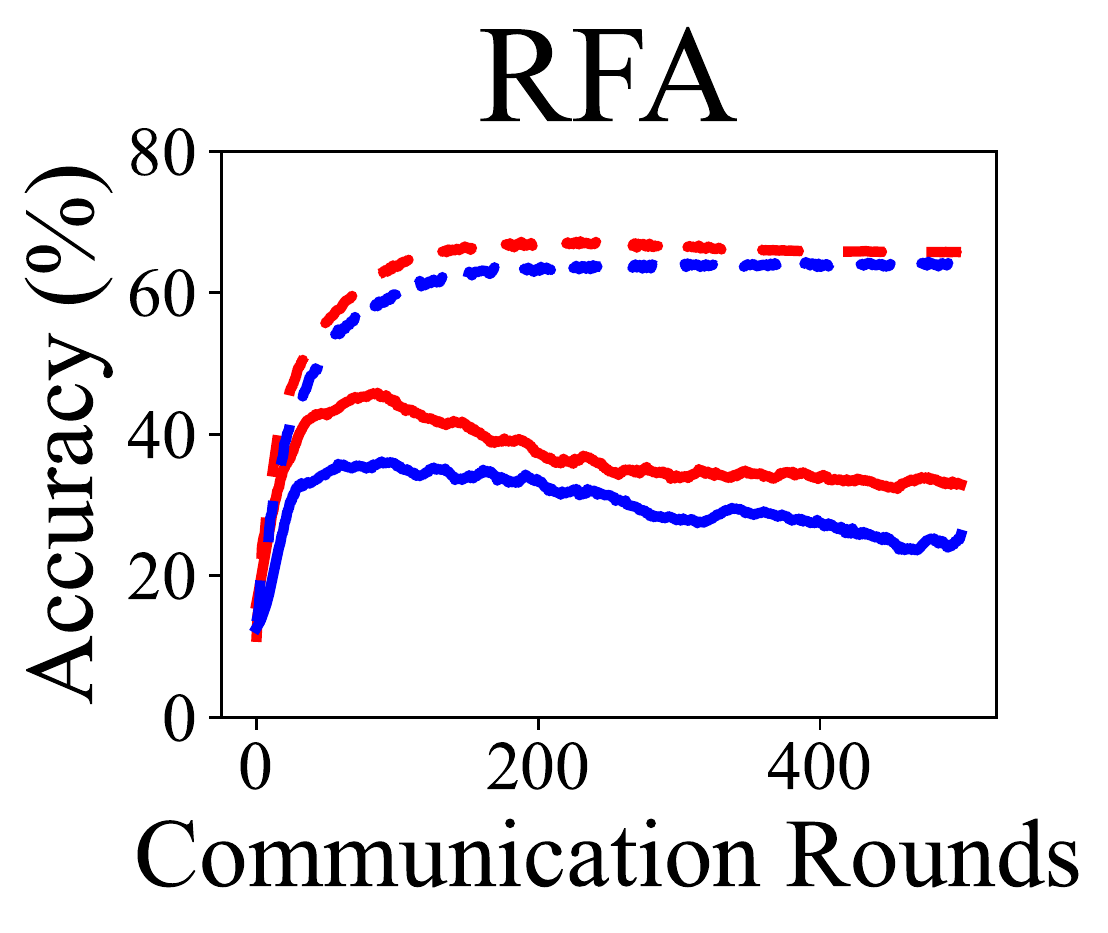}
\includegraphics[width=0.32\textwidth]{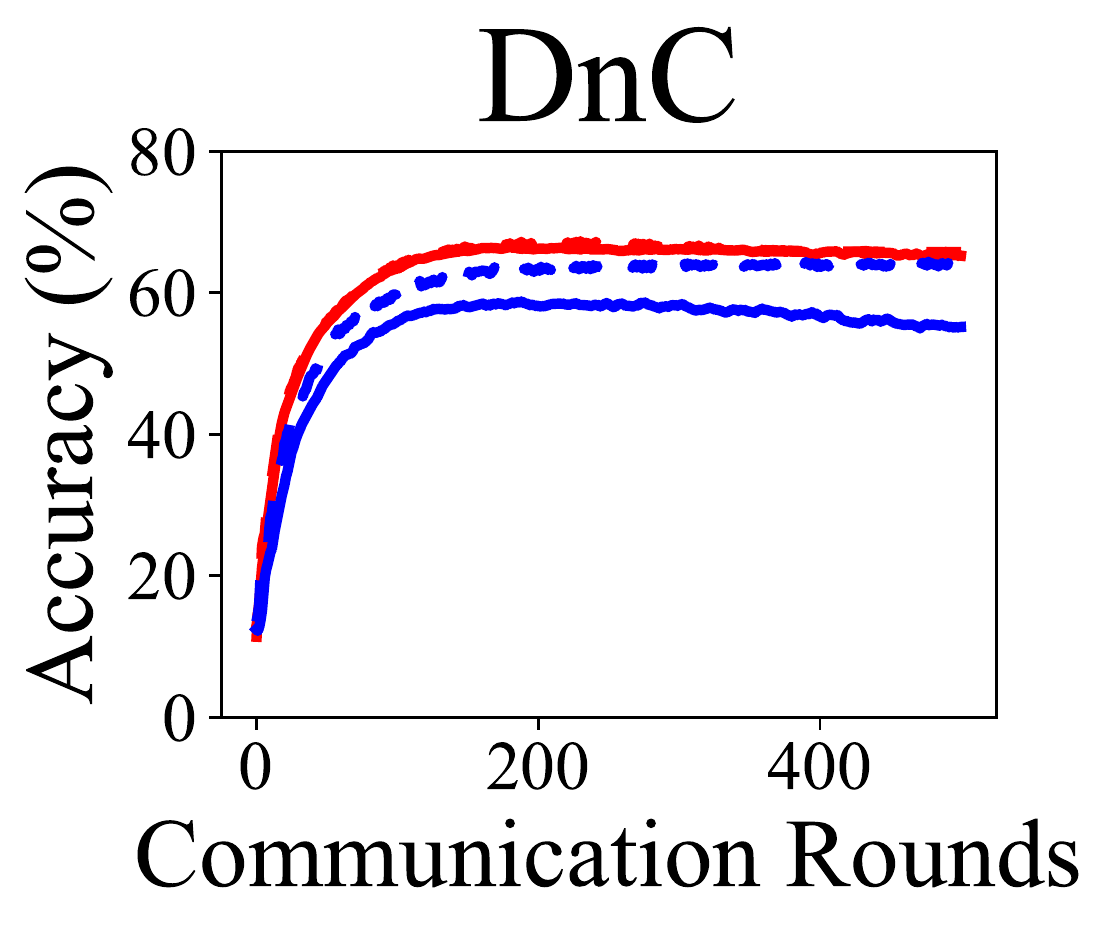}
\caption{The test accuracy of robust AGRs that try to include all honest gradients (Multi-Krum \cite{blanchard2017krum}, RFA \cite{pillutla2019geometric}, DnC \cite{shejwalkar2021dnc}).}
\label{fig:more_performance}
\end{subfigure}
\\
\begin{subfigure}[t]{0.48\textwidth}
\centering
\includegraphics[width=0.31\textwidth]{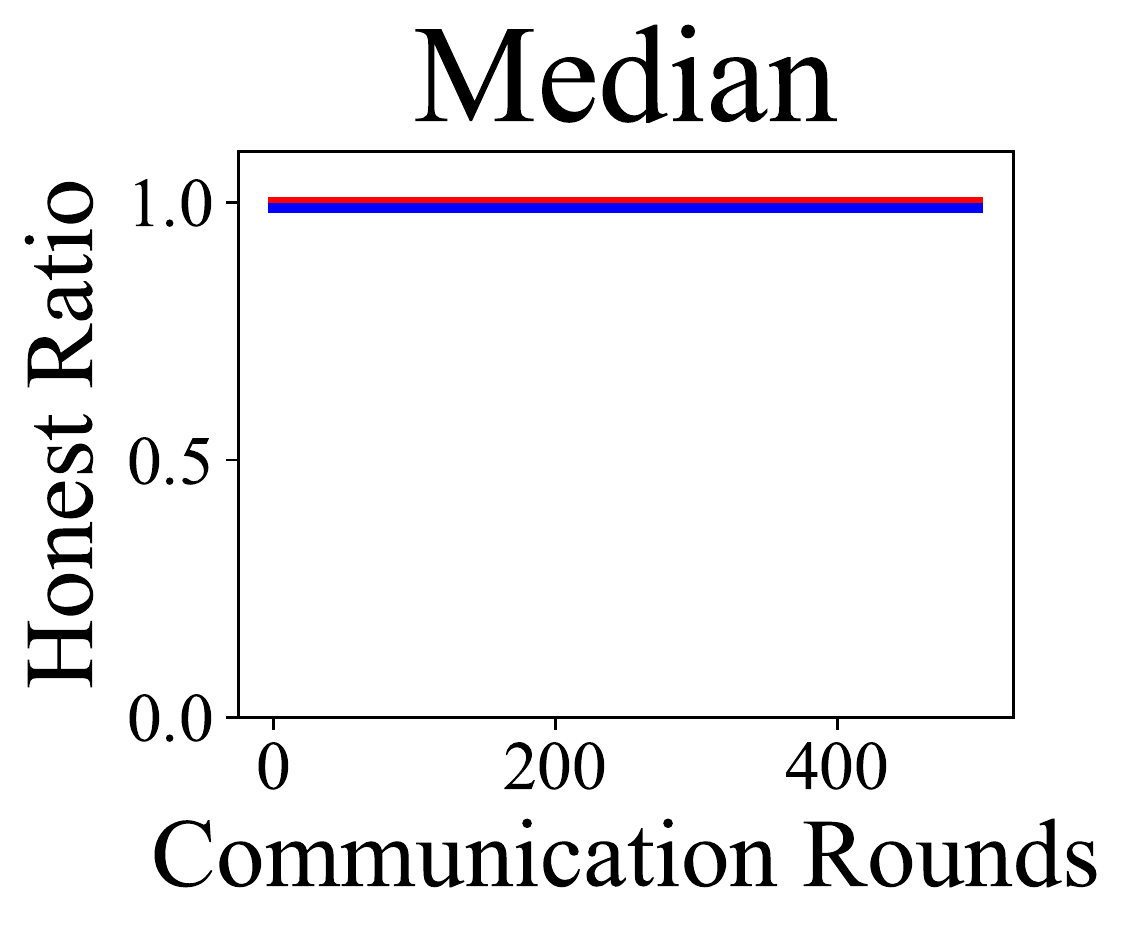}
\includegraphics[width=0.31\textwidth]{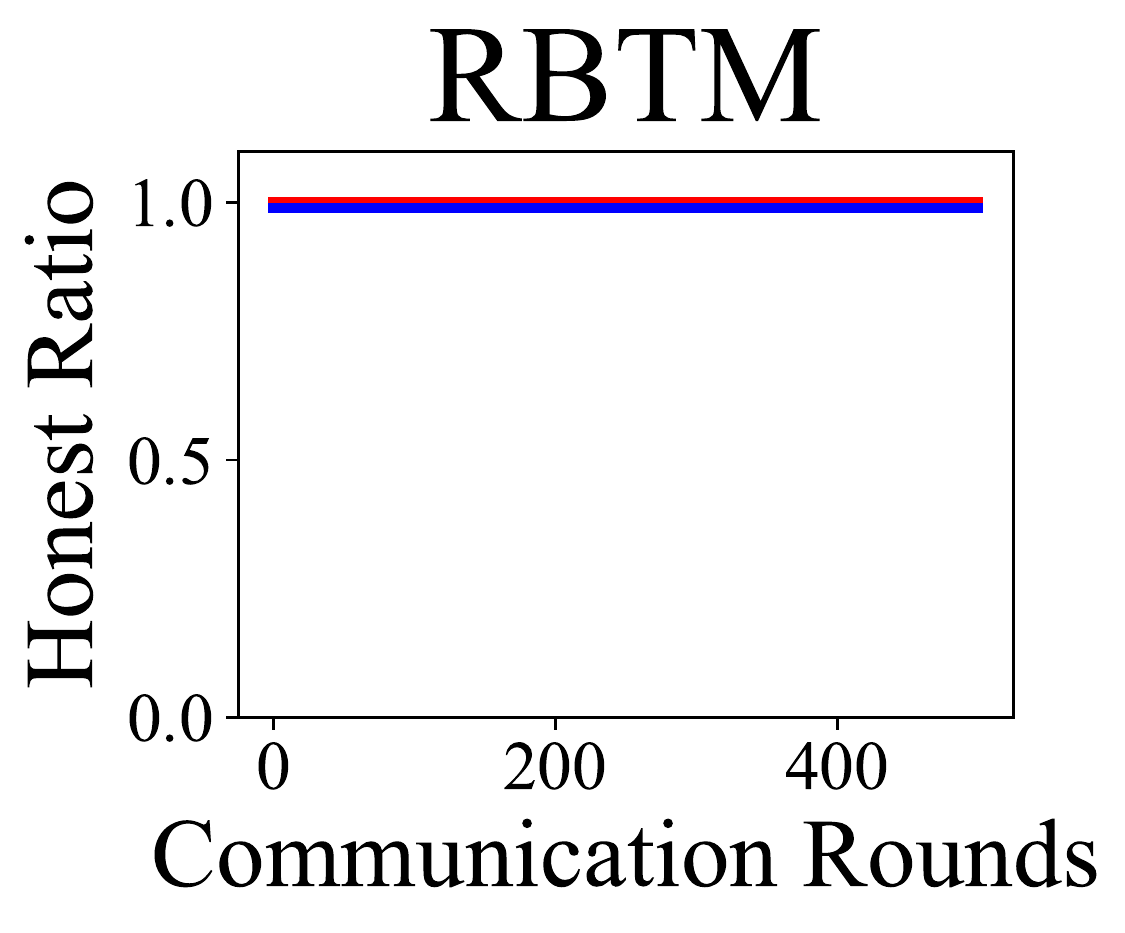}
\includegraphics[width=0.31\textwidth]{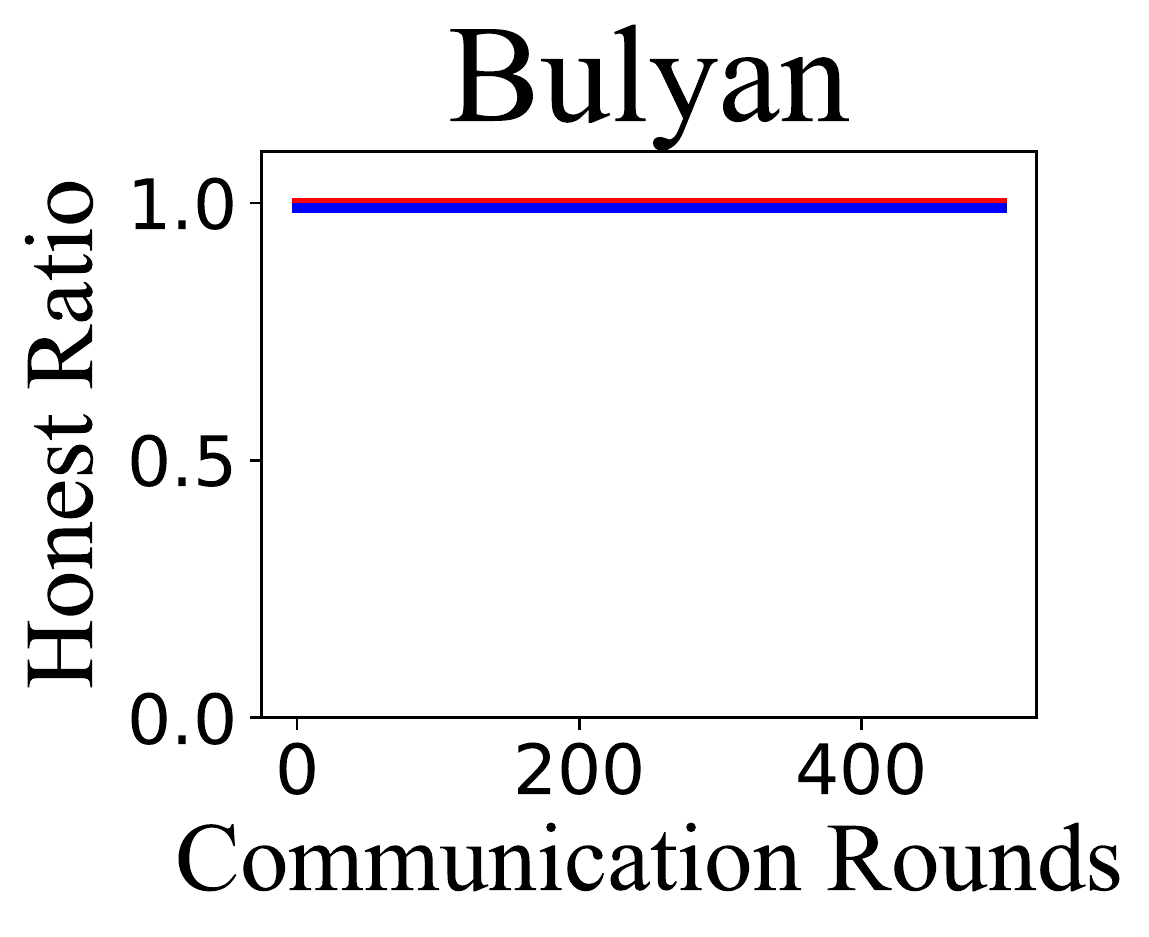}
\caption{The ratio of honest gradients that participate in the aggregation of robust AGRs that aggregate fewer gradients (Median \cite{yin2018mediantrmean}, RBTM \cite{el2021mda}, Bulyan \cite{guerraoui2018bulyan}).}
\label{fig:few_selection}
\end{subfigure}
\hfill
\begin{subfigure}[t]{0.48\textwidth}
\centering
\includegraphics[width=0.31\textwidth]{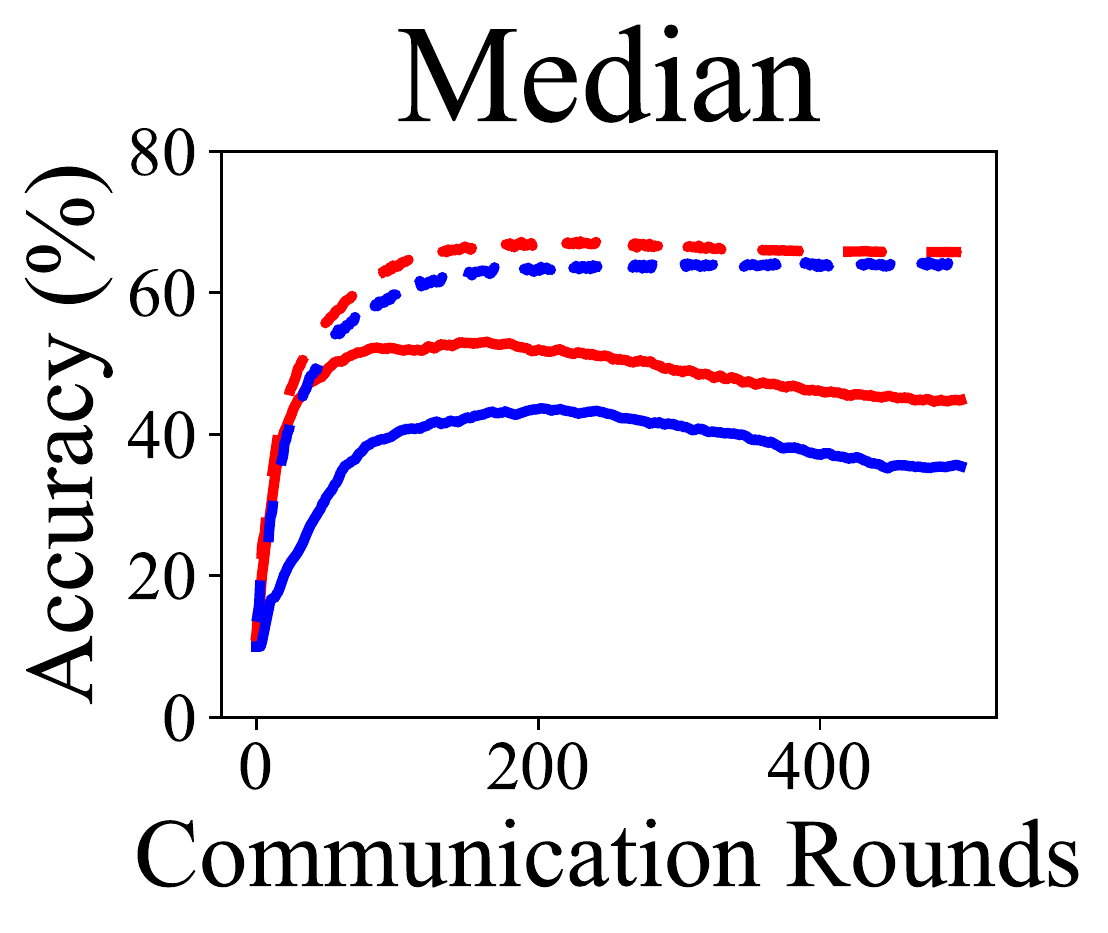}
\includegraphics[width=0.31\textwidth]{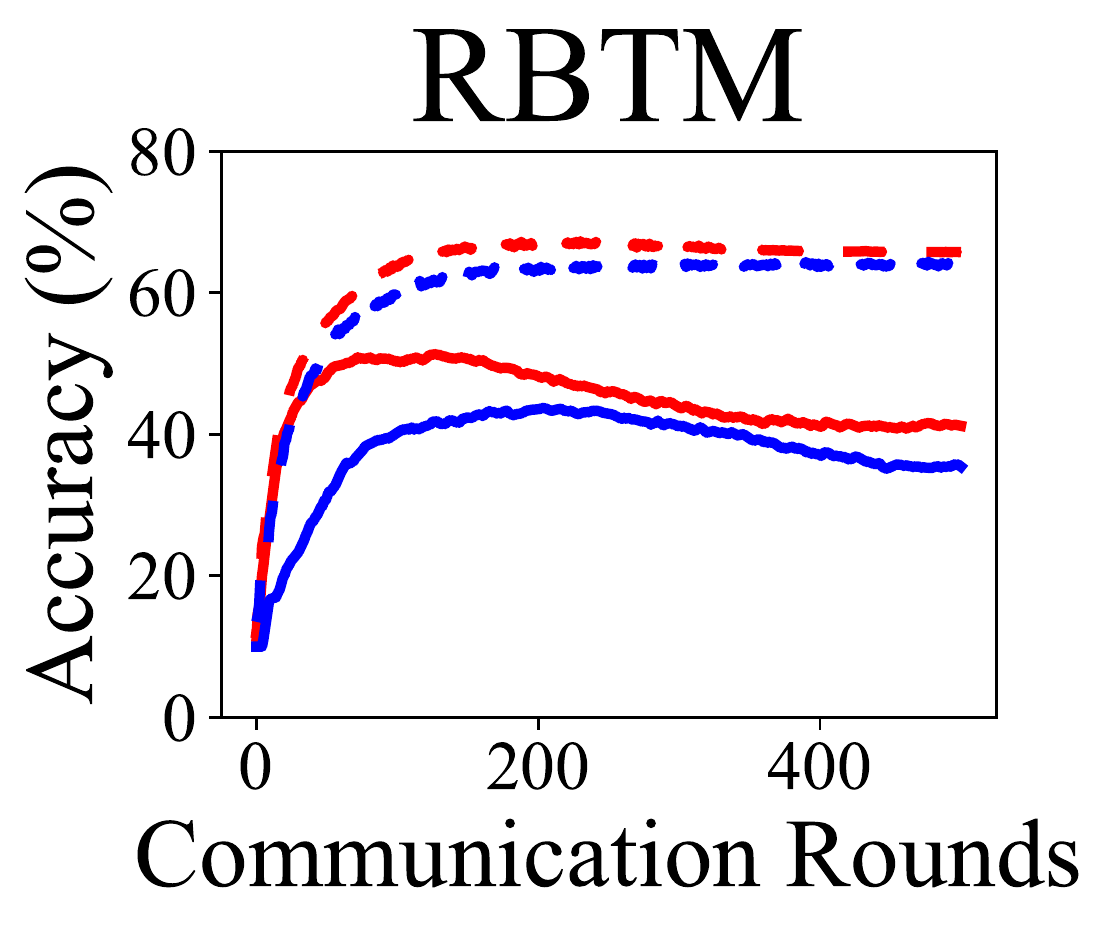}
\includegraphics[width=0.31\textwidth]{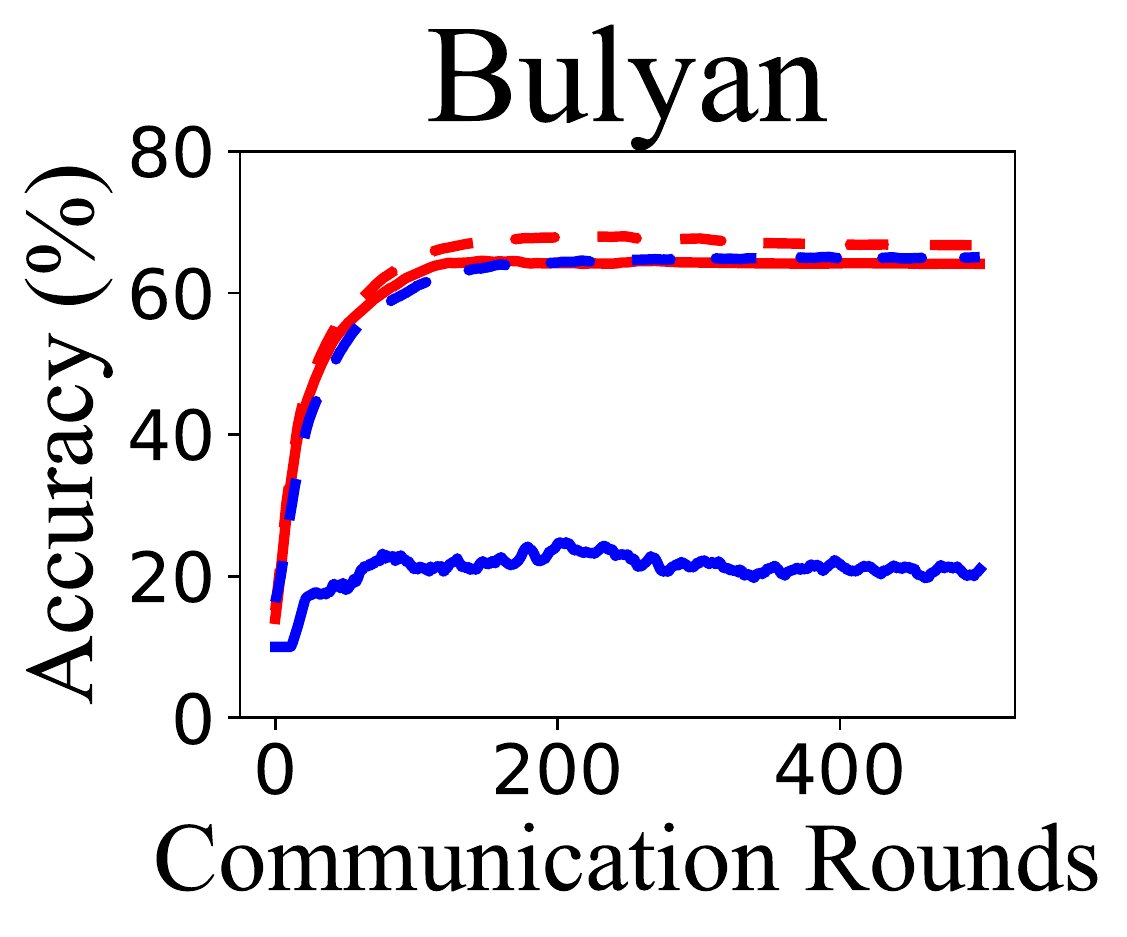}
\caption{The test accuracy of robust AGRs that aggregate few gradients (Median \cite{yin2018mediantrmean}, RBTM \cite{el2021mda}, Bulyan \cite{guerraoui2018bulyan}).}
\label{fig:few_performance}
\end{subfigure}
\caption{The experiments are conducted under the attack of $20\%$ Byzantine clients on CIFAR-10 \cite{krizhevsky2009cifar} dataset in both IID and non-IID settings. More detailed setups are covered in \cref{appsec:motivation_setup}.}
\label{fig:observation}
\end{figure*}

\section{Notations and Preliminaries}
\label{sec:setup}

\textbf{Notations.}
For any positive integer $n\in\sN^+$, we denote the set $\{1,\ldots,n\}$ by $[n]$.
The cardinality of a set $\mathcal{S}$ is denoted by $|\mathcal{S}|$.
% For a real number $x\in\sR$, we use $|x|$ to denote the absolute value of number $x$.
We denote the $\ell_2$ norm of vector $\vx$ by $\|\vx\|$.
We use $\component{\vx}{j}$ to represent the $j$-th component of vector $\vx$.
The sub-vector of vector $\vx$ indexed by index set $\indexset{}$ is denoted by $\component{\vx}{\indexset{}}=(\component{\vx}{j_1},\ldots,\component{\vx}{j_k})$, where $\indexset{}=\{j_1,\ldots,j_k\}$, and $k=|\mathcal{J}|$ is the number of indices.
For a random variable $X$, we use $\E[X]$ and $\Var[X]$ to denote the expectation and variance of $X$, respectively.

\textbf{Federated learning.}
We consider the Federated Learning (FL) system with a central server and $\nclients$ clients following \cite{blanchard2017krum, yin2018mediantrmean, chen2022practical}.
Then the objective is to minimize loss $\loss(\param)$ defined as follows.
\begin{gather}
\label{eq:fl_objective}
\loss(\param)=\frac{1}{\nclients}\sum_{i=1}^\nclients\loss[i](\param),\\
\quad\text{where }\loss[i](\param)=\E_{\samples{i}}[\loss(\param;\samples{i})],
i\in[\nclients],
\end{gather}
where $\param$ is the model parameter,
$\loss[i]$ is the loss function on the $i$-th client,
$\samples{i}$ is the data distribution on the $i$-th client,
and $\loss(\param;\sample)$ is the loss function.

In the $t$-th communication round, the server distributes the parameter $\param[t]$ to the clients.
Each client $i$ conducts several epochs of local training on local data to obtain the updated local parameter $\localparam{i}{t}$.
Then, client $i$ computes the local gradient $\sgrad{i}{t}$ as follows and sends it to the server.
\begin{align}
\sgrad{i}{t}=\param[t]-\localparam{i}{t}.
\end{align}

Finally, the server collects the local gradients and uses the average gradient to update the global model.
\begin{align}
\label{eq:fedavg}
    \param[t+1]=\param[t]-\avgsgrad{t},
    \quad\avgsgrad{t}=\frac{1}{n}\sum_{i=1}^n\sgrad{i}{t}.
\end{align}

The process is repeated until the number of communication rounds reaches the set value $T$.

\textbf{Byzantine threat model.}
In real-world applications, not all clients in FL systems are honest. 
In other words, there may exist Byzantine clients in FL systems \cite{blanchard2017krum}.
Suppose that among total $\nclients$ clients, $f$ clients are Byzantine.
Let $\byzantineclients\subseteq[\nclients]$ denote set of Byzantine clients and $\honestclients=[\nclients]\setminus\byzantineclients$ denote the set of honest clients.
In the presence of Byzantine clients, the uploaded message of client $i$ in the $t$-th communication round is
\begin{align}
\label{eq:local_update}
   \sgrad{i}{t}=
   \begin{cases}
      \param[t]-\localparam{i}{t+1}, & i\in\honestclients, \\
      *, & i\in\byzantineclients, 
   \end{cases}
\end{align}
where $*$ represents an arbitrary value.

\textbf{Robust AGRs.}
Most existing Byzantine defenses replace the averaging step with a robust AGR to defend against Byzantine attacks.
More specifically, the server aggregates the gradients and updates the global model as follows.
\begin{align}
\label{eq:robust_aggregation}
\param[t+1]=\param[t]-\aggsgrad{t},
\quad\aggsgrad{t}=\agg(\sgrad{1}{t}, \ldots,\sgrad{\nclients}{t}), 
\end{align}

where $\aggsgrad{t}$ is the aggregated gradient,
and $\agg$ is a robust AGR, e.g., Multi-Krum \cite{blanchard2017krum} and Bulyan \cite{guerraoui2018bulyan}.

For notation simplicity, we omit the superscript $t$ of the gradient symbols when there is no ambiguity in the rest of this paper.

\section{The Challenges of Byzantine Robustness in Non-IID Setting}
\label{sec:motivation}

Most robust AGRs focus on Byzantine robustness in the IID setting \citep{blanchard2017krum,guerraoui2018bulyan}.
When the data is non-IID \cite{kairouz2021advances,zhang2022dense}, the performance of these robust AGRs drops drastically \citep{shejwalkar2021dnc,karimireddy2022bucketing}.
In order to understand the root cause of this performance drop, we perform an experimental study on various robust AGRs.
Particularly, we examine their behaviors under the attack of 20\% Byzantines in both IID and non-IID settings on CIFAR-10 \citep{krizhevsky2009cifar} in \cref{fig:observation}.
More detailed setups are covered in \cref{appsec:motivation_setup}.

Some robust AGRs try to include \emph{all} honest gradients in aggregation (the number of aggregated gradients is no less than $n-f$, i.e., the number of honest clients) \cite{blanchard2017krum,shejwalkar2021dnc,pillutla2019geometric}.
However, they fail to address \emph{the curse of dimensionality} \cite{guerraoui2018bulyan} on heterogeneous data.
Byzantine clients can take advantage of the high dimension of gradients and easily circumvent these defenses. 
As shown in \cref{fig:more_selection}, these defenses include significantly more Byzantine gradients in aggregation in the non-IID setting.
As a result, the global gradient is manipulated away from the optimal gradient,
which leads to an ineffectual global model in the non-IID setting as shown in \cref{fig:more_performance}.

Other robust AGRs aggregate fewer gradients (less than $n-f$) to get rid of Byzantine clients \cite{guerraoui2018bulyan, yin2018mediantrmean, el2021mda}.
The results in \cref{fig:few_selection} imply that they can exclude Byzantine clients from the aggregation in both IID and non-IID settings.
However, their performance still degrades in the non-IID setting as shown in \cref{fig:few_performance}.
In fact, this degradation comes from \emph{gradient heterogeneity} \cite{li2020fedprox,karimireddy2020scaffold} in the non-IID setting.
As a price for removing Byzantine gradients, these robust AGRs exclude a proportion of honest gradients from the aggregation.
Since the honest gradients are heterogeneous, such exclusion causes the aggregated gradient to deviate far from the optimal gradient, i.e., the average of honest gradients.
The deviation further leads to an ineffectual global model.
Therefore, they fail to achieve satisfactory performance in the non-IID setting.

In summary, no existing robust AGR is capable of handling both the curse of dimensionality and gradient heterogeneity at the same time.
A new strategy is needed to tackle both challenges in the non-IID setting. 

\section{Gradient Splitting Based Approach}
\label{sec:proposed}

Our observations in \cref{sec:motivation} clearly motivate the need for a more robust defense to tackle both the curse of dimensionality and gradient heterogeneity to defeat Byzantine attacks in the non-IID setting.
Inspired by these observations, we propose a novel \shorten based approach called \proposedmethod, which consists of three steps as follows.

\textbf{Splitting.}
First, \proposedmethod splits the gradients to mitigate the curse of dimensionality for the next identification step.
The splitting is specified by a partition of set $[\nparams]$, where $\nparams$ is the dimension of gradients.
In particular, we randomly partition $[\nparams]$ into $\ngroups$ subsets, with each subset having no more than $\lceil\nparams/\ngroups\rceil$ dimensions.
Let $\{\indexset{1},\ldots\indexset{\ngroups}\}$ denote the partition.
Each gradient $\sgrad{i}{}$ is correspondingly split into $\ngroups$ sub-vectors as follows.
\begin{align}
    \subsgrad{i}{q}=\component{\sgrad{i}{}}{\indexset{q}},
    \quad i\in[\nclients],q\in[\ngroups],
\end{align}
where $\subsgrad{i}{q}$ is the $q$-th sub-vector of gradient $\sgrad{i}{}$.

\textbf{Identification.}
Then, \proposedmethod applies robust AGR $\agg$ to each group of sub-vectors corresponding to $\indexset{q}$:
\begin{align}
\subaggsgrad{q}=\agg(\subsgrad{1}{q},\ldots,\subsgrad{\nclients}{q}),
\quad q\in[\ngroups],
\end{align}
where $\subaggsgrad{q}$ is the aggregation result of group $q$.
By performing aggregation on each group of low-dimensional sub-vectors separately, \proposedmethod can circumvent the curse of dimensionality and get rid of Byzantine gradients.

Note that $\subaggsgrad{q}$ may still deviate from the optimal gradient due to the gradient heterogeneity \cite{karimireddy2022bucketing} as illustrated in \cref{sec:motivation}.
Therefore, it is \emph{inappropriate} to directly use the aggregation results $\{\subaggsgrad{q},q\in[\ngroups]\}$ as the final output.
Instead, we use $\subaggsgrad{q}$ as an honest reference to compute identification scores for each client as follows.
\begin{align}
\label{eq:group_score}
\groupscore{i}{q}=\|\subsgrad{i}{q}-\subaggsgrad{q}\|,
\quad i\in[\nclients], q\in[\ngroups].
\end{align}
Since the group-wise aggregation result $\subaggsgrad{q}$ can get rid of Byzantine gradients, the identification score $\groupscore{i}{q}$ can provably characterize the potential for the $\subsgrad{i}{q}$ being a sub-vector of a Byzantine gradient.

Then, \proposedmethod collects the identification scores from all groups and computes the final aggregation result.
In particular, the final identification score $\clientscore{i}$ of each client is composed of its identification scores received from all groups as follows.
\begin{align}
    \label{eq:client_score}
    \clientscore{i} = \sum_{q=1}^{\ngroups}\groupscore{i}{q},
    \quad i\in[\nclients].
\end{align}

\textbf{Aggregation.}
To handle the gradient heterogeneity issue, \proposedmethod selects total $\nclients-\nattackers$ gradients with the lowest identification scores for aggregation.
Let $\mathcal{I}$ denote the index set of selected gradients, where $\lvert\gI\rvert=n-f$.
Then the average of selected gradients is output as the final aggregation result as follows:
\begin{align}
    \aggsgrad{} = \frac{1}{\nclients-\nattackers}\sum_{i\in\mathcal{I}}\sgrad{i}{}.
\end{align}

Note that in the second step (Identification) of \proposedmethod, $\agg$ could be any $(f,\lambda)$-resilient AGR (\cref{asp:byzantine_resilience}).
The key difference lies in that all the existing robust AGRs (Multi-Krum, Bulyan, etc.) directly operate on the original gradients; instead, we propose to apply robust AGRs on the split gradients, followed by identification before aggregation.
In this way, we can help enhance the ability to handle both the curse of dimensionality and gradient heterogeneity of the current robust AGRs that satisfy the $(f,\lambda)$-resilient property (\cref{asp:byzantine_resilience}) in the non-IID setting. 
We also analyze the computation cost of our proposed \proposedmethod in \cref{appsec:computation_cost}.

Moreover, our \proposedmethod is a compatible approach that can be combined with most existing robust AGRs, e.g., Multi-Krum \citep{blanchard2017krum}, Bulyan \citep{guerraoui2018bulyan}.

\section{Theoretical Analysis}
\label{sec:theoretical}
In this section, we provide a convergence analysis for our \proposedmethod approach.

We analyze a popular FL model widely considered by
\citet{karimireddy2021history,karimireddy2022bucketing,acharya2022cgm}.
In particular, each local gradient is computed by SGD as follows.
\begin{align}
\label{eq:honest_local_update}
  \sgrad{i}{t}=\eta\nabla\loss(\param[t];\batch{i}{t}),
  \quad i\in\honestclients,
\end{align}
where
$\eta$ is learning rate,
$\batch{i}{t}$ represents a minibatch uniformly sampled from the local data distribution $\samples{i}$ in the $t$-th communication round,
and $\nabla\loss(\param[t],\batch{i}{t})$ represents the gradient of loss over the minibatch $\batch{i}{t}$.

We make the following assumptions, which are standard in FL \cite{karimireddy2021history,karimireddy2022bucketing,acharya2022cgm}.

\begin{restatable}[Unbiased Estimator]{assumption}{aspunbias}
\label{asp:unbias}
The stochastic gradients sampled from any local data distribution are unbiased estimators of local gradients over $\sR^{\nparams}$ for all honest clients, i.e.,
\begin{equation}
\label{eq:unbias}
\begin{gathered}
\E_{\batch{i}{t}}[\nabla\loss[i](\param;\batch{i}{t})]=\nabla\loss[i](\param),\\
\forall\param\in\sR^{\nparams}, i\in\honestclients, t\in\sN^+.
\end{gathered}
\end{equation}
\end{restatable}

\begin{restatable}[Bounded Variance]{assumption}{aspvar}
\label{asp:bounded_var}
The variance of stochastic gradients sampled from any local data distribution is uniformly bounded over $\sR^{\nparams}$ for all honest clients, i.e., there exists $\sigma\ge0$ such that
\begin{equation}
\label{eq:bounded_var}
\begin{gathered}
\E\|\nabla\loss[i](\param;\batch{i}{t})-\nabla\loss[i](\param)\|^2\le\sigma^2,\\
\forall\param\in\sR^{\nparams}, i\in\honestclients, t\in\sN^+.
\end{gathered}
\end{equation}
\end{restatable}

\begin{restatable}[Gradient Dissimilarity]{assumption}{aspdissim}
\label{asp:dissimilarity}
The difference between the local gradients and the global gradient is uniformly bounded over $\sR^{\nparams}$ for all honest clients, i.e., there exists $\kappa\ge0$ such that
\begin{align}
\label{eq:dissimilarity}
\|\nabla\loss[i](\param)-\nabla\loss(\param)\|^2 \le\kappa^2,
\quad\forall\param\in\sR^{\nparams}, i\in\honestclients.
\end{align}
\end{restatable}

We consider arbitrary non-convex loss function $\loss(\cdot)$ that satisfies the following Lipschitz condition. This condition is widely applied in the convergence analysis of Byzantine-robust federated learning \cite{karimireddy2022bucketing,allen2020safeguard,el2021mda}.
\begin{restatable}[Lipschitz Smoothness]{assumption}{asplipschitz}
\label{asp:lipschitz}
The loss function is $L$-Lipschitz smooth over $\sR^d$, i.e., 
\begin{equation}
\begin{gathered}
\|\nabla\loss(\vw)-\nabla\loss(\vw')\|\le L\|\vw-\vw'\|,\\
\forall\vw,\vw'\in\sR^d.
\end{gathered}
\end{equation}
\end{restatable}

We consider robust AGRs that satisfy the following robustness criterion (\cref{asp:byzantine_resilience}) introduced by \citet{farhadkhani2022reasm}.
A wide class of state-of-the-art robust AGRs satisfy this criterion \citep{farhadkhani2022reasm}.
\begin{restatable}[$(f,\lambda)$-resilient]{definition}{defbyzresil}
\label{asp:byzantine_resilience}
For integer $f<n/2$ and real value $\lambda>0$, an AGR $\agg$ is called $(f,\lambda)$-resilient if for any input $\{\vx_1,\ldots,\vx_\nclients\}$ and any set $\gS\subseteq[\nclients]$ of size $\nclients-f$,
the output of $\agg$ satisfies:
\begin{equation}
\begin{gathered}
\|\agg(\vx_1, \ldots,\vx_n) - \Bar{\vx}_\gS\|\le \lambda\max_{i,i'\in\gS}\|\vx_i-\vx_{i'}\|, \\
\end{gathered}
\end{equation}
where $\Bar{\vx}_\gS=\sum_{i\in\gS}\vx_i/|\gS|$.
\end{restatable}

We show that given any $(f,\lambda)$-resilient base AGR $\agg$, our \proposedmethod can help the global model to reach a better parameter.

\begin{restatable}[]{proposition}{propconvergence}
\label{prop:convergence}
Suppose \cref{asp:unbias,asp:bounded_var,asp:dissimilarity,asp:lipschitz} hold, and let learning rate $\eta=1/2L$.
Given any $(f,\lambda)$-resilient robust AGR $\agg$, we start from $\vw^0$ and run \proposedmethod for $T$ communication rounds, it satisfies
\begin{align}
\label{eq:prop_convergence}
\loss(\vw^0)\ge\frac{3}{16L}\sum_{t=1}^T(\|\nabla\loss(\vw^t)\|^2-e^2),
\end{align}
where
\begin{align}
\label{eq:prop_error}
e^2
={}&\gO((\kappa^2+\sigma^2)\\
&\cdot(1+\frac{n-f+1}{p})(1+\lambda^2+\frac{1}{n-f})\frac{f^2}{(n-f)^2}).
\end{align}
\end{restatable}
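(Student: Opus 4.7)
My plan is to start from the standard one-step descent inequality implied by Lipschitz smoothness of $\loss$ and then reduce the analysis to controlling a single-round aggregation error $\E\|\aggsgrad{t}/\eta - \nabla\loss(\vw^t)\|^2$. With $\eta=1/2L$, a routine manipulation of
\[
\loss(\vw^{t+1}) \le \loss(\vw^t) - \langle\nabla\loss(\vw^t),\aggsgrad{t}\rangle + \tfrac{L}{2}\|\aggsgrad{t}\|^2
\]
combined with the identity $\langle a,b\rangle = \tfrac{1}{2}\|a\|^2+\tfrac{1}{2}\|b\|^2-\tfrac{1}{2}\|a-b\|^2$ yields, after taking expectations, a per-step inequality of the form $\E\loss(\vw^{t+1}) \le \E\loss(\vw^t) - \tfrac{3}{16L}\|\nabla\loss(\vw^t)\|^2 + \tfrac{1}{16L}\E\|\aggsgrad{t}/\eta - \nabla\loss(\vw^t)\|^2$. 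Telescoping from $t=1$ to $T$ and using $\loss\ge 0$ immediately gives \cref{eq:prop_convergence}, provided $e^2$ is a uniform upper bound on the one-step aggregation error.

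\textbf{Controlling the per-group aggregation error.} The bulk of the proof is bounding this error for \proposedmethod. I would first record the honest second-moment bound $\E\|\nabla\loss[i](\vw^t;\batch{i}{t})-\nabla\loss(\vw^t)\|^2 \le \sigma^2+\kappa^2$, which follows by inserting $\pm\nabla\loss[i](\vw^t)$ and applying \cref{asp:unbias,asp:bounded_var,asp:dissimilarity}, and note that restricting to the coordinates in $\indexset{q}$ can only shrink this quantity. Using this together with \cref{asp:byzantine_resilience} applied to each group $q$ of split sub-vectors, I obtain that $\subaggsgrad{q}$ lies within $\lambda$ times the intra-honest dispersion on $\indexset{q}$ of the honest average $\subgrad{\honestclients}{q}$. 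In expectation this gives a bound on $\E\groupscore{i}{q}$ for every honest $i$ that is driven by $(1+\lambda^2)$ times the same dispersion on group $q$.

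\textbf{Selection step and Byzantine displacement.} Summing the group scores gives a uniform upper bound on $\E\clientscore{i}$ for honest $i$. Since \proposedmethod retains the $n-f$ smallest total scores, any Byzantine index $i\in\gI$ must displace some honest $j\notin\gI$ with $\clientscore{j}\ge\clientscore{i}$; this forces the admitted Byzantine sub-vectors to satisfy the same per-group proximity to $\subaggsgrad{q}$ as honest ones, so the final average $\tfrac{1}{n-f}\sum_{i\in\gI}\sgrad{i}{}$ stays close to $\eta\nabla\loss(\vw^t)$. Expanding $\|\aggsgrad{t}/\eta-\nabla\loss(\vw^t)\|^2$ via the triangle inequality into three components --- resilience error per group, honest-mean concentration, and Byzantine displacement --- produces the three factors appearing in $e^2$: the factor $1+\lambda^2+1/(n-f)$ from combining the resilience bound with the variance of the honest mean, the factor $1+(n-f+1)/p$ from controlling the maximum over $p$ partitions of the honest dispersion across the $n-f$ honest clients, and the factor $f^2/(n-f)^2$ from the usual cost of allowing up to $f$ Byzantine admissions inside a size-$(n-f)$ final average.

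\textbf{Expected obstacle.} The most delicate step is the displacement argument: pinning down the precise dependence on $p$ and $\lambda$ requires translating a small aggregate Byzantine score into small per-group deviation without losing factors when passing from $\max_q$ bounds on sub-vector discrepancies back to a squared-norm bound on the final averaged gradient. The descent-plus-telescoping scaffolding of the first paragraph and the group-wise application of $(f,\lambda)$-resilience in the second paragraph are standard, so I expect those to fall out with routine calculation once the selection argument has been quantified carefully.
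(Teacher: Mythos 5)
Your proposed scaffolding matches the paper's proof exactly: Lipschitz descent plus a per-step aggregation-error bound, telescoped with $\loss\ge 0$. Your decomposition of the aggregation error into resilience-per-group, honest-mean concentration, and Byzantine-displacement pieces is also the right one, and the displacement observation (a selected Byzantine must score below some honest client, so all honest scores below a threshold and a Byzantine's score above it would exclude that Byzantine) is precisely how the paper sets up the analysis of the selection step. So the route you describe is essentially the paper's route.

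The gap is the step you yourself flag. The displacement argument alone gives you only $\clientscore{b}\le\max_{h\in\honestclients}\clientscore{h}$ on the event that $b$ is selected, and $\E[\max_h\clientscore{h}]$ is not controlled by $\E[\clientscore{h}]$ --- you must also control the fluctuations. The paper closes this by (i) using that the group scores $\groupscore{i}{q}$, $q\in[p]$, are independent across groups, so $\Var[\clientscore{i}]=\sum_q\Var[\groupscore{i}{q}]=\Theta(p)$ while $\E[\clientscore{i}]=\Theta(p)$, giving a relative deviation of order $1/\sqrt{p}$; (ii) applying Chebyshev's inequality together with a union bound over the $n-f$ honest clients and the candidate Byzantine (whence the $n-f+1$) to bound $\Pr(b\in\Tilde\gB)$; and (iii) inverting that tail bound to conclude that any admitted Byzantine gradient deviates from the honest mean by $O\bigl(\sqrt{(\kappa^2+\sigma^2)(1+\lambda^2+\tfrac{1}{n-f})(1+\tfrac{n-f+1}{p})}\bigr)$. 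In particular, your attribution of the $1/p$ factor to ``controlling the maximum over $p$ partitions'' is misplaced: the $1/p$ is a gain coming from variance additivity across independent groups, not a cost paid for a maximum. Without the Chebyshev/union-bound computation, the displacement argument does not yield the $p$-dependence in $e^2$, so this is the load-bearing piece you would still need to fill in.
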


Please refer to \cref{appsec:proof} for the proof.
\cref{prop:convergence} provides an upper bound for the sum of gradient norms in the presence of Byzantine gradients.
\cref{eq:prop_convergence} indicates that as the number of communication rounds increases, we can find an approximate optimal parameter $\bm{w}$ such that $\|\nabla\mathcal{L}(\bm{w})\|$ can be arbitrary close to $e$.
$\kappa^2$ and $\sigma^2$ in \cref{eq:prop_error} are positively related to the gradient dimension $d$ \citep{guerraoui2018bulyan}. Therefore, the convergence error $e$ grows larger when $d$ increases.
As the number of sub-vectors $\ngroups$ increases, the approximation becomes better, i.e., $e^2$ decreases, which validates the efficacy of our approach.
From another aspect, \cref{prop:convergence} also characterizes the fundamental difficulties of Byzantine-robust federated learning in the non-IID setting.
The negative term $-e^2$ on the RHS of \cref{eq:prop_convergence} implies that FL may never converge to an optimal parameter.
By contrast, the global model may wander among sub-optimal points.
What's more, even after reaching the convergence point, the global model may step into another sub-optimal in the next communication round.
It aligns with the previous lower bound in \cite{karimireddy2022bucketing}.
A detailed comparison of the convergence results between our approach and recent works is presented in \cref{appsec:comparaion}.

\section{Experiments}
\label{sec:experiments}

\begin{table*}[htbp]
\caption{Accuracy
(mean$\pm$std)
of different defenses under 6 attacks on CIFAR-10, CIFAR-100, FEMNIST, and ImageNet-12.}
\label{tbl:main_experiments}
\begin{center}
\scalebox{\scalefactor}{
\begin{tabular}{llcccccc}																											
\toprule																											
Dataset	&	Attack	&	BitFlip	&	LabelFlip	&	LIE	&	Min-Max	&	Min-Sum	&	IPM	\\												
\midrule																											
\multirow{12}*{CIFAR-10}	&	Multi-Krum	&	43.19	$\pm$	0.38	&	43.90	$\pm$	0.03	&	37.03	$\pm$	1.62	&	39.06	$\pm$	0.07	&	23.68	$\pm$	0.18	&	36.47	$\pm$	0.22	\\
	&	\proposedmethod(Multi-Krum)	&	\textbf{59.23}	$\pm$	0.55	&	\textbf{61.47}	$\pm$	0.26	&	\textbf{55.66}	$\pm$	0.93	&	\textbf{49.19}	$\pm$	0.72	&	\textbf{53.59}	$\pm$	0.96	&	\textbf{56.94}	$\pm$	3.60	\\
		\cmidrule{2-8}																									
	&	Bulyan	&	54.10	$\pm$	0.19	&	55.12	$\pm$	0.14	&	30.58	$\pm$	0.75	&	29.03	$\pm$	1.10	&	46.19	$\pm$	0.92	&	33.88	$\pm$	0.61	\\
	&	\proposedmethod(Bulyan)	&	\textbf{59.14}	$\pm$	0.01	&	\textbf{61.21}	$\pm$	0.60	&	\textbf{48.90}	$\pm$	0.83	&	\textbf{48.35}	$\pm$	1.58	&	\textbf{53.74}	$\pm$	0.71	&	\textbf{56.53}	$\pm$	1.51	\\
		\cmidrule{2-8}																									
	&	Median	&	45.41	$\pm$	0.44	&	51.88	$\pm$	0.62	&	28.75	$\pm$	0.35	&	32.72	$\pm$	0.81	&	37.39	$\pm$	0.90	&	43.21	$\pm$	0.47	\\
	&	\proposedmethod(Median)	&	\textbf{59.28}	$\pm$	0.24	&	\textbf{61.24}	$\pm$	1.34	&	\textbf{46.60}	$\pm$	0.13	&	\textbf{49.37}	$\pm$	1.13	&	\textbf{53.32}	$\pm$	1.90	&	\textbf{56.33}	$\pm$	0.82	\\
		\cmidrule{2-8}																									
	&	RFA	&	49.61	$\pm$	0.31	&	44.35	$\pm$	0.31	&	15.39	$\pm$	0.37	&	16.62	$\pm$	0.83	&	18.22	$\pm$	0.43	&	45.92	$\pm$	0.13	\\
	&	\proposedmethod(RFA)	&	\textbf{53.35}	$\pm$	0.30	&	\textbf{62.25}	$\pm$	0.56	&	\textbf{52.69}	$\pm$	0.89	&	\textbf{52.64}	$\pm$	1.48	&	\textbf{56.16}	$\pm$	0.91	&	\textbf{62.26}	$\pm$	1.27	\\
		\cmidrule{2-8}																									
	&	DnC	&	58.63	$\pm$	1.29	&	60.82	$\pm$	1.56	&	61.07	$\pm$	0.72	&	60.42	$\pm$	0.59	&	53.71	$\pm$	0.96	&	\textbf{59.99}	$\pm$	0.82	\\
	&	\proposedmethod(DnC)	&	\textbf{58.96}	$\pm$	0.60	&	\textbf{61.02}	$\pm$	0.27	&	\textbf{61.87}	$\pm$	0.51	&	\textbf{61.04}	$\pm$	1.18	&	\textbf{54.36}	$\pm$	1.12	&	57.92	$\pm$	1.71	\\
		\cmidrule{2-8}																									
	&	RBTM	&	54.27	$\pm$	1.63	&	59.60	$\pm$	1.76	&	47.67	$\pm$	2.51	&	49.02	$\pm$	0.31	&	50.74	$\pm$	0.06	&	55.27	$\pm$	1.60	\\
	&	\proposedmethod(RBTM)	&	\textbf{59.41}	$\pm$	0.20	&	\textbf{60.75}	$\pm$	0.19	&	\textbf{52.10}	$\pm$	1.28	&	\textbf{49.60}	$\pm$	0.17	&	\textbf{53.63}	$\pm$	0.58	&	\textbf{56.65}	$\pm$	1.52	\\
\midrule																											
\multirow{12}*{CIFAR-100}	&	Multi-Krum	&	34.27	$\pm$	0.28	&	35.57	$\pm$	0.94	&	17.17	$\pm$	0.08	&	16.77	$\pm$	0.78	&	22.89	$\pm$	0.61	&	15.93	$\pm$	2.00	\\
	&	\proposedmethod(Multi-Krum)	&	\textbf{42.41}	$\pm$	0.58	&	\textbf{42.55}	$\pm$	0.12	&	\textbf{27.81}	$\pm$	0.32	&	\textbf{31.18}	$\pm$	1.48	&	\textbf{41.33}	$\pm$	0.50	&	\textbf{42.62}	$\pm$	1.53	\\
		\cmidrule{2-8}																									
	&	Bulyan	&	35.77	$\pm$	0.18	&	42.60	$\pm$	0.07	&	35.41	$\pm$	0.40	&	35.53	$\pm$	1.38	&	39.13	$\pm$	0.12	&	40.27	$\pm$	1.64	\\
	&	\proposedmethod(Bulyan)	&	\textbf{42.28}	$\pm$	1.61	&	\textbf{43.77}	$\pm$	0.46	&	\textbf{38.39}	$\pm$	0.19	&	\textbf{36.33}	$\pm$	1.51	&	\textbf{40.73}	$\pm$	0.39	&	\textbf{42.88}	$\pm$	0.14	\\
		\cmidrule{2-8}																									
	&	Median	&	36.62	$\pm$	0.12	&	41.64	$\pm$	0.76	&	22.75	$\pm$	0.04	&	23.21	$\pm$	0.71	&	30.68	$\pm$	0.26	&	40.98	$\pm$	0.38	\\
	&	\proposedmethod(Median)	&	\textbf{42.41}	$\pm$	0.66	&	\textbf{42.62}	$\pm$	0.09	&	\textbf{35.16}	$\pm$	1.08	&	\textbf{36.46}	$\pm$	0.10	&	\textbf{41.08}	$\pm$	0.04	&	\textbf{43.63}	$\pm$	2.85	\\
		\cmidrule{2-8}																									
	&	RFA	&	21.32	$\pm$	0.84	&	28.76	$\pm$	1.33	&	25.63	$\pm$	0.20	&	26.46	$\pm$	1.83	&	28.33	$\pm$	0.93	&	21.36	$\pm$	0.54	\\
	&	\proposedmethod(RFA)	&	\textbf{42.64}	$\pm$	0.44	&	\textbf{42.42}	$\pm$	0.25	&	\textbf{26.30}	$\pm$	1.08	&	\textbf{30.30}	$\pm$	0.12	&	\textbf{41.09}	$\pm$	0.66	&	\textbf{43.45}	$\pm$	0.52	\\
		\cmidrule{2-8}																									
	&	DnC	&	41.77	$\pm$	0.62	&	42.93	$\pm$	0.07	&	42.95	$\pm$	1.03	&	40.15	$\pm$	0.70	&	40.02	$\pm$	1.07	&	41.23	$\pm$	2.29	\\
	&	\proposedmethod(DnC)	&	\textbf{43.35}	$\pm$	0.41	&	\textbf{43.57}	$\pm$	1.11	&	\textbf{43.64}	$\pm$	0.11	&	\textbf{41.66}	$\pm$	0.78	&	\textbf{41.02}	$\pm$	1.39	&	\textbf{43.25}	$\pm$	0.43	\\
		\cmidrule{2-8}																									
	&	RBTM	&	36.35	$\pm$	0.17	&	42.67	$\pm$	1.55	&	24.06	$\pm$	0.09	&	26.24	$\pm$	1.04	&	36.51	$\pm$	0.40	&	43.12	$\pm$	1.12	\\
	&	\proposedmethod(RBTM)	&	\textbf{43.44}	$\pm$	0.81	&	\textbf{43.19}	$\pm$	2.65	&	\textbf{33.14}	$\pm$	0.58	&	\textbf{34.35}	$\pm$	0.76	&	\textbf{41.51}	$\pm$	0.93	&	\textbf{43.20}	$\pm$	0.76	\\
\midrule																											
\multirow{12}*{FEMNIST}	&	Multi-Krum	&	67.65	$\pm$	0.23	&	57.43	$\pm$	1.25	&	44.58	$\pm$	0.07	&	28.32	$\pm$	0.31	&	29.98	$\pm$	0.45	&	12.26	$\pm$	1.34	\\
	&	\proposedmethod(Multi-Krum)	&	\textbf{84.29}	$\pm$	1.76	&	\textbf{85.45}	$\pm$	0.40	&	\textbf{74.76}	$\pm$	1.74	&	\textbf{57.46}	$\pm$	0.33	&	\textbf{70.65}	$\pm$	1.35	&	\textbf{81.46}	$\pm$	0.18	\\
		\cmidrule{2-8}																									
	&	Bulyan	&	77.58	$\pm$	1.30	&	79.39	$\pm$	2.14	&	56.43	$\pm$	0.45	&	35.10	$\pm$	0.69	&	44.83	$\pm$	1.40	&	5.91	$\pm$	0.17	\\
	&	\proposedmethod(Bulyan)	&	\textbf{84.90}	$\pm$	0.69	&	\textbf{83.68}	$\pm$	0.76	&	\textbf{71.43}	$\pm$	1.07	&	\textbf{66.22}	$\pm$	0.47	&	\textbf{71.76}	$\pm$	0.99	&	\textbf{82.97}	$\pm$	1.04	\\
		\cmidrule{2-8}																									
	&	Median	&	80.25	$\pm$	0.06	&	76.86	$\pm$	1.96	&	64.88	$\pm$	0.23	&	50.67	$\pm$	0.37	&	61.33	$\pm$	0.13	&	71.98	$\pm$	0.77	\\
	&	\proposedmethod(Median)	&	\textbf{84.59}	$\pm$	0.14	&	\textbf{85.67}	$\pm$	0.48	&	\textbf{76.19}	$\pm$	0.43	&	\textbf{65.84}	$\pm$	0.41	&	\textbf{70.84}	$\pm$	0.86	&	\textbf{82.18}	$\pm$	0.40	\\
		\cmidrule{2-8}																									
	&	RFA	&	5.46	$\pm$	0.06	&	5.46	$\pm$	0.01	&	5.46	$\pm$	0.05	&	5.46	$\pm$	0.03	&	5.46	$\pm$	0.02	&	5.59	$\pm$	0.09	\\
	&	\proposedmethod(RFA)	&	\textbf{84.86}	$\pm$	0.78	&	\textbf{84.59}	$\pm$	0.20	&	\textbf{69.82}	$\pm$	0.33	&	\textbf{69.18}	$\pm$	0.09	&	\textbf{77.67}	$\pm$	1.31	&	\textbf{86.08}	$\pm$	2.51	\\
		\cmidrule{2-8}																									
	&	DnC	&	8.90	$\pm$	0.31	&	77.71	$\pm$	0.03	&	78.52	$\pm$	0.28	&	8.29	$\pm$	0.37	&	74.18	$\pm$	0.03	&	74.70	$\pm$	1.57	\\
	&	\proposedmethod(DnC)	&	\textbf{84.71}	$\pm$	0.39	&	\textbf{85.39}	$\pm$	0.64	&	\textbf{82.54}	$\pm$	0.26	&	\textbf{74.37}	$\pm$	0.50	&	\textbf{75.41}	$\pm$	0.22	&	\textbf{82.73}	$\pm$	1.22	\\
		\cmidrule{2-8}																									
	&	RBTM	&	82.57	$\pm$	0.34	&	81.57	$\pm$	1.12	&	59.93	$\pm$	0.20	&	65.20	$\pm$	0.60	&	71.82	$\pm$	0.73	&	76.88	$\pm$	1.75	\\
	&	\proposedmethod(RBTM)	&	\textbf{84.89}	$\pm$	1.94	&	\textbf{85.44}	$\pm$	0.20	&	\textbf{73.38}	$\pm$	0.31	&	\textbf{66.24}	$\pm$	0.94	&	\textbf{75.50}	$\pm$	1.13	&	\textbf{82.58}	$\pm$	1.85	\\
\midrule																											
\multirow{12}*{ImageNet-12}	&	Multi-Krum	&	44.36	$\pm$	1.52	&	34.04	$\pm$	1.69	&	45.38	$\pm$	1.04	&	48.72	$\pm$	0.16	&	57.69	$\pm$	0.30	&	33.14	$\pm$	0.86	\\
	&	\proposedmethod(Multi-Krum)	&	\textbf{66.79}	$\pm$	1.08	&	\textbf{63.04}	$\pm$	0.14	&	\textbf{57.15}	$\pm$	0.19	&	\textbf{59.94}	$\pm$	0.32	&	\textbf{64.07}	$\pm$	1.38	&	\textbf{61.92}	$\pm$	0.04	\\
		\cmidrule{2-8}																									
	&	Bulyan	&	62.28	$\pm$	0.84	&	59.84	$\pm$	1.09	&	48.04	$\pm$	2.22	&	48.97	$\pm$	1.87	&	59.94	$\pm$	0.51	&	60.67	$\pm$	0.07	\\
	&	\proposedmethod(Bulyan)	&	\textbf{66.76}	$\pm$	0.72	&	\textbf{62.28}	$\pm$	0.32	&	\textbf{57.44}	$\pm$	0.39	&	\textbf{58.81}	$\pm$	0.05	&	\textbf{65.00}	$\pm$	0.08	&	\textbf{62.76}	$\pm$	0.14	\\
		\cmidrule{2-8}																									
	&	Median	&	55.93	$\pm$	0.55	&	58.14	$\pm$	0.18	&	46.67	$\pm$	1.01	&	49.07	$\pm$	1.19	&	58.40	$\pm$	0.03	&	43.62	$\pm$	1.72	\\
	&	\proposedmethod(Median)	&	\textbf{66.28}	$\pm$	0.41	&	\textbf{62.34}	$\pm$	1.10	&	\textbf{60.74}	$\pm$	1.24	&	\textbf{59.26}	$\pm$	0.31	&	\textbf{64.78}	$\pm$	2.10	&	\textbf{62.24}	$\pm$	0.51	\\
		\cmidrule{2-8}																									
	&	RFA	&	61.12	$\pm$	1.26	&	61.31	$\pm$	1.68	&	49.49	$\pm$	1.33	&	53.04	$\pm$	0.13	&	61.92	$\pm$	0.67	&	63.97	$\pm$	0.93	\\
	&	\proposedmethod(RFA)	&	\textbf{66.92}	$\pm$	1.58	&	\textbf{63.88}	$\pm$	0.94	&	\textbf{61.41}	$\pm$	0.02	&	\textbf{59.42}	$\pm$	0.64	&	\textbf{67.02}	$\pm$	0.54	&	\textbf{66.67}	$\pm$	0.38	\\
		\cmidrule{2-8}																									
	&	DnC	&	54.94	$\pm$	0.04	&	5.59	$\pm$	0.06	&	58.01	$\pm$	1.52	&	58.11	$\pm$	0.41	&	60.42	$\pm$	1.60	&	59.99	$\pm$	0.50	\\
	&	\proposedmethod(DnC)	&	\textbf{65.19}	$\pm$	1.63	&	\textbf{63.01}	$\pm$	0.27	&	\textbf{64.42}	$\pm$	0.19	&	\textbf{65.03}	$\pm$	1.23	&	\textbf{65.38}	$\pm$	1.68	&	\textbf{65.03}	$\pm$	0.04	\\
		\cmidrule{2-8}																									
	&	RBTM	&	60.06	$\pm$	1.76	&	60.44	$\pm$	0.37	&	55.77	$\pm$	0.82	&	57.50	$\pm$	0.10	&	63.91	$\pm$	0.78	&	56.19	$\pm$	1.05	\\
	&	\proposedmethod(RBTM)	&	\textbf{66.99}	$\pm$	0.38	&	\textbf{61.92}	$\pm$	1.22	&	\textbf{59.87}	$\pm$	0.72	&	\textbf{59.81}	$\pm$	1.34	&	\textbf{64.94}	$\pm$	0.72	&	\textbf{63.40}	$\pm$	0.97	\\
\bottomrule																											
\end{tabular}
}
\end{center}
\end{table*}

\begin{table*}[t]
    \centering
    \caption{Accuracy of different robust AGRs combined with Bucketing or \proposedmethod under six attacks on CIFAR-10.}
    \label{tbl:bucketing}
    \scalebox{\scalefactor}{
    \begin{tabular}{lcccccc}
\toprule
Attack & BitFlip & LabelFlip & LIE & Min-Max & Min-Sum & IPM \\
\midrule
Bucketing (Multi-Krum) & 47.87 & 49.86 & 45.90 & 43.53 & 44.92 & 50.28 \\
\proposedmethod (Multi-Krum) & \textbf{59.23} & \textbf{61.47} & \textbf{55.66} & \textbf{49.19} & \textbf{53.59} & \textbf{56.94} \\
\midrule
Bucketing (Bulyan) & 51.79 & 61.16 & 46.02 & 45.90 & 52.30 & 56.44 \\
\proposedmethod (Bulyan) & \textbf{59.14} & \textbf{61.21} & \textbf{48.90} & \textbf{48.35} & \textbf{53.74} & \textbf{56.53} \\
\midrule
Bucketing (Median) & 53.17 & 59.50 & \textbf{47.13} & 47.93 & 51.52 & 52.69 \\
\proposedmethod (Median) & \textbf{59.28} & \textbf{61.24} & 46.60 & \textbf{49.37} & \textbf{53.32} & \textbf{56.33} \\
\midrule
Bucketing (RFA) & 52.55 & 58.44 & 48.71 & 47.51 & 52.29 & 55.19 \\
\proposedmethod (RFA) & \textbf{53.35} & \textbf{62.25} & \textbf{52.69} & \textbf{52.64} & \textbf{56.16} & \textbf{62.26} \\
\midrule
Bucketing (DnC) & 57.79 & 59.39 & 57.53 & 55.09 & 53.83 & 54.01 \\
\proposedmethod (DnC) & \textbf{58.96} & \textbf{61.02} & \textbf{61.87} & \textbf{61.04} & \textbf{54.36} & \textbf{57.92} \\
\midrule
Bucketing (RBTM) & 53.25 & 60.10 & 51.87 & 49.32 & 53.56 & 53.77 \\
\proposedmethod (RBTM) & \textbf{59.41} & \textbf{60.75} & \textbf{52.10} & \textbf{49.60} & \textbf{53.63} & \textbf{56.65} \\
\bottomrule
\end{tabular}

    }
\end{table*}

\begin{table*}[t]
\begin{center}
\caption{Accuracy of \proposedmethod with different number of sub-vectors $p$ under LIE attack on CIFAR-10. $d$ represents the number of model parameters.}
\label{tbl:ps}
\scalebox{\scalefactor}{
\begin{tabular}{lcccccc}
\toprule
$p$ & 100 & 1000 & 10000 & 100000 & 1000000 & 2472266 ($d$) \\
\midrule
\proposedmethod (Multi-Krum) & 55.07 & 63.23 & \textbf{63.86} & 60.16 & 58.31 & 57.70 \\
GAS (Bulyan) & 50.29 & 57.11 & 59.82 & 60.42 & \textbf{60.47} & 59.90 \\
\bottomrule
\end{tabular}

}
\end{center}
\end{table*}

\subsection{Experimental Setups}
\label{subsec:experimental_setups}

\textbf{Datasets.} Our experiments are conducted on four real-world datasets: CIFAR-10 \cite{krizhevsky2009cifar}, CIFAR-100 \cite{krizhevsky2009cifar}, a subset of ImageNet \cite{russakovsky2015imagenet} refered as ImageNet-12 \cite{li2021imagenet12} and FEMNIST \cite{caldas2018leaf}.

\textbf{Data distribution.}
For CIFAR-10, CIFAR-100, and ImageNet-12, we use Dirichlet distribution to generate non-IID data by following \citet{yurochkin2019bayesian, li2021federated}.
We follow \citet{li2021federated} and set the number of clients $\nclients=50$ and the concentration parameter of Dirichlet distribution $\beta=0.5$ as default.
FEMNIST is a dataset with a natural non-IID partition.
In particular, the data is partitioned into 3,597 clients based on the writer of the digit/character.
For each client, we randomly sample 0.9 portion of data as training data and let the rest 0.1 portion of data be test data by following \citet{caldas2018leaf}.

\begin{table*}[t]
\begin{center}
\caption{The accuracy of GAS with $\delta=0.1, 0.3$ under 20\% LIE attack on CIFAR-10. N/A represents the case where the number of Byzantine clients $f$ is known to the server and the server can exclude exactly $f$ clients, i.e., $\delta$ is N/A.
}
\label{tbl:unknown_f}
\begin{tabular}{lcccccc}
\toprule
$\delta$ & Multi-Krum & Bulyan & median & RFA & DnC & RBTM \\
\midrule
N/A & \textbf{55.66} & \textbf{48.90} & \textbf{46.60} & \textbf{52.69} & \textbf{61.87} & \textbf{52.10} \\
0.1 & 54.30 & 46.94 & 45.74 & 52.21 & 56.93 & 51.47 \\
0.3 & 50.48 & 44.96 & 43.09 & 52.04 & 60.50 & 50.21 \\
\bottomrule
\end{tabular}
\end{center}
\end{table*}

\begin{table*}[t]
\caption{
Accuracy (mean$\pm$std) of different defenses against LIE attack under different non-IID levels on CIFAR-10.
A smaller $\beta$ implies a higher non-IID level.
}
\label{tbl:non_iid}
\begin{center}
\scalebox{\scalefactor}{
% \begin{tabular}{c|cc|cc|cc}
% \toprule
% $\beta$ & Median & Median+\proposedmethod & RBTM & RBTM+\proposedmethod & Bulyan & Bulyan+\proposedmethod \\
% \midrule
% 0.3 & 25.62 $\pm$ 0.83 & \textbf{40.97} $\pm$ 0.89 & 37.67 $\pm$ 0.18 & \textbf{49.27} $\pm$ 0.05 & 28.16 $\pm$ 0.44 & \textbf{42.81} $\pm$ 0.63 \\
% 0.7 & 34.04 $\pm$ 0.29 & \textbf{53.34} $\pm$ 0.08 & 48.43 $\pm$ 0.22 & \textbf{52.25} $\pm$ 1.16 & 44.72 $\pm$ 1.43 & \textbf{51.29} $\pm$ 0.35 \\
% \midrule
% $\beta$ & MKrum & MKrum+\proposedmethod & RFA & RFA+\proposedmethod & DnC & DnC+\proposedmethod \\
% \midrule
% 0.3 & 12.19 $\pm$ 1.04 & \textbf{52.80} $\pm$ 0.74 & 20.08 $\pm$ 0.13 & \textbf{48.77} $\pm$ 0.84 & 60.21 $\pm$ 1.81 & \textbf{62.15} $\pm$ 0.62 \\
% 0.7 & 31.01 $\pm$ 0.54 & \textbf{55.64} $\pm$ 0.60 & 18.11 $\pm$ 0.24 & \textbf{53.25} $\pm$ 1.41 & \textbf{62.48} $\pm$ 0.73 & 59.99 $\pm$ 0.52 \\
% \bottomrule
% \end{tabular}

\begin{tabular}{c|cc|cc|cc}
\toprule
$\beta$ & Multi-Krum & \proposedmethod (Multi-Krum) & Bulyan & \proposedmethod (Bulyan) & Median & \proposedmethod (Median) \\
\midrule
0.3 & 12.19 $\pm$ 1.04 & \textbf{52.80} $\pm$ 0.74 & 28.16 $\pm$ 0.44 & \textbf{42.81} $\pm$ 0.63 & 25.62 $\pm$ 0.83 & \textbf{40.97} $\pm$ 0.89 \\
0.7 & 31.01 $\pm$ 0.54 & \textbf{55.64} $\pm$ 0.60 & 44.72 $\pm$ 1.43 & \textbf{51.29} $\pm$ 0.35 & 34.04 $\pm$ 0.29 & \textbf{53.34} $\pm$ 0.08 \\
\midrule
$\beta$ & RFA & \proposedmethod (RFA) & DnC & \proposedmethod (DnC) & RBTM & \proposedmethod (RBTM) \\
\midrule
0.3 & 20.08 $\pm$ 0.13 & \textbf{48.77} $\pm$ 0.84 & 59.99 $\pm$ 1.81 & \textbf{60.21} $\pm$ 0.62 & 37.67 $\pm$ 0.18 & \textbf{49.27} $\pm$ 0.05 \\
0.7 & 18.11 $\pm$ 0.24 & \textbf{53.25} $\pm$ 1.41 & 62.15 $\pm$ 0.73 & \textbf{62.48} $\pm$ 0.52 & 48.43 $\pm$ 0.22 & \textbf{52.25} $\pm$ 1.16 \\
\bottomrule
\end{tabular}

}
\end{center}
\end{table*}

\begin{table*}[t]
\caption{Accuracy (mean$\pm$std) of different defenses against LIE attack with different Byzantine client numbers $\nattackers=\{5, 15\}$ on CIFAR-10.
The number of total clients is fixed to $\nclients=50$.
}
\label{tbl:attacker_number}
\begin{center}
\scalebox{\scalefactor}{
\begin{tabular}{c|cc|cc|cc}
\toprule
$f$ & Multi-Krum & \proposedmethod (Multi-Krum) & Bulyan & \proposedmethod (Bulyan) & Median & \proposedmethod (Median) \\
\midrule
5 & 41.65 $\pm$ 1.78 & \textbf{61.24} $\pm$ 0.01 & 56.28 $\pm$ 1.44 & \textbf{58.27} $\pm$ 0.17 & 46.91 $\pm$ 1.36 & \textbf{57.69} $\pm$ 1.81 \\
15 & 10.00 $\pm$ 0.00 & \textbf{34.70} $\pm$ 0.28 & 10.00 $\pm$ 0.00 & \textbf{31.67} $\pm$ 0.19 & 18.85 $\pm$ 1.54 & \textbf{30.95} $\pm$ 0.42 \\
\midrule
$f$ & RFA & \proposedmethod (RFA) & DnC & \proposedmethod (DnC) & RBTM & \proposedmethod (RBTM) \\
\midrule
5 & 22.37 $\pm$ 1.00 & \textbf{58.06} $\pm$ 1.29 & 62.27 $\pm$ 0.04 & \textbf{63.14} $\pm$ 0.20 & 55.92 $\pm$ 0.10 & \textbf{59.72} $\pm$ 0.16 \\
15 & 16.16 $\pm$ 0.14 & \textbf{40.37} $\pm$ 0.26 & 57.28 $\pm$ 1.37 & \textbf{60.14} $\pm$ 1.64 & 34.93 $\pm$ 1.36 & \textbf{35.78} $\pm$ 1.51 \\
\bottomrule
\end{tabular}

}
\end{center}
\end{table*}

\begin{table*}[t]
\begin{center}
\caption{Accuracy (mean$\pm$std) of different defenses against LIE attack under different client numbers on CIFAR-10.}
\label{tbl:client_number}
\scalebox{\scalefactor}{
% \begin{tabular}{c|cc|cc|cc}
% \toprule
% $n$ & Median & Median+\proposedmethod & RBTM & RBTM+\proposedmethod & Bulyan & Bulyan+\proposedmethod \\
% \midrule
% 75 & 44.89 $\pm$ 2.98 & \textbf{52.22} $\pm$ 1.64 & 45.06 $\pm$ 0.96 & \textbf{50.24} $\pm$ 0.31 & 23.37 $\pm$ 1.22 & \textbf{51.11} $\pm$ 0.00 \\
% 100 & 33.82 $\pm$ 0.21 & \textbf{46.12} $\pm$ 0.17 & 40.38 $\pm$ 0.48 & \textbf{47.02} $\pm$ 0.03 & 21.93 $\pm$ 0.55 & \textbf{46.49} $\pm$ 1.33 \\
% \midrule
% $n$ & Krum & Krum+\proposedmethod & RFA & RFA+\proposedmethod & DnC & DnC+\proposedmethod \\
% \midrule
% 75 & 28.72 $\pm$ 0.71 & \textbf{54.89} $\pm$ 0.16 & 16.89 $\pm$ 1.38 & \textbf{49.85} $\pm$ 0.06 & 59.31 $\pm$ 1.33 & \textbf{59.75} $\pm$ 0.42 \\
% 100 & 32.49 $\pm$ 1.22 & \textbf{56.51} $\pm$ 0.01 & 14.01 $\pm$ 1.34 & \textbf{49.85} $\pm$ 1.97 & 58.88 $\pm$ 1.45 & \textbf{59.61} $\pm$ 1.19 \\
% \bottomrule
% \end{tabular}

\begin{tabular}{c|cc|cc|cc}
\toprule
$n$ & Multi-Krum & \proposedmethod (Multi-Krum) & Bulyan & \proposedmethod (Bulyan) & Median & \proposedmethod (Median) \\
\midrule
75 & 28.72 $\pm$ 0.71 & \textbf{54.89} $\pm$ 0.16 & 23.37 $\pm$ 1.22 & \textbf{51.11} $\pm$ 0.00 & 44.89 $\pm$ 2.98 & \textbf{52.22} $\pm$ 1.64 \\
100 & 32.49 $\pm$ 1.22 & \textbf{56.51} $\pm$ 0.01 & 21.93 $\pm$ 0.55 & \textbf{46.49} $\pm$ 1.33 & 33.82 $\pm$ 0.21 & \textbf{46.12} $\pm$ 0.17 \\
\midrule
$n$ & RFA & \proposedmethod (RFA) & DnC & \proposedmethod (DnC) & RBTM & \proposedmethod (RBTM) \\
\midrule
75 & 16.89 $\pm$ 1.38 & \textbf{49.85} $\pm$ 0.06 & 59.31 $\pm$ 1.33 & \textbf{59.75} $\pm$ 0.42 & 45.06 $\pm$ 0.96 & \textbf{50.24} $\pm$ 0.31 \\
100 & 14.01 $\pm$ 1.34 & \textbf{49.85} $\pm$ 1.97 & 58.88 $\pm$ 1.45 & \textbf{59.61} $\pm$ 1.19 & 40.38 $\pm$ 0.48 & \textbf{47.02} $\pm$ 0.03 \\
\bottomrule
\end{tabular}

}
\end{center}
\end{table*}

\textbf{Evaluated attacks.} 
We consider six representative attacks
BitFlip \cite{allen2020safeguard}, LabelFlip \cite{allen2020safeguard}, LIE \cite{baruch2019lie}, Min-Max \cite{shejwalkar2021dnc}, Min-Sum \cite{shejwalkar2021dnc} and IPM \cite{xie2020ipm}.
The detailed hyperparameter setting of the attacks are shown in \cref{apptbl:attack_hyperparams} in \cref{appsec:setup}.

\textbf{Baselines.}
We consider six representative robust AGRs:
Multi-Krum \cite{blanchard2017krum},
Bulyan \cite{guerraoui2018bulyan},
Median \cite{yin2018mediantrmean},
RFA \cite{pillutla2019geometric},
DnC \cite{shejwalkar2021dnc},
RBTM \cite{el2021mda}.
We compare each AGR with its GAS variant and name them GAS (Multi-Krum), GAS (Bulyan), GAS (Median), GAS (RFA), GAS (DnC), and GAS (RBTM), respectively. 
The detailed hyperparameter settings of the robust AGRs are listed in \cref{apptbl:defense_hyperparams} in \cref{appsec:setup}.
We also compare our \proposedmethod against Bucketing \citep{karimireddy2022bucketing}.

\textbf{Evaluation.}
We use top-1 accuracy, i.e., the proportion of correctly predicted testing samples to total testing samples, to evaluate the performance of global models.
We run each experiment for five times and report the mean and standard deviation of the highest accuracy during the training process.

\textbf{Other settings.}
We utilize AlexNet \cite{krizhevsky2017alexnet}, SqueezeNet \cite{iandola2016squeezenet}, ResNet-18 \cite{he2016resnet} and a four-layer CNN \cite{caldas2018leaf} for CIFAR-10, CIFAR-100, ImageNet-12 and FEMNIST, respectively.
The number of Byzantine clients of all datasets is set to $f=0.2\cdot\nclients$.
We also consider up to $f=0.3\cdot\nclients$ Byzantine clients in the ablation study.
Please refer to \cref{apptable:default_setup} in \cref{appsec:setup} for more details.

\subsection{Experiment Results}
\label{subsec:results}

\textbf{Main results.}
\cref{tbl:main_experiments} illustrates the results of different defenses against popular attacks on CIFAR-10, CIFAR-100, ImageNet-12 and FEMNIST.
From these tables, we observe that:
\begin{itemize}
    \item[(1)] 
    Integrating current robust AGRs into our \proposedmethod generally outperform all their original versions on all datasets, which verifies the efficacy of our proposed \proposedmethod.
    For example, \proposedmethod improves the accuracy of Median by 15.93\% under Min-Sum attack on CIFAR-10.
    \item[(2)] The improvement of \proposedmethod (DnC) over DnC is relatively mild on CIFAR-10.
    Our interpretation is that when the dataset is relatively small and simple, DnC is capable of obtaining a rational gradient estimation.
    Nevertheless, on larger and more complex datasets, i.e., FEMNIST and ImageNet-12, DnC fails to achieve satisfactory performance under Byzantine attacks.
    \item[(3)] Although RFA collapses on FEMNIST, combining with our \proposedmethod can still improve it to satisfactory performance.
    Our illustration is that although the aggregated gradient of RFA deviates from the optimal gradient, it can still assist in identifying honest gradients when combined with \proposedmethod.
    As a result, \proposedmethod (RFA) is still effective on FEMNIST.
\end{itemize}

\textbf{GAS v.s. Bucketing.}
 We also compare our \proposedmethod method against Bucketing \cite{karimireddy2022bucketing} on CIFAR-10.
 For each robust AGR, we combine it with \proposedmethod or Bucketing separately and compare their performance.
 The results are posted in \cref{tbl:bucketing}.
 As shown in \cref{tbl:bucketing}, our \proposedmethod outperforms Bucketing in most cases. 
 Except for LIE attack, the test accuracy of \proposedmethod (Median) is slightly lower than Bucketing (Median).

\textbf{Number of sub-vectors.}
We vary sub-vector number $p$ across $\{100, 1000, 10000, 100000, 1000000, 2472266(d)\}$ under LIE attack on the heterogeneous CIFAR-10 dataset.
Other setups align with the main experiments.
The results are provided in \cref{tbl:ps}.
As shown in \cref{tbl:ps}, when $p$ increases, the accuracy of \proposedmethod first increases, then slightly drops.
Compared to \proposedmethod (Multi-Krum), \proposedmethod (Bulyan) demonstrates the best performance at a larger $p$ and declines more slowly as $p$ continues to increase.
These results imply that: (1) \proposedmethod with a moderate $p$ is more likely to achieve better performance;
(2) the best $p$ for different base AGRs differs.

\textbf{Performance of \proposedmethod when number of Byzantine clients $f$ is unknown.}
We run additional experiments to evaluate the performance of \proposedmethod when the number of Byzantine clients $f$ is unknown to the server.
In this case, \proposedmethod removes a fixed fraction of $\delta$ sampled clients in each communication round, where $\delta\in[0, 0.5)$ is the estimated ratio of Byzantine clients.
We test for $\delta=0.1, 0.3$ when there are 20\% Byzantine clients under LIE attack on CIFAR-10.
From results in \cref{tbl:unknown_f}, we can summarize that:
(1) When the server knows the number of Byzantine clients (i.e., $\delta$ is N/A), GAS achieves the best performance;
(2) When the number of Byzantine clients is unknown, the performance degradation of GAS is relatively mild;
(3) Compared to excluding fewer clients ($\delta=0.1$) from aggregation, the performance of GAS is generally better when excluding more clients ($\delta=0.3$). We hypothesize this is because gradient heterogeneity is more impactful than LIE attack. Therefore, excluding honest gradients ($\delta=0.3$) is more harmful to the performance of GAS compared to including Byzantine gradients ($\delta=0.1$).

\textbf{Results on different levels of non-IID.}
We discuss the impact of non-IID levels of data distributions.
We modify the concentration parameter $\beta$ to change the non-IID level.
A smaller $\beta$ implies a higher non-IID level.
\cref{tbl:non_iid} demonstrates the accuracy of different defenses under LIE attack on CIFAR-10 dataset across $\beta=\{0.3, 0.7\}$.
Other setups follow the default setup of the main experiments as illustrated in \cref{subsec:experimental_setups} and \cref{appsec:setup}.
As shown in \cref{tbl:non_iid},
all the existing AGRs achieve better performances than their original versions when combined with \proposedmethod, which validates the efficacy of our \proposedmethod under different non-IID levels.
Moreover, when the level of non-IID is higher, the improvement on robust AGRs is more significant.
The results further confirm that our \proposedmethod can overcome the failures aggravated under a higher non-IID level.

\textbf{Results on different number of Byzantine clients.}
We also conduct experiments across different number of Byzantine clients (the total number of clients $n$ is fixed).
Other setups follow the default setup of the main experiments in \cref{subsec:experimental_setups} and \cref{appsec:setup}.
\cref{tbl:attacker_number} demonstrates the results of different defenses under LIE attack across $\nattackers=\{5, 15\}$ Byzantine clients on CIFAR-10 dataset.
As shown in \cref{tbl:attacker_number}, our \proposedmethod outperforms the corresponding baselines across all Byzantine client numbers.

\textbf{Results on different number of clients.}
We further analyze the efficacy of our \proposedmethod under different number of clients.
We test the performance of different defenses under LIE attack across $\nclients=\{75, 100\}$ clients on CIFAR-10 dataset.
The number of Byzantine clients is set to $\nattackers=0.2\cdot\nclients$ correspondingly.
Other setups follow the default setup of the main experiments in \cref{subsec:experimental_setups} and \cref{appsec:setup}.
These results demonstrate that all the robust AGRs consistently outperform all their original versions when combined with our \proposedmethod, which validates that our \proposedmethod can effectively defend against Byzantine across different numbers of clients.

\section{Conclusion and Discussion}
\label{sec:conclusion}

In this work, we identify two main challenges of Byzantine robustness in the non-IID setting: the curse of dimensionality and gradient heterogeneity.
Robust AGRs that try to include all honest gradients in aggregation suffer from the curse of dimensionality.
Other robust AGRs that aggregate fewer gradients to get rid of Byzantines fail due to gradient heterogeneity.
Motivated by the above discoveries, we propose a novel \shorten (\proposedmethod) based approach that is compatible with most existing robust AGRs and overcomes the high dimensionality and gradient heterogeneity.
\proposedmethod splits each high-dimensional gradient into low-dimensional sub-vectors and detects Byzantine gradients with the sub-vectors to address the curse of dimensionality.
Then, \proposedmethod aggregates all the identified honest gradients to handle the gradient heterogeneity to alleviate the gradient heterogeneity issue.
We also provide a detailed convergence analysis of our proposed \proposedmethod. 
Empirical studies on four real-world datasets justify the efficacy of \proposedmethod.

\textbf{Discussion.}
In the first step of \proposedmethod, we use an equal splitting mechanism for splitting. In fact, there are many other mechanisms, e.g., split gradients by layer.
A future research direction is to discover more effective splitting mechanisms for \proposedmethod.
Note that our \proposedmethod can also be combined with adaptive client selection strategies \cite{wan2022adaptive} to achieve better Byzantine robustness.
We would discuss it more in our future work.

\section*{Acknowledgements}
This work is supported by the National Key R\&D Program of China (No.2022YFB3304100) and by the Zhejiang University-China Zheshang Bank Co., Ltd. Joint Research Center. This work is also sponsored by Sony AI.

\clearpage
\bibliography{references}

\begin{thebibliography}{42}
\providecommand{\natexlab}[1]{#1}
\providecommand{\url}[1]{\texttt{#1}}
\expandafter\ifx\csname urlstyle\endcsname\relax
  \providecommand{\doi}[1]{doi: #1}\else
  \providecommand{\doi}{doi: \begingroup \urlstyle{rm}\Url}\fi

\bibitem[Acharya et~al.(2022)Acharya, Hashemi, Jain, Sanghavi, Dhillon, and
  Topcu]{acharya2022cgm}
Acharya, A., Hashemi, A., Jain, P., Sanghavi, S., Dhillon, I.~S., and Topcu, U.
\newblock Robust training in high dimensions via block coordinate geometric
  median descent.
\newblock In \emph{International Conference on Artificial Intelligence and
  Statistics}, pp.\  11145--11168. PMLR, 2022.

\bibitem[Allen-Zhu et~al.(2020)Allen-Zhu, Ebrahimianghazani, Li, and
  Alistarh]{allen2020safeguard}
Allen-Zhu, Z., Ebrahimianghazani, F., Li, J., and Alistarh, D.
\newblock Byzantine-resilient non-convex stochastic gradient descent.
\newblock In \emph{International Conference on Learning Representations}, 2020.

\bibitem[Baruch et~al.(2019)Baruch, Baruch, and Goldberg]{baruch2019lie}
Baruch, G., Baruch, M., and Goldberg, Y.
\newblock A little is enough: Circumventing defenses for distributed learning.
\newblock \emph{Advances in Neural Information Processing Systems}, 32, 2019.

\bibitem[Bernstein et~al.(2018)Bernstein, Wang, Azizzadenesheli, and
  Anandkumar]{bernstein2018signsgd}
Bernstein, J., Wang, Y.-X., Azizzadenesheli, K., and Anandkumar, A.
\newblock signsgd: Compressed optimisation for non-convex problems.
\newblock In \emph{International Conference on Machine Learning}, pp.\
  560--569. PMLR, 2018.

\bibitem[Blanchard et~al.(2017)Blanchard, El~Mhamdi, Guerraoui, and
  Stainer]{blanchard2017krum}
Blanchard, P., El~Mhamdi, E.~M., Guerraoui, R., and Stainer, J.
\newblock Machine learning with adversaries: Byzantine tolerant gradient
  descent.
\newblock \emph{Advances in Neural Information Processing Systems}, 30, 2017.

\bibitem[Caldas et~al.(2018)Caldas, Duddu, Wu, Li, Kone{\v{c}}n{\`y}, McMahan,
  Smith, and Talwalkar]{caldas2018leaf}
Caldas, S., Duddu, S. M.~K., Wu, P., Li, T., Kone{\v{c}}n{\`y}, J., McMahan,
  H.~B., Smith, V., and Talwalkar, A.
\newblock Leaf: A benchmark for federated settings.
\newblock \emph{arXiv preprint arXiv:1812.01097}, 2018.

\bibitem[Chen et~al.(2020)Chen, Zhang, Tung, Kankanhalli, and
  Chen]{chen2020robust}
Chen, C., Zhang, J., Tung, A.~K., Kankanhalli, M., and Chen, G.
\newblock Robust federated recommendation system.
\newblock \emph{arXiv preprint arXiv:2006.08259}, 2020.

\bibitem[Chen et~al.(2022{\natexlab{a}})Chen, Liu, Ma, and Lyu]{Chen22CalFAT}
Chen, C., Liu, Y., Ma, X., and Lyu, L.
\newblock Calfat: Calibrated federated adversarial training with label
  skewness.
\newblock In \emph{NeurIPS}, 2022{\natexlab{a}}.

\bibitem[Chen et~al.(2022{\natexlab{b}})Chen, Lyu, Yu, and
  Chen]{chen2022practical}
Chen, C., Lyu, L., Yu, H., and Chen, G.
\newblock Practical attribute reconstruction attack against federated learning.
\newblock \emph{IEEE Transactions on Big Data}, 2022{\natexlab{b}}.

\bibitem[Data \& Diggavi(2021)Data and Diggavi]{data2021byzantine}
Data, D. and Diggavi, S.
\newblock Byzantine-resilient high-dimensional sgd with local iterations on
  heterogeneous data.
\newblock In \emph{International Conference on Machine Learning}, pp.\
  2478--2488. PMLR, 2021.

\bibitem[El-Mhamdi et~al.(2021)El-Mhamdi, Farhadkhani, Guerraoui, Guirguis,
  Hoang, and Rouault]{el2021mda}
El-Mhamdi, E.~M., Farhadkhani, S., Guerraoui, R., Guirguis, A., Hoang, L.-N.,
  and Rouault, S.
\newblock Collaborative learning in the jungle (decentralized, byzantine,
  heterogeneous, asynchronous and nonconvex learning).
\newblock \emph{Advances in Neural Information Processing Systems},
  34:\penalty0 25044--25057, 2021.

\bibitem[Farhadkhani et~al.(2022)Farhadkhani, Guerraoui, Gupta, Pinot, and
  Stephan]{farhadkhani2022reasm}
Farhadkhani, S., Guerraoui, R., Gupta, N., Pinot, R., and Stephan, J.
\newblock Byzantine machine learning made easy by resilient averaging of
  momentums.
\newblock In \emph{International Conference on Machine Learning}, pp.\
  6246--6283. PMLR, 2022.

\bibitem[Ghosh et~al.(2019)Ghosh, Hong, Yin, and Ramchandran]{ghosh2019robust}
Ghosh, A., Hong, J., Yin, D., and Ramchandran, K.
\newblock Robust federated learning in a heterogeneous environment.
\newblock \emph{arXiv preprint arXiv:1906.06629}, 2019.

\bibitem[Guerraoui et~al.(2018)Guerraoui, Rouault, et~al.]{guerraoui2018bulyan}
Guerraoui, R., Rouault, S., et~al.
\newblock The hidden vulnerability of distributed learning in byzantium.
\newblock In \emph{International Conference on Machine Learning}, pp.\
  3521--3530. PMLR, 2018.

\bibitem[He et~al.(2016)He, Zhang, Ren, and Sun]{he2016resnet}
He, K., Zhang, X., Ren, S., and Sun, J.
\newblock Deep residual learning for image recognition.
\newblock In \emph{Proceedings of the IEEE conference on computer vision and
  pattern recognition}, pp.\  770--778, 2016.

\bibitem[Iandola et~al.(2016)Iandola, Han, Moskewicz, Ashraf, Dally, and
  Keutzer]{iandola2016squeezenet}
Iandola, F.~N., Han, S., Moskewicz, M.~W., Ashraf, K., Dally, W.~J., and
  Keutzer, K.
\newblock Squeezenet: Alexnet-level accuracy with 50x fewer parameters and< 0.5
  mb model size.
\newblock \emph{arXiv preprint arXiv:1602.07360}, 2016.

\bibitem[Kairouz et~al.(2021)Kairouz, McMahan, Avent, Bellet, Bennis, Bhagoji,
  Bonawitz, Charles, Cormode, Cummings, et~al.]{kairouz2021advances}
Kairouz, P., McMahan, H.~B., Avent, B., Bellet, A., Bennis, M., Bhagoji, A.~N.,
  Bonawitz, K., Charles, Z., Cormode, G., Cummings, R., et~al.
\newblock Advances and open problems in federated learning.
\newblock \emph{Foundations and Trends{\textregistered} in Machine Learning},
  14\penalty0 (1--2):\penalty0 1--210, 2021.

\bibitem[Karimireddy et~al.(2020)Karimireddy, Kale, Mohri, Reddi, Stich, and
  Suresh]{karimireddy2020scaffold}
Karimireddy, S.~P., Kale, S., Mohri, M., Reddi, S., Stich, S., and Suresh,
  A.~T.
\newblock Scaffold: Stochastic controlled averaging for federated learning.
\newblock In \emph{International Conference on Machine Learning}, pp.\
  5132--5143. PMLR, 2020.

\bibitem[Karimireddy et~al.(2021)Karimireddy, He, and
  Jaggi]{karimireddy2021history}
Karimireddy, S.~P., He, L., and Jaggi, M.
\newblock Learning from history for byzantine robust optimization.
\newblock In \emph{International Conference on Machine Learning}, pp.\
  5311--5319. PMLR, 2021.

\bibitem[Karimireddy et~al.(2022)Karimireddy, He, and
  Jaggi]{karimireddy2022bucketing}
Karimireddy, S.~P., He, L., and Jaggi, M.
\newblock Byzantine-robust learning on heterogeneous datasets via bucketing.
\newblock In \emph{International Conference on Learning Representations}, 2022.
\newblock URL \url{https://openreview.net/forum?id=jXKKDEi5vJt}.

\bibitem[Krizhevsky et~al.(2017)Krizhevsky, Sutskever, and
  Hinton]{krizhevsky2017alexnet}
Krizhevsky, A., Sutskever, I., and Hinton, G.~E.
\newblock Imagenet classification with deep convolutional neural networks.
\newblock \emph{Communications of the ACM}, 60\penalty0 (6):\penalty0 84--90,
  2017.

\bibitem[Krizhevsky et~al.(2009)]{krizhevsky2009cifar}
Krizhevsky, A. et~al.
\newblock Learning multiple layers of features from tiny images.
\newblock 2009.

\bibitem[Li et~al.(2021{\natexlab{a}})Li, Diao, Chen, and He]{li2021federated}
Li, Q., Diao, Y., Chen, Q., and He, B.
\newblock Federated learning on non-iid data silos: An experimental study.
\newblock \emph{arXiv preprint arXiv:2102.02079}, 2021{\natexlab{a}}.

\bibitem[Li et~al.(2020)Li, Sahu, Zaheer, Sanjabi, Talwalkar, and
  Smith]{li2020fedprox}
Li, T., Sahu, A.~K., Zaheer, M., Sanjabi, M., Talwalkar, A., and Smith, V.
\newblock Federated optimization in heterogeneous networks.
\newblock \emph{Proceedings of Machine Learning and Systems}, 2:\penalty0
  429--450, 2020.

\bibitem[Li et~al.(2021{\natexlab{b}})Li, Lyu, Koren, Lyu, Li, and
  Ma]{li2021imagenet12}
Li, Y., Lyu, X., Koren, N., Lyu, L., Li, B., and Ma, X.
\newblock Anti-backdoor learning: Training clean models on poisoned data.
\newblock \emph{Advances in Neural Information Processing Systems}, 34,
  2021{\natexlab{b}}.

\bibitem[Lyu et~al.(2020)Lyu, Yu, and Yang]{lyu2020threats}
Lyu, L., Yu, H., and Yang, Q.
\newblock Threats to federated learning: A survey.
\newblock \emph{arXiv preprint arXiv:2003.02133}, 2020.

\bibitem[Lyu et~al.(2022)Lyu, Yu, Ma, Chen, Sun, Zhao, Yang, and
  Philip]{lyu2022privacy}
Lyu, L., Yu, H., Ma, X., Chen, C., Sun, L., Zhao, J., Yang, Q., and Philip,
  S.~Y.
\newblock Privacy and robustness in federated learning: Attacks and defenses.
\newblock \emph{IEEE transactions on neural networks and learning systems},
  2022.

\bibitem[McMahan et~al.(2017)McMahan, Moore, Ramage, Hampson, and
  y~Arcas]{mcmahan2017fl}
McMahan, B., Moore, E., Ramage, D., Hampson, S., and y~Arcas, B.~A.
\newblock Communication-efficient learning of deep networks from decentralized
  data.
\newblock In \emph{Artificial intelligence and statistics}, pp.\  1273--1282.
  PMLR, 2017.

\bibitem[Park et~al.(2021)Park, Han, Choi, and Moon]{park2021sageflow}
Park, J., Han, D.-J., Choi, M., and Moon, J.
\newblock Sageflow: Robust federated learning against both stragglers and
  adversaries.
\newblock \emph{Advances in Neural Information Processing Systems},
  34:\penalty0 840--851, 2021.

\bibitem[Peng et~al.(2022)Peng, Wu, Ling, and Chen]{peng2022vr}
Peng, J., Wu, Z., Ling, Q., and Chen, T.
\newblock Byzantine-robust variance-reduced federated learning over distributed
  non-iid data.
\newblock \emph{Information Sciences}, 616:\penalty0 367--391, 2022.

\bibitem[Pillutla et~al.(2019)Pillutla, Kakade, and
  Harchaoui]{pillutla2019geometric}
Pillutla, K., Kakade, S.~M., and Harchaoui, Z.
\newblock Robust aggregation for federated learning.
\newblock \emph{arXiv preprint arXiv:1912.13445}, 2019.

\bibitem[Russakovsky et~al.(2015)Russakovsky, Deng, Su, Krause, Satheesh, Ma,
  Huang, Karpathy, Khosla, Bernstein, et~al.]{russakovsky2015imagenet}
Russakovsky, O., Deng, J., Su, H., Krause, J., Satheesh, S., Ma, S., Huang, Z.,
  Karpathy, A., Khosla, A., Bernstein, M., et~al.
\newblock Imagenet large scale visual recognition challenge.
\newblock \emph{International journal of computer vision}, 115\penalty0
  (3):\penalty0 211--252, 2015.

\bibitem[Shejwalkar \& Houmansadr(2021)Shejwalkar and
  Houmansadr]{shejwalkar2021dnc}
Shejwalkar, V. and Houmansadr, A.
\newblock Manipulating the byzantine: Optimizing model poisoning attacks and
  defenses for federated learning.
\newblock In \emph{NDSS}, 2021.

\bibitem[Wan et~al.(2022)Wan, Hu, Lu, Zhang, Jin, and He]{wan2022adaptive}
Wan, W., Hu, S., Lu, j., Zhang, L.~Y., Jin, H., and He, Y.
\newblock Shielding federated learning: Robust aggregation with adaptive client
  selection.
\newblock In Raedt, L.~D. (ed.), \emph{Proceedings of the Thirty-First
  International Joint Conference on Artificial Intelligence, {IJCAI-22}}, pp.\
  753--760. International Joint Conferences on Artificial Intelligence
  Organization, 7 2022.
\newblock \doi{10.24963/ijcai.2022/106}.
\newblock URL \url{https://doi.org/10.24963/ijcai.2022/106}.
\newblock Main Track.

\bibitem[Xie et~al.(2020)Xie, Koyejo, and Gupta]{xie2020ipm}
Xie, C., Koyejo, O., and Gupta, I.
\newblock Fall of empires: Breaking byzantine-tolerant sgd by inner product
  manipulation.
\newblock In \emph{Uncertainty in Artificial Intelligence}, pp.\  261--270.
  PMLR, 2020.

\bibitem[Yang \& Schoenholz(2017)Yang and Schoenholz]{yang2017independence}
Yang, G. and Schoenholz, S.
\newblock Mean field residual networks: On the edge of chaos.
\newblock \emph{Advances in neural information processing systems}, 30, 2017.

\bibitem[Yin et~al.(2018)Yin, Chen, Kannan, and Bartlett]{yin2018mediantrmean}
Yin, D., Chen, Y., Kannan, R., and Bartlett, P.
\newblock Byzantine-robust distributed learning: Towards optimal statistical
  rates.
\newblock In \emph{International Conference on Machine Learning}, pp.\
  5650--5659. PMLR, 2018.

\bibitem[Yu \& Kar(2022)Yu and Kar]{yu2022secure}
Yu, S. and Kar, S.
\newblock Secure distributed optimization under gradient attacks.
\newblock \emph{arXiv preprint arXiv:2210.15821}, 2022.

\bibitem[Yurochkin et~al.(2019)Yurochkin, Agarwal, Ghosh, Greenewald, Hoang,
  and Khazaeni]{yurochkin2019bayesian}
Yurochkin, M., Agarwal, M., Ghosh, S., Greenewald, K., Hoang, N., and Khazaeni,
  Y.
\newblock Bayesian nonparametric federated learning of neural networks.
\newblock In \emph{International Conference on Machine Learning}, pp.\
  7252--7261, 2019.

\bibitem[Zhang et~al.(2022)Zhang, Chen, Li, Lyu, Wu, Ding, Shen, and
  Wu]{zhang2022dense}
Zhang, J., Chen, C., Li, B., Lyu, L., Wu, S., Ding, S., Shen, C., and Wu, C.
\newblock Dense: Data-free one-shot federated learning.
\newblock \emph{Advances in Neural Information Processing Systems},
  35:\penalty0 21414--21428, 2022.

\bibitem[Zhang et~al.(2023)Zhang, Li, Chen, Lyu, Wu, Ding, and
  Wu]{zhang2023delving}
Zhang, J., Li, B., Chen, C., Lyu, L., Wu, S., Ding, S., and Wu, C.
\newblock Delving into the adversarial robustness of federated learning.
\newblock \emph{arXiv preprint arXiv:2302.09479}, 2023.

\bibitem[Zhao et~al.(2020)Zhao, Zhao, Jiang, Tan, Niyato, Li, Lyu, and
  Liu]{zhao2020privacy}
Zhao, Y., Zhao, J., Jiang, L., Tan, R., Niyato, D., Li, Z., Lyu, L., and Liu,
  Y.
\newblock Privacy-preserving blockchain-based federated learning for iot
  devices.
\newblock \emph{IEEE Internet of Things Journal}, 8\penalty0 (3):\penalty0
  1817--1829, 2020.

\end{thebibliography}
\bibliographystyle{icml2023}

\clearpage
\appendix
\onecolumn

\section{Setups for Experiments in \cref{sec:motivation}}
\label{appsec:motivation_setup}
The experiments are conducted on CIFAR-10 \cite{krizhevsky2009cifar}.

For both IID and non-IID settings, the number of clients is set to $\nclients=50$.
For IID data distribution, all 50,000 samples are randomly partitioned into 50 clients each containing 1,000 samples.
For non-IID data distribution, the samples are partitioned in a Dirichlet manner with concentration parameter $\beta=0.5$.
Please refer to \cref{subsec:experimental_setups} for the details of Dirichlet partition.

The number of Byzantine clients is set to $\nattackers=10$.
LIE \cite{baruch2019lie} attack with $z=1.5$ is considered.

We use AlexNet \cite{krizhevsky2017alexnet} as the model architecture.
The number of communication rounds is set to 500.
In each communication round, all clients participate in the training.

For local training,
the number of local epochs is set to 1,
batch size is set to 64,
the optimizer is set to SGD.
For SGD optimizer,
learning rate is set to 0.1,
momentum is set to 0.5,
weight decay coefficient is set to 0.0001.
We also adopt gradient clipping with clipping norm 2.

Six robust AGRs are considered: Bulyan \cite{guerraoui2018bulyan}, Median \cite{yin2018mediantrmean}, RBTM \cite{el2021mda}, Multi-Krum \cite{blanchard2017krum}, RFA \cite{pillutla2019geometric}, DnC \cite{shejwalkar2021dnc}

\section{Computation Cost of \proposedmethod}
\label{appsec:computation_cost}
We first give the computation cost of the proposed GAS method. The computation cost of GAS is closely related to the computation cost of the base robust AGR $\mathcal{A}$. We use $\mathtt{cost}_{\mathcal{A}}(d, n)$ to denote the computation cost of the base AGR $\mathcal{A}$ given $n$ gradients of dimensionality $d$.

Our GAS method has three steps: splitting, identification, and aggregation.

\begin{itemize}
\item[] \textbf{Splitting.} The splitting step is of complexity $\mathcal{O}(d)$;

\item[] \textbf{Identification.} The identification step consists of two parts: apply AGR $\mathcal{A}$ to sub-vectors ($\mathcal{O}(p \mathtt{cost}_{\mathcal{A}}(d/p, n))$) and compute identification scores ($\mathcal{O}(nd+np)$).

\item[] \textbf{Aggregation.} The complexity of aggregation step is $\mathcal{O}(n\log(n-f)+(n-f)d)$.
\end{itemize}

In summary, the overall complexity for GAS is $\mathcal{O}(d+p \mathtt{cost}_{\mathcal{A}}(d/p, n)+nd+ np+n\log(n-f)+ (n-f)d)= \mathcal{O}(n(d +\log(n-f))+p \mathtt{cost}_{\mathcal{A}}(d/p, n))$.

Then we analyze the computation cost of GAS $\mathcal{O}( n(d +\log(n-f))+p \mathtt{cost}_{\mathcal{A}}(d/p, n))$. Since $n\ll d$ [7], the first term $\mathcal{O}( n(d +\log(n-f)))\approx\mathcal{O}(d)\ll \Omega(d^2)$. The second term $ \mathcal{O}(p \mathtt{cost}_{\mathcal{A}}(d/p, n))$ relies on $\mathtt{cost}_{\mathcal{A}}(d/p, n)$, the cost of base AGR $\mathcal{A}$. The computation cost of popular AGRs are usually $\mathcal{O}(d/p)$ under assumption $n\ll d$ [7], e.g., Krum ($\mathcal{O}(n^2d/p)$), Bulyan ($\mathcal{O}(n^2d/p)$). Therefore, the second term usually satisfies $ \mathcal{O}(p \mathtt{cost}_{\mathcal{A}}(d/p, n))\approx \mathcal{O}(d)\ll \Omega(d^2)$. In summary, the computation cost of GAS is generally $\mathcal{O}(d)$ (consider only $d$ and omit $n$), which is much smaller than $\Omega(d^2)$.

\section{Convergence Analysis}

In this section, we provide the proof for our convergence results in \cref{prop:convergence} and the comparison of our convergence results with recent works.

We first restate the assumptions, the definition and the proposition for the integrity of this section.
\aspunbias*
\aspvar*
\aspdissim*
\asplipschitz*
\defbyzresil*
\propconvergence*

\subsection{Proof for \cref{prop:convergence}}
\label{appsec:proof}

\begin{lemma}
\label{lemma:resilience_random}
For positive integer $n$, $f\le n$ and real value $\lambda$, AGR $\agg$ is $(f,\lambda)$-resilient.
Then for any set of random variables $\{\bm{x}_1,\ldots,\bm{x}_n\}$ and $\gS\subseteq[n]$ of size $n-f$ that satisfies,
\begin{align}
\E[\|\bm{x}_i-\bm{x}_{i'}\|^2]\le\rho^2,\quad\forall i,i'\in\gH
\end{align}
we have
\begin{align}
\E[\|\agg(\bm{x}_1,\ldots,\bm{x}_n)-\bm{x}_{\gS}\|^2]\le4\lambda^2\cdot\frac{(n-f-1)^2}{n-f}\cdot\rho^2,
\end{align}
where $\bm{x}_{\gS}=\sum_{i\in\gS}\bm{x}_i/|\gS|$.
\end{lemma}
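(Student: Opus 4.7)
The plan is to translate the deterministic $(f,\lambda)$-resilience bound into an expectation bound by relating the maximum pairwise distance within $\gS$ to the empirical variance of $\{\vx_i\}_{i\in\gS}$ around their mean $\vx_{\gS}$, and then applying the hypothesis $\E\|\vx_i-\vx_{i'}\|^2\le\rho^2$ coordinate-by-coordinate in the sum that defines the variance. Since the resilience bound is already deterministic, this reduces the lemma to a purely variance-arithmetic calculation.

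First I would apply \cref{asp:byzantine_resilience} and square both sides to get pointwise (i.e.\ deterministically) $\|\agg(\vx_1,\ldots,\vx_n)-\vx_{\gS}\|^2 \le \lambda^2 \max_{i,i'\in\gS}\|\vx_i-\vx_{i'}\|^2$. The triangle inequality $\|\vx_i-\vx_{i'}\|\le\|\vx_i-\vx_{\gS}\|+\|\vx_{\gS}-\vx_{i'}\|\le 2\max_{j\in\gS}\|\vx_j-\vx_{\gS}\|$ yields
\begin{equation*}
\max_{i,i'\in\gS}\|\vx_i-\vx_{i'}\|^2 \le 4\max_{i\in\gS}\|\vx_i-\vx_{\gS}\|^2 \le 4\sum_{i\in\gS}\|\vx_i-\vx_{\gS}\|^2,
\end{equation*}
where the last step uses non-negativity. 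Taking expectations then reduces the task to bounding $\sum_{i\in\gS}\E\|\vx_i-\vx_{\gS}\|^2$.

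For the individual terms, I would write $\vx_i-\vx_{\gS}=\tfrac{1}{|\gS|}\sum_{j\in\gS, j\ne i}(\vx_i-\vx_j)$ (the $j=i$ term vanishes), and apply Jensen's inequality (equivalently Cauchy--Schwarz on $|\gS|-1$ summands) to obtain $\|\vx_i-\vx_{\gS}\|^2 \le \tfrac{|\gS|-1}{|\gS|^2}\sum_{j\in\gS,\,j\ne i}\|\vx_i-\vx_j\|^2$. Taking expectations and using the hypothesis gives $\E\|\vx_i-\vx_{\gS}\|^2 \le \tfrac{(|\gS|-1)^2}{|\gS|^2}\rho^2$. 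Summing over the $|\gS|=n-f$ indices and chaining with the inequalities above produces $\E\|\agg-\vx_{\gS}\|^2 \le 4\lambda^2 \cdot \tfrac{(n-f-1)^2}{n-f}\cdot \rho^2$, matching the claim.

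The proof is essentially bookkeeping, so there is no deep obstacle; the main subtlety is choosing the right sequence of relaxations so the constants line up. Specifically, one must pass from pairwise distances to deviations-from-mean (losing the factor $4$), then from maximum to sum (free, by non-negativity), and finally apply Jensen's inequality on exactly $|\gS|-1$ nonzero summands so that the numerator of the final bound is $(|\gS|-1)^2$ rather than the looser $(|\gS|-1)|\gS|$ one would get by summing over all $|\gS|$ indices including $j=i$. With that care, the target factor $(n-f-1)^2/(n-f)$ emerges exactly.
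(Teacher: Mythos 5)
Your proof is correct and follows essentially the same route as the paper's: square the deterministic resilience bound, pass from the pairwise maximum to deviations-from-mean (factor $4$), then to a sum, then apply Cauchy--Schwarz/Jensen over the $|\gS|-1$ nonzero summands in $\vx_i-\vx_{\gS}=\tfrac{1}{|\gS|}\sum_{j\in\gS\setminus\{i\}}(\vx_i-\vx_j)$, take expectations, and sum over $i\in\gS$ to recover $4\lambda^2(n-f-1)^2\rho^2/(n-f)$. The only cosmetic difference is that you keep the bound pointwise until the final expectation step, while the paper takes expectations earlier; the chain of inequalities is the same.
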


\begin{proof}
Since $\agg$ is $(f,\lambda)$-resilient, we have
\begin{align}
\label{appeq:resilience_1}
\E[\|\agg(\bm{x}_1,\ldots,\bm{x}_n)-\Bar{\bm{x}}_{\gS}\|^2]
\le\E[\lambda^2\max_{i,i'\in\gS}\|\bm{x}_i-\bm{x}_{i'}\|^2]
=\lambda^2\E[\max_{i,i'\in\gS}\|\bm{x}_i-\bm{x}_{i'}\|^2].
\end{align}

Then we bound $\max_{i,i'\in\gS}\|\bm{x}_i-\bm{x}_{i'}\|^2$ as follows.
\begin{align}
\label[ineq]{appeq:max_bound_1}
\max_{i,i'\in\gS}\|\bm{x}_i-\bm{x}_{i'}\|^2
\le{}&\max_{i,i'\in\gS}2(\|\bm{x}_i-\bm{x}_\gS\|^2+\|\bm{x}_\gS-\bm{x}_{i'}\|^2)\\
\le{}&\max_{i,i'\in\gS}2\|\bm{x}_i-\bm{x}_\gS\|^2+\max_{i,i'\in\gS}2\|\bm{x}_\gS-\bm{x}_{i'}\|^2\\
={}&4\max_{i\in\gS}\|\bm{x}_i-\bm{x}_\gS\|^2\\
\label[ineq]{appeq:max_bound_4}
\le{}&4\sum_{i\in\gS}\|\bm{x}_i-\bm{x}_\gS\|^2
\end{align}
Here \cref{appeq:max_bound_1} comes from the Cauchy inequality.

We further bound $\|\bm{x}_i-\bm{x}_\gS\|^2$ for all $i\in\gS$ as follows.
\begin{align}
\|\bm{x}_i-\bm{x}_\gS\|^2
={}&\frac{1}{(n-f)^2}\|\sum_{i'\in\gS\setminus\{i\}}(\bm{x}_i-\bm{x}_{i'})\|^2
\\\le{}&\frac{1}{(n-f)^2}\cdot(n-f-1)\sum_{i'\in\gS\setminus\{i\}}\|\bm{x}_i-\bm{x}_{i'}\|^2
\label{appeq:radius_bound_2}
\\={}&\frac{n-f-1}{(n-f)^2}\sum_{i'\in\gS\setminus\{i\}}\|\bm{x}_i-\bm{x}_{i'}\|^2.
\label{appeq:radius_bound_3}
\end{align}
Here \cref{appeq:radius_bound_2} comes from the Cauchy inequality.

Combine \cref{appeq:resilience_1,appeq:max_bound_4,appeq:radius_bound_3} and we have
\begin{align}
\E[\|\agg(\bm{x}_1,\ldots,\bm{x}_n)-\bm{x}_{\gS}\|^2]
\le{}&\lambda^2\E[\max_{i,i'\in\gS}\|\bm{x}_i-\bm{x}_{i'}\|^2]
\\\le{}&\lambda^2\E[4\sum_{i\in\gS}\|\bm{x}_i-\bm{x}_\gS\|^2]
\\\le{}&\lambda^2\E[4\sum_{i\in\gS}\frac{n-f-1}{(n-f)^2}\sum_{i'\in\gS\setminus\{i\}}\|\bm{x}_i-\bm{x}_{i'}\|^2]
\\\le{}&4\lambda^2\cdot\frac{n-f-1}{(n-f)^2}\sum_{i,i'\in\gS,i\ne i'}\E[\|\bm{x}_i-\bm{x}_{i'}\|^2]
\\\le{}&4\lambda^2\cdot\frac{n-f-1}{(n-f)^2}\cdot (n-f)(n-f-1)\rho^2
\\={}&4\lambda^2\cdot\frac{(n-f-1)^2}{n-f}\cdot\rho^2
\end{align}

\end{proof}

We state and prove the following lemma for the proof of \cref{lemma:aggregation_error}.
\begin{lemma}
\label{lemma:variance}
For any random vector $\vX$, we have
\begin{align}
\Var[\|\vX\|]\le\E\|\vX-\E\vX\|^2.
\end{align}
\end{lemma}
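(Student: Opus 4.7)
The plan is to reduce the inequality to Jensen's inequality applied to the norm, which is a convex function.

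First I would expand the left-hand side using the definition of variance for the scalar random variable $\|\vX\|$:
\begin{align}
\Var[\|\vX\|] = \E[\|\vX\|^2] - (\E\|\vX\|)^2.
\end{align}
Next I would expand the right-hand side by writing $\|\vX - \E\vX\|^2 = \|\vX\|^2 - 2\langle\vX,\E\vX\rangle + \|\E\vX\|^2$ and taking expectations, which yields
\begin{align}
\E\|\vX - \E\vX\|^2 = \E[\|\vX\|^2] - \|\E\vX\|^2.
\end{align}

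Subtracting the two identities, the desired inequality $\Var[\|\vX\|]\le\E\|\vX-\E\vX\|^2$ becomes equivalent to $\|\E\vX\|^2 \le (\E\|\vX\|)^2$, i.e.\ $\|\E\vX\|\le\E\|\vX\|$. I would then invoke Jensen's inequality for the convex function $\vx\mapsto\|\vx\|$ (or equivalently the triangle inequality in integral form) to conclude.

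There is no real obstacle here; the only mild subtlety is making sure the cross-term $\E\langle\vX,\E\vX\rangle = \langle\E\vX,\E\vX\rangle = \|\E\vX\|^2$ is handled correctly when expanding the second expectation, and that we are applying Jensen's inequality to the (deterministic) convex and nonnegative function $\|\cdot\|$ rather than to $\|\cdot\|^2$. The whole argument is a short chain of three identities plus one application of Jensen.
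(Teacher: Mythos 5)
Your proof is correct, but it takes a slightly different algebraic path than the paper. The paper starts from $\Var[\|\vX\|]=\E(\|\vX\|-\E\|\vX\|)^2$, re-centers around the constant $\|\E\vX\|$ via the identity $\E(Y-\E Y)^2=\E(Y-a)^2-(a-\E Y)^2$, drops the nonnegative subtracted term, and then applies the pointwise reverse triangle inequality $\bigl|\,\|\vX\|-\|\E\vX\|\,\bigr|\le\|\vX-\E\vX\|$ before taking expectations. You instead expand both $\Var[\|\vX\|]$ and $\E\|\vX-\E\vX\|^2$ explicitly, observe that their difference is exactly $(\E\|\vX\|)^2-\|\E\vX\|^2$, and finish with Jensen's inequality (the triangle inequality in integral form) $\|\E\vX\|\le\E\|\vX\|$. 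Both routes ultimately rest on the same underlying fact (triangle inequality for the norm, in pointwise vs.\ integral form), and a quick computation shows they even have identical total slack $(\E\|\vX\|)^2-\|\E\vX\|^2$. Your approach is arguably cleaner: it uses a single inequality rather than two, and it makes the exact gap between the two sides visible, whereas the paper's two-stage bound obscures where the slack comes from.
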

\begin{proof}
From the definition of variance, we have
\begin{align}
\Var[\|\vX\|]
={}&\E(\|\vX\|-\E\|\vX\|)^2\\
={}&\E(\|\vX\|-\|\E\vX\|)^2-(\|\E\vX\|-\E\|\vX\|)^2\\
\le{}&\E(\|\vX\|-\|\E\vX\|)^2\\
\le{}&\E\|\vX-\E\vX\|^2.
\end{align}
The second inequality comes from triangular inequality.
\end{proof}

\begin{lemma}[Aggregation error]
\label{lemma:aggregation_error}Suppose \cref{asp:unbias,asp:bounded_var,asp:dissimilarity} hold.
Given an $(f,\lambda)$-resilient robust AGR $\agg$, for any $t>0$, it satisfies
\begin{align}
\E[\|\aggsgrad{}-\Bar{\vg}\|^2]
\le\gO((\kappa^2+\sigma^2)(1+\frac{n-f+1}{p})(1+\lambda^2+\frac{1}{n-f})\frac{f^2}{(n-f)^2})
\end{align}
\end{lemma}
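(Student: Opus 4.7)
The plan is to bound the aggregation error by decomposing $\hat{\vg} - \bar{\vg}$ over the symmetric difference between the selected index set $\mathcal{I}$ and the honest set $\mathcal{H}$. Define $\mathcal{I}_B = \mathcal{I} \cap \mathcal{B}$ (selected Byzantine clients) and $\mathcal{H}_{\mathrm{out}} = \mathcal{H} \setminus \mathcal{I}$ (excluded honest clients); since $|\mathcal{I}| = |\mathcal{H}| = n-f$, we have $|\mathcal{I}_B| = |\mathcal{H}_{\mathrm{out}}| = k \le f$. Let $\hat{\vg}_{\mathrm{sub}}$ denote the concatenation of the $\hat{\vg}^{(q)}$ over $q \in [p]$, which is well-defined because the index sets $\{\indexset{q}\}$ partition $[d]$. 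Then
\begin{align*}
\hat{\vg} - \bar{\vg} = \frac{1}{n-f}\Bigl[\sum_{i \in \mathcal{I}_B} (\vg_i - \hat{\vg}_{\mathrm{sub}}) - \sum_{i \in \mathcal{H}_{\mathrm{out}}}(\vg_i - \hat{\vg}_{\mathrm{sub}})\Bigr],
\end{align*}
so applying Cauchy-Schwarz to the $2k$ terms gives $\|\hat{\vg} - \bar{\vg}\|^2 \le \tfrac{2k}{(n-f)^2}\bigl[\sum_{i \in \mathcal{I}_B}\|\vg_i - \hat{\vg}_{\mathrm{sub}}\|^2 + \sum_{i \in \mathcal{H}_{\mathrm{out}}}\|\vg_i - \hat{\vg}_{\mathrm{sub}}\|^2\bigr]$, already producing the desired $f^2/(n-f)^2$ scaling after using $k \le f$.

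For an honest index $i \in \mathcal{H}_{\mathrm{out}}$ I would apply the triangle inequality, $\|\vg_i - \hat{\vg}_{\mathrm{sub}}\|^2 \le 2\|\vg_i - \bar{\vg}\|^2 + 2\|\bar{\vg} - \hat{\vg}_{\mathrm{sub}}\|^2$. The first piece is controlled by the honest gradient variance $O(\sigma^2 + \kappa^2)$ using \cref{asp:unbias,asp:bounded_var,asp:dissimilarity}. The second decomposes over groups as $\|\bar{\vg} - \hat{\vg}_{\mathrm{sub}}\|^2 = \sum_q \|\bar{\vg}^{(q)} - \hat{\vg}^{(q)}\|^2$, and for each $q$ I would invoke \cref{lemma:resilience_random} with the honest sub-vectors $\{\vg_i^{(q)}\}_{i \in \mathcal{H}}$ to get $\E\|\bar{\vg}^{(q)} - \hat{\vg}^{(q)}\|^2 \le 4\lambda^2 (n-f-1)^2 \rho_q^2 / (n-f)$, where $\rho_q^2$ bounds $\E\|\vg_i^{(q)} - \vg_{i'}^{(q)}\|^2$ for honest pairs.

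For a selected Byzantine $i \in \mathcal{I}_B$ I would exploit the identification step. The elementary inequality $\sum_q a_q^2 \le (\sum_q a_q)^2$ for non-negative $a_q$ yields $\|\vg_i - \hat{\vg}_{\mathrm{sub}}\|^2 \le s_i^2$, and the selection rule $\max_{i \in \mathcal{I}_B} s_i \le \min_{i' \in \mathcal{H}_{\mathrm{out}}} s_{i'}$, combined with $|\mathcal{I}_B| = |\mathcal{H}_{\mathrm{out}}|$, gives $\sum_{i \in \mathcal{I}_B} s_i^2 \le \sum_{i' \in \mathcal{H}_{\mathrm{out}}} s_{i'}^2$. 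The honest scores are then converted back to sub-aggregation deviations via Cauchy-Schwarz in the other direction: $s_{i'}^2 = (\sum_q \|\vg_{i'}^{(q)} - \hat{\vg}^{(q)}\|)^2 \le p\sum_q\|\vg_{i'}^{(q)} - \hat{\vg}^{(q)}\|^2 = p\|\vg_{i'} - \hat{\vg}_{\mathrm{sub}}\|^2$, reducing the Byzantine contribution to $p$ times the honest quantity already bounded above.

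The main obstacle I anticipate is managing the $p$ dependence so that the $(1 + (n-f+1)/p)$ factor emerges. The Cauchy-Schwarz conversion for honest scores introduces a factor of $p$ in the Byzantine term, while summing the group-wise error brings in $\sum_q \rho_q^2$; since any single pair obeys $\sum_q \E\|\vg_i^{(q)} - \vg_{i'}^{(q)}\|^2 \le \rho^2$ but $\rho_q^2$ is a max over pairs, the raw bound is $\sum_q \rho_q^2 \le \min(p\rho^2,(n-f)^2\rho^2)$. Balancing these two $p$-dependent pieces against the $p$ factor from identification, together with the $\tfrac{1}{n-f}$ contribution from the $(n-f-1)^2/(n-f)$ prefactor in \cref{lemma:resilience_random}, should collapse into the claimed $(1 + (n-f+1)/p)(1 + \lambda^2 + 1/(n-f))$ product.
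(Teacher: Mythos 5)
Your decomposition over the symmetric difference $\mathcal{I}_B = \mathcal{I}\cap\mathcal{B}$, $\mathcal{H}_{\mathrm{out}}=\mathcal{H}\setminus\mathcal{I}$ is a natural deterministic strategy, but the step that converts Byzantine norms to honest norms through the scores fails to produce the right $p$-dependence, and I do not think it can be salvaged in the form you have written. Chaining $\|\vg_i-\hat{\vg}_{\mathrm{sub}}\|^2 \le s_i^2$ (from $\sum_q a_q^2 \le (\sum_q a_q)^2$) with the sorting inequality $s_i \le s_{i'}$ and then $s_{i'}^2 \le p\|\vg_{i'}-\hat{\vg}_{\mathrm{sub}}\|^2$ loses a multiplicative factor of $p$: the first inequality is only tight when a single group dominates, while the last is tight when all groups contribute equally, so the two are tight in opposite regimes and their composition is essentially a worst-case $p$ penalty. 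This leaves you with an overall $(1+p)$ factor in front of the honest contribution, \emph{growing} with $p$, whereas the claimed bound carries $1 + (n-f+1)/p$, which \emph{decreases} with $p$. The other $p$-dependent piece you hope will cancel this, $\sum_q \rho_q^2$, does not help: since the index sets $\{\mathcal{J}_q\}$ partition $[d]$, for any fixed honest pair one has $\sum_q \E\|\vg_i^{(q)}-\vg_{i'}^{(q)}\|^2 = \E\|\vg_i-\vg_{i'}\|^2$, so $\sum_q \rho_q^2$ is bounded between $\Omega(\sigma^2+\kappa^2)$ and $O((n-f)^2(\sigma^2+\kappa^2))$ independently of $p$. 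There is nothing $\sim 1/p$ in your argument to counteract the extra factor of $p$.

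The underlying reason the bound can improve with $p$ at all is probabilistic, not combinatorial, and your purely deterministic sorting argument cannot see it. The paper's proof bounds the mean and variance of the score $s_i = \sum_q s_i^{(q)}$ for honest and Byzantine clients, and crucially invokes independence of the per-group scores $\{s_i^{(q)}\}_q$ so that $\Var[s_i]$ scales like $p$ while $\E[s_i]$ and the Byzantine-honest score gap scale like $p$. Chebyshev then gives a mis-selection probability $\lesssim (n-f+1)/p$ up to the separation $\delta_b$; inverting this says that any Byzantine client that survives selection with non-negligible probability must satisfy $\delta_b^2 \lesssim (1+(n-f+1)/p)(\ldots)$, which is exactly where the $(1+(n-f+1)/p)$ factor enters. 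To make your route work you would have to replace the worst-case score comparison by a concentration statement about how the $p$ group scores aggregate, which is essentially reconstructing the paper's Chebyshev argument. As written, the proof does not establish the lemma.

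Two smaller points. First, after applying Lemma~\ref{lemma:resilience_random} groupwise you sum $4\lambda^2(n-f-1)^2/(n-f)\,\rho_q^2$ over $q$; using the pessimistic $\sum_q \rho_q^2 = O((n-f)^2(\sigma^2+\kappa^2))$ would also give a much worse $(n-f)$-power than the target $1+\lambda^2+1/(n-f)$, so you would have to be more careful there as well. Second, your split $\hat{\vg}-\bar{\vg}$ over $\mathcal{I}_B$ and $\mathcal{H}_{\mathrm{out}}$ and the observation $|\mathcal{I}_B|=|\mathcal{H}_{\mathrm{out}}|\le f$ are correct and do cleanly produce the $f^2/(n-f)^2$ prefactor; it is only the score-to-norm conversion step that needs a different (probabilistic) tool.
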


\begin{proof}

%%%%%%%%%%%%%%%%%%%% difference %%%%%%%%%%%%%%%%%%%%%%%%%
We rewrite $\hat{\vg}$ as follows.
\begin{align}
\hat{\vg}
=\frac{1}{n-f}\sum_{i\in\gI}\vg_i
=\frac{1}{n-f}(\sum_{h\in\Tilde{\honestclients}}\vg_h
    +\sum_{b\in\Tilde{\byzantineclients}}\vg_b)
=\frac{|\Tilde{\gH}|}{n-f}\vg_{\Tilde{\gH}}
    +\frac{|\Tilde{\gB}|}{n-f}\vg_{\Tilde{\gB}}.
\end{align}
Here $\Tilde{\honestclients}=\honestclients\cap\mathcal{I}$,
$\Tilde{\byzantineclients}=\byzantineclients\cap\mathcal{I}$,
and $\vg_{\gS}=\sum_{i\in\gS}\vg_i/|\gS|$ for all $\gS\in[n]$.

Then, we can bound $\E\left\|\hat{\vg}-\Bar{\vg}\right\|^2$ as follows
\begin{align}
\E[\|\hat{\vg}-\Bar{\vg}\|^2]
={}&\E[\|\frac{|\Tilde{\gH}|}{n-f}\vg_{\Tilde{\gH}}+\frac{|\Tilde{\gB}|}{n-f}\vg_{\Tilde{\gB}}-\Bar{\vg}\|^2]\\
={}&\E[\|\frac{|\Tilde{\gH}|}{n-f}(\vg_{\Tilde{\gH}}-\Bar{\vg})
    +\frac{|\Tilde{\gB}|}{n-f}(\vg_{\Tilde{\gB}}-\Bar{\vg})\|^2]\\
\le{}&\frac{2|\Tilde{\gH}|^2}{(n-f)^2}\E[\|\vg_{\Tilde{\honestclients}}-\Bar{\vg}\|^2]
    +\frac{2|\Tilde{\gB}|^2}{(n-f)^2}\E[\|\vg_{\Tilde{\byzantineclients}}-\Bar{\vg}\|^2].
\end{align}

We bound $\E[\|\vg_{\Tilde{\gH}}-\Bar{\vg}\|^2]$ as follows.
\begin{align}
\E[\|\vg_{\Tilde{\gH}}-\Bar{\vg}\|^2]
={}&\E[\|(\vg_{\Tilde{\gH}}-\Bar{\vg}_{\Tilde{\gH}})+(\Bar{\vg}_{\Tilde{\gH}}-\Bar{\vg})\|^2]\\
={}&\E[\|\vg_{\Tilde{\gH}}-\Bar{\vg}_{\Tilde{\gH}}\|^2]
    +\|\Bar{\vg}_{\Tilde{\gH}}-\Bar{\vg}\|^2\\
\label{eq:benign_dev_bound}
\le{}&\frac{\sigma^2}{|\Tilde{\gH}|}+\kappa^2,
\end{align}
where $\Bar{\vg}_{\Tilde{\gH}}=\E[\vg_{\Tilde{\gH}}]$.

Then we consder $\E[\|\vg_{\Tilde{\gB}}-\bar{\vg}\|^2]$.
According to the law of total expectation, we have
\begin{align}
\E[\|\vg_{\Tilde{\gB}}-\bar{\vg}\|^2]
=\sum_{\Tilde{f}=0}^f\E[\|\vg_{\Tilde{\gB}}-\bar{\vg}\|^2\mid|\Tilde{\gB}|=\Tilde{f}]\Pr(|\Tilde{\gB}|=\Tilde{f}).
\end{align}

For all parameter group $q\in[\ngroups]$ and $i,j\in\gH$, we have
\begin{align}
\E[\|\subsgrad{i}{q}-\subsgrad{j}{q}\|^2]
={}&\E[\|(\subsgrad{i}{q}-\subgrad{i}{q})+(\subgrad{i}{q}-\subgrad{}{q})+(\subgrad{}{q}-\subgrad{j}{q})+(\subgrad{j}{q}-\subsgrad{j}{q})\|^2]\\
\label{appeq:rho_3}
\begin{split}
={}&\E[\|\subsgrad{i}{q}-\subgrad{i}{q}\|^2]
+\|\subgrad{i}{q}-\subgrad{}{q}\|^2
+\|\subgrad{}{q}-\subgrad{j}{q}\|^2
+\E[\|\subgrad{j}{q}-\subsgrad{j}{q}\|^2]\\
&+2\langle\subgrad{i}{q}-\subgrad{}{q},\subgrad{}{q}-\subgrad{j}{q}\rangle
\end{split}
\\
\label[ineq]{appineq:rho_4}
\begin{split}
\le{}&\E[\|\subsgrad{i}{q}-\subgrad{i}{q}\|^2]
+\|\subgrad{i}{q}-\subgrad{}{q}\|^2
+\|\subgrad{}{q}-\subgrad{j}{q}\|^2
+\E[\|\subgrad{j}{q}-\subsgrad{j}{q})\|^2]\\
&+2\|\subgrad{i}{q}-\subgrad{}{q}\|\cdot\|\subgrad{}{q}-\subgrad{j}{q}\|
\end{split}
\\
\label[ineq]{appineq:rho_5}
\le{}&2\sigma^2+4\kappa^2
\end{align}
Here \cref{appeq:rho_3} is due to the independence of $\subsgrad{i}{q}$ and $\subsgrad{j}{q}$,
\cref{appineq:rho_4} comes from the Cauchy inequality,
and \cref{appineq:rho_5} follows \cref{asp:bounded_var,asp:dissimilarity}.

Then according to the \cref{lemma:resilience_random}, we have
\begin{align}
\label{appeq:group_byz_resilience}
\E[\|\subaggsgrad{q}-\subavgsgrad{q}\|^2]
\le c^2\max_{i,j\in\gH}\E[\|\subsgrad{i}{q}-\subsgrad{j}{q}\|^2]
\le c^2(2\sigma^2+4\kappa^2),
\end{align}
where $c^2=4\lambda^2(n-f-1)^2/(n-f)$.

For honest client $h$, the expectation of abnormal score $\groupscore{h}{q}$ from group $q$ can be bounded as follows.
\begin{align}
\E[\groupscore{h}{q}]
={}&\E[\|\subsgrad{h}{q}-\subaggsgrad{q}\|]\\
\label[ineq]{appeq:honest_mean_group_2}
\le{}&\E[\|\subsgrad{h}{q}-\subgrad{h}{q}\|
    +\|\subgrad{h}{q}-\Bar{\vg}^{(q)}\|
    +\|\Bar{\vg}^{(q)}-\vg^{(q)}\|
    +\|\vg^{(q)}-\subaggsgrad{q}\|]\\
={}&\E[\|\subsgrad{h}{q}-\subgrad{h}{q}\|]
    +\|\subgrad{h}{q}-\Bar{\vg}^{(q)}\|
    +\E[\|\Bar{\vg}^{(q)}-\vg^{(q)}\|]
    +\E[\|\vg^{(q)}-\subaggsgrad{q}\|]\\
\label[ineq]{appeq:honest_mean_group_4}
\le{}&\sqrt{\E[\|\subsgrad{h}{q}-\subgrad{h}{q}\|^2]}
    +\|\subgrad{h}{q}-\Bar{\vg}^{(q)}\|
    +\sqrt{\E[\|\Bar{\vg}^{(q)}-\vg^{(q)}\|^2]}
    +\sqrt{\E[\|\vg^{(q)}-\subaggsgrad{q}\|^2]}\\
\label[ineq]{appeq:honest_mean_group_5}
\le{}&(1+\frac{1}{\sqrt{n-f}})\sigma+\kappa+c\sqrt{2\sigma^2+4\kappa^2}.
\end{align}
Here \cref{appeq:honest_mean_group_2} is a result of triangular inequality,
\cref{appeq:honest_mean_group_4} comes from Cauchy inequality,
and \cref{appeq:honest_mean_group_5} is a combined result of \cref{appeq:group_byz_resilience} and \cref{asp:bounded_var,asp:dissimilarity}.

The variance of $\groupscore{h}{q}$ can be bounded as follows.
\begin{align}
\Var[\groupscore{h}{q}]
={}&\E[(\groupscore{h}{q})^2]-(\E[\groupscore{h}{q}])^2\\
\le{}&\E[(\groupscore{h}{q})^2]\\
={}&\E[\|\subsgrad{h}{q}-\subaggsgrad{q}\|^2]\\
={}&\E[\|(\subsgrad{h}{q}-\subgrad{h}{q})
+(\subgrad{h}{q}-\subgrad{}{q})
+(\subgrad{}{q}-\subavgsgrad{q})
+(\subavgsgrad{q}-\subaggsgrad{q})\|^2]\\
\label[ineq]{appeq:honest_var_group_split_5}
\le{}&4\E[\|\subsgrad{h}{q}-\subgrad{h}{q}\|^2
+\|\subgrad{h}{q}-\subgrad{}{q}\|^2
+\|\subgrad{}{q}-\subavgsgrad{q}\|^2
+\|\subavgsgrad{q}-\subaggsgrad{q}\|^2].
\end{align}
Here \cref{appeq:honest_var_group_split_5} is a result of Cauchy inequality.

We bound $\E[\|\subgrad{}{q}-\subavgsgrad{q}\|^2]$ as follows.
\begin{align}
\E[\|\subgrad{}{q}-\subavgsgrad{q}\|^2]
={}&\E[\|\frac{1}{n-f}\sum_{i\in\honestclients}(\subgrad{i}{q}-\subsgrad{i}{q})\|^2]\\
\label{appeq:honest_estimator_var_2}
={}&\frac{1}{(n-f)^2}\sum_{i\in\honestclients}\E[\|\subgrad{i}{q}-\subsgrad{i}{q}\|^2]\\
\label[ineq]{appeq:honest_estimator_var_3}
\le{}&\frac{1}{(n-f)^2}\sum_{i\in\honestclients}\sigma^2\\
\label{appeq:honest_estimator_var_4}
={}&\frac{\sigma^2}{n-f}
\end{align}
Here \cref{appeq:honest_estimator_var_2} comes from the independence of minibatches sampling across different clients, and \cref{appeq:honest_estimator_var_3} is a result of \cref{asp:bounded_var}.

Applying \cref{asp:bounded_var,asp:dissimilarity,appeq:group_byz_resilience,appeq:honest_estimator_var_4} to \cref{appeq:honest_var_group_split_5}, we have
\begin{align}
\Var[\groupscore{h}{q}]
\le{}&4(\sigma^2+\kappa^2+\frac{\sigma^2}{n-f}+c(2\sigma^2+4\kappa^2))\\
\label{appeq:honest_var_group}
={}&(4+8c^2+\frac{4}{n-f})\sigma^2+(4+16c^2)\kappa^2.
\end{align}

According to \cref{appeq:honest_mean_group_5,appeq:honest_var_group}, we can bound the expectation and variance of total abnormal score $\clientscore{h}$ of an honest client $h$.
\begin{gather}
\label{appeq:honest_mean}
\E[\clientscore{h}]
=\E[\sum_{q=1}^\ngroups\groupscore{h}{q}]
\le \ngroups(\sigma+\kappa+c\sqrt{2\sigma^2+4\kappa^2}):=A,\\
\label{appeq:honest_var}
\Var[\clientscore{h}]
=\sum_{q=1}^\ngroups\Var[\groupscore{h}{q}]
\le \ngroups((4+8c^2+\frac{4}{n-f})\sigma^2+(4+16c^2)\kappa^2):=B.
\end{gather}
Here the addictive property of variance is a result of the independence of group abnormal scores $\{\groupscore{h}{q}\mid q\in[\ngroups]\}$, which comes from the independence of components in a gradient \cite{yang2017independence}.

From Chebyshev's inequality, for any $\Delta_h>0$ and honest client $h\in[\nclients]\setminus\byzantineclients$, we have
\begin{align}
\label{appeq:honest_chebyshev}P(\clientscore{h}<\E[\clientscore{h}]+\Delta_h)
\ge1-\frac{\Var[\clientscore{h}]}{\Delta_h^2}.
\end{align}

Consider the expectation of abnormal score $\groupscore{b}{q}$ from group $q$ for Byzantine client $b\in\byzantineclients$
\begin{align}
\E[\groupscore{b}{q}]
={}&\E[\|\subsgrad{b}{q}-\subaggsgrad{q}\|]\\
={}&\E[\|(\subsgrad{b}{q}-\subgrad{}{q})-(\subaggsgrad{q}-\subgrad{}{q})\|]\\
\ge{}&\E[\|\subsgrad{b}{q}-\subgrad{}{q}\|-\|\subaggsgrad{q}-\subgrad{}{q}\|]\\
\ge{}&\E[\|\subsgrad{b}{q}-\subgrad{}{q}\|
-(\|\subaggsgrad{q}-\subsgrad{}{q}\|
+\|\subsgrad{}{q}-\subgrad{}{q}\|)]\\
\ge{}&\|\subsgrad{b}{q}-\subgrad{}{q}\|
-(\sqrt{\E[\|\subaggsgrad{q}-\subgrad{}{q}\|^2]}
+\sqrt{\E[\|\subsgrad{}{q}-\subgrad{}{q}\|^2}])\\
\label{appeq:byz_mean_group}
\ge{}&\delta_b-c\sqrt{2\sigma^2+4\kappa^2}-\frac{\sigma}{\sqrt{n-f}}
 \end{align}
where $\delta_b=\|\subsgrad{b}{q}-\subgrad{}{q}\|$ is the expected deviation of  Byzantine client $b$ from the average of honest gradients.
Here the first and second inequalities come from triangular inequality,
the third inequality is based on Cauchy inequality,
and the 4-th inequality is a combined result of \cref{appeq:group_byz_resilience,appeq:honest_estimator_var_4}.

The variance of abnormal score $\groupscore{b}{q}$ can be bounded as follows.
\begin{align}
\Var[\groupscore{b}{q}]
={}&\Var[\|\subsgrad{b}{q}-\subaggsgrad{q}\|]\\
\le{}&\E[\|\subsgrad{b}{q}-\subaggsgrad{q}-\E[\subsgrad{b}{q}-\subaggsgrad{q}]\|^2]\\
={}&\E[\|(\subsgrad{b}{q}-\E[\subsgrad{b}{q}])-(\subaggsgrad{q}-\E[\subaggsgrad{q}])\|^2]\\
\le{}&2\E[\|\subsgrad{b}{q}-\E\subsgrad{b}{q}\|^2+\|\subaggsgrad{q}-\E\subaggsgrad{q}\|^2]\\
\label{appeq:byz_var_group_split}
={}&2\E\|\subsgrad{b}{q}-\E\subsgrad{b}{q}\|^2
+2\E\|\subaggsgrad{q}-\E\subaggsgrad{q}\|^2
\end{align}
The first inequality results from \cref{lemma:variance},
and the second inequality comes from Cauchy inequality.

We bound $\|\subaggsgrad{q}-\E\subaggsgrad{q})\|$ as follows.
\begin{align}
\E\|\subaggsgrad{q}-\E\subaggsgrad{q}\|
={}&\E\|(\subaggsgrad{q}-\subavgsgrad{q})
+(\subavgsgrad{q}-\E\subavgsgrad{q})
-\E[\subaggsgrad{q}-\subavgsgrad{q}]\|^2]\\
\le{}&3\E[\|\subaggsgrad{q}-\subavgsgrad{q}\|^2
+\|\subavgsgrad{q}-\E\subavgsgrad{q}\|^2
+\|\E[\subaggsgrad{q}-\subavgsgrad{q}]\|^2]\\
\le{}&6\E\|\subaggsgrad{q}-\subavgsgrad{q}\|^2
+3\E\|\subavgsgrad{q}-\E\subavgsgrad{q}\|^2\\
\le{}&48(\sigma^2+\kappa^2)+\frac{3\sigma^2}{n-f}\\
\label{appeq:agg_bound}
={}&(48+\frac{3\sigma^2}{n-f})\sigma^2+48\kappa^2
\end{align}

Apply \cref{appeq:agg_bound} to \cref{appeq:byz_var_group_split}, we have
\begin{align}
\label{appeq:byz_var_group}
\Var[\groupscore{b}{q}]
\le2\sigma_b^2+(96+\frac{6}{n-f})\sigma^2+96\kappa^2,
\end{align}
where $\sigma_b^2=\E\|\subsgrad{b}{q}-\E\subsgrad{b}{q}\|^2$ is the variance.

Similar to \cref{appeq:honest_mean,appeq:honest_var}, we utilize \cref{appeq:byz_mean_group,appeq:byz_var_group} to bound the expectation and variance of total abnormal score $\clientscore{b}$ of a byzantine client $b$.
\begin{gather}
\label{appeq:byz_mean}
\E[\clientscore{b}]
=\E[\sum_{q=1}^\ngroups\groupscore{b}{q}]
\ge \ngroups(\delta_b-2\sqrt{2}c\sqrt{2\sigma^2+4\kappa^2}-\frac{\sigma}{\sqrt{n-f}}):=C,\\
\label{appeq:byz_var}
\Var[\clientscore{b}]
=\sum_{q=1}^\ngroups\Var[\groupscore{b}{q}]
\le \ngroups(2\text{const}+(96+\frac{6}{n-f})\sigma^2+96\kappa^2):=D
\end{gather}
where $\delta_b=\E\|\sgrad{b}{}-\grad{}{}\|$.
According to \citet{shejwalkar2021dnc}, $\sigma_b^2$ is bounded, i.e., $\sigma_b^2\le\text{const}$.

Similarly, we apply Chebyshev's inequality to the abnormal score of a Byzantine client $b\in\byzantineclients$.
\begin{align}
\label{appeq:byz_chebyshev}
\Pr(\clientscore{b}\ge\E[\clientscore{b}]-\Delta_b)\ge1-\frac{\Var[\clientscore{b}]}{\Delta_b^2},\quad b\in\byzantineclients.
\end{align}

Combine \cref{appeq:honest_mean,appeq:honest_var,appeq:honest_chebyshev}, and take $\Delta_h=(C-A)/(1+\sqrt{D/B})$, we have
\begin{align}
\Pr(\clientscore{h}<\frac{\sqrt{D}A+\sqrt{B}C}{\sqrt{B}+\sqrt{D}})
={}&\Pr(\clientscore{h}<A+\Delta_h)\\
\ge{}&\Pr(\clientscore{h}<\E[\clientscore{h}]+\Delta_h)\\
\ge{}&1-\frac{\Var[\clientscore{h}]}{\Delta_h^2}\\
\ge{}&1-\frac{B}{\Delta_h^2}\\
={}&1-\frac{(\sqrt{B}+\sqrt{D})^2}{(C-A)^2},
\end{align}

Combine \cref{appeq:byz_mean,appeq:byz_var,appeq:byz_chebyshev}, and take $\Delta_b=(C-A)/(1+\sqrt{B/D})$, we have
\begin{align}
\label{appeq:prob_byzantine}
\Pr(\clientscore{b}\ge\frac{\sqrt{D}A+\sqrt{B}C}{\sqrt{B}+\sqrt{D}})
\ge{}&\Pr(\clientscore{b}>C-\Delta_b)\\
\ge{}&\Pr(\clientscore{b}>\E[\clientscore{b}]-\Delta)\\
\ge{}&1-\frac{\Var[\clientscore{b}]}{\Delta^2}\\
\ge{}&1-\frac{D}{\Delta_b^2},\\
={}&1-\frac{(\sqrt{B}+\sqrt{D})^2}{(C-A)^2},
\end{align}

Then consider the probability a Byzantine $b$ is selected,
\begin{align}
\Pr(b\in\Tilde{\gB})
={}&1-\Pr(b\notin\Tilde{\gB})\\
\le{}&1-\Pr(\clientscore{h}<\frac{\sqrt{D}A+\sqrt{B}C}{\sqrt{B}+\sqrt{D}}, \forall h\in\honestclients, \clientscore{b}>\frac{\sqrt{D}A+\sqrt{B}C}{\sqrt{B}+\sqrt{D}})\\
={}&\Pr(\clientscore{h}\ge\frac{\sqrt{D}A+\sqrt{B}C}{\sqrt{B}+\sqrt{D}}, \forall h\in\honestclients\text{ or }\clientscore{b}<\frac{\sqrt{D}A+\sqrt{B}C}{\sqrt{B}+\sqrt{D}})\\
\le{}&\sum_{h\in\honestclients}\Pr(\clientscore{h}\ge\frac{\sqrt{D}A+\sqrt{B}C}{\sqrt{B}+\sqrt{D}})+\Pr(\clientscore{b}<\frac{\sqrt{D}A+\sqrt{B}C}{\sqrt{B}+\sqrt{D}})\\
\le{}&(n-f+1)\cdot\frac{(\sqrt{B}+\sqrt{D})^2}{(C-A)^2}
\end{align}
Solve $(n-f+1)\cdot(\sqrt{B}+\sqrt{D})^2/(C-A)^2\le\varepsilon$, we have
\begin{align}
\begin{split}
\E[\|\sgrad{b}{}-\grad{}{}\|]
\ge{}&(1+\frac{1}{\sqrt{n-f}})\sigma+\kappa+2c\sqrt{2\sigma^2+4\kappa^2}\\
&+\sqrt{\frac{n-f+1}{p\varepsilon}}(
\sqrt{(4+16c^2+\frac{4}{n-f})\sigma^2+(4+8c^2)\kappa^2}\\
&+\sqrt{2\text{const}+(96+\frac{6}{n-f})\sigma^2+96\kappa^2}
)
\end{split}
\end{align}
which implies that the Byzantine gradients that deviate from the optimal gradient will be filtered by \proposedmethod.

Therefore, for all $b\in\Tilde{\gB}$,
\begin{align}
\label{eq:byz_dev_bound}
\E[\|\vg_{\Tilde{\byzantineclients}}-\Bar{\vg}\|^2]
\le\gO((\kappa^2+\sigma^2)(1+\lambda^2+\frac{1}{n-f})(1+\frac{n-f+1}{p})):=C_1^2
\end{align}
The elimination of $\varepsilon$ is due to the sub-Gaussian property of $\vg_{\Tilde{\byzantineclients}}-\Bar{\vg}$, which comes from the Gaussian property of benign gradients.

Combine \cref{eq:benign_dev_bound,eq:byz_dev_bound}, $\E[\|\hat{\vg}-\Bar{\vg}\|^2]$ is finally bounded as follows.
\begin{align}
&\E[\|\hat{\vg}-\Bar{\vg}\|^2]\\
\le{}&\frac{|\Tilde{\gH}|^2}{(n-f)^2}(\sigma^2/\Tilde{h}+\kappa^2)+\frac{|\Tilde{\gB}|^2}{(n-f)^2}C_1^2\\
\label{eq:dev_bound}
\le{}&\frac{(n-2f)^2}{(n-f)^2}(\sigma^2/(n-2f)+\kappa^2)
    +\frac{f^2}{(n-f)^2}C_1^2,\\
={}&\gO((\kappa^2+\sigma^2)(1+\frac{n-f+1}{p})(1+\lambda^2+\frac{1}{n-f})\frac{f^2}{(n-f)^2})
\end{align}
which completes the proof.
\end{proof}

\subsubsection{Proof For The Main Proposition}

\begin{proof}
%%%%%%%%%%%%%%%%%%%% one step lipschitz %%%%%%%%%%%%%%%%%%%%%%%%%
According to the Lipschitz property of loss function $\loss$, we have
\begin{align}
\label{eq:one_step_lipschitz}
\loss(\param[t])-\loss(\param[t+1])
\ge{}&\left<\nabla\loss(\param[t]), \param[t]-\param[t+1]\right>-\frac{L}{2}\|\param[t]-\param[t+1]\|^2.
\end{align}
%%%%%%%%%%%%%%%%%%%% one step expand %%%%%%%%%%%%%%%%%%%%%%%%%
Since $\param[t]-\param[t+1]=\nabla\loss(\param[t])+(\hat{\vg}^t-\nabla\loss(\param[t]))$, we can write \cref{eq:one_step_lipschitz} as follows
\begin{equation}
\label{eq:one_step_expand}
\begin{aligned}
\loss(\param[t])-\loss(\param[t+1])
\ge{}&(\eta-\frac{L}{2}\eta^2)\|\nabla\loss(\param[t])\|^2\\
    &+(\eta-\frac{L}{2}\eta^2)\langle\nabla\loss(\param[t]), \aggsgrad{t}-\nabla\loss(\param[t])\rangle\\
    &-\frac{L}{2}\eta^2\|\hat{\vg}^t-\nabla\loss(\param[t])\|^2.
\end{aligned}
\end{equation}
%%%%%%%%%%%%%%%%%%%% inner product %%%%%%%%%%%%%%%%%%%%%%%%%
Then, we bound inner product term $\langle\nabla\loss(\param[t]), \aggsgrad{t}-\nabla\loss(\param[t])\rangle$.
\begin{align}
|\nabla\loss(\param[t]), \aggsgrad{t}-\nabla\loss(\param[t])\rangle|
\le{}&\|\langle\nabla\loss(\param[t])\|\cdot\|\aggsgrad{t}-\nabla\loss(\param[t])\|\\
\label{eq:ip_bound}
\le{}&\frac{1}{2}\|\nabla\loss(\param[t])\|^2
    +2\|\aggsgrad{t}-\nabla\loss(\param[t])\|^2
\end{align}
%%%%%%%%%%%%%%%%%%%% one step expand %%%%%%%%%%%%%%%%%%%%%%%%%
Combine \cref{eq:one_step_expand,eq:ip_bound} and we have
\begin{align}
\begin{split}
\loss(\param[t])-\loss(\param[t+1])
\ge{}&(\eta-\frac{L}{2}\eta^2)\|\nabla\loss(\param[t])\|^2\\
    &+(\eta-\frac{L}{2}\eta^2)\cdot-(\frac{1}{2}\|\nabla\loss(\param[t])\|^2
    +2\|\aggsgrad{t}-\nabla\loss(\param[t])\|^2)\\
    &-\frac{L}{2}\eta^2\|\hat{\vg}^t-\nabla\loss(\param[t])\|^2
\end{split}\\
\label{eq:one_step_expand_final}
={}&(\frac{1}{2}\eta-\frac{L}{4}\eta^2)\|\nabla\loss(\param[t])\|^2
    -(2\eta-\frac{L}{2}\eta^2)\|\hat{\vg}^t-\nabla\loss(\param[t])\|^2
\end{align}
%%%%%%%%%%%%%%%%%%%% one step expectation %%%%%%%%%%%%%%%%%%%%%%%%%
Take the expectation on both sides of \cref{eq:one_step_expand_final}, we have
\begin{align}
\label[ineq]{eq:one_step_expectation}
\E[\loss(\param[t])-\loss(\param[t+1])]
\ge{}&(\frac{1}{2}\eta-\frac{L}{4}\eta^2)\E[\|\nabla\loss(\param[t])\|^2]
    -(2\eta-\frac{L}{2}\eta^2)\E[\|\hat{\vg}^t-\nabla\loss(\param[t])\|^2].
\end{align}

%%%%%%%%%%%%%%%%%%%% multi steps %%%%%%%%%%%%%%%%%%%%%%%%%

Apply \cref{lemma:aggregation_error} to \cref{eq:one_step_expectation} and sum over $t=0, 1, \ldots, T-1$, then we have
\begin{align}
\E[\loss(\param[0])-\loss(\param[T])]
\ge{}&(\frac{1}{2}\eta-\frac{L}{4}\eta^2)\sum_{t-1}^T\E[\|\nabla\loss(\param[t])\|^2]
    -T(\frac{1}{2}\eta-\frac{L}{2}\eta^2)C^2.
\end{align}
where $C^2=\gO((\kappa^2+\sigma^2)(1+(n-f+1)/p)(1+\lambda^2+1/(n-f))\frac{f^2}{(n-f)^2})$

Take $\eta=1/2L$, and consider that the loss function is generally non-negative, e.g., cross-entropy loss, $\ell_2$ loss,
\begin{align}
\E[\loss(\param[0])]
\ge{}&\frac{3}{16L}\sum_{t-1}^T(\E[\|\nabla\loss(\param[t])\|^2]
    -\frac{2}{3}C^2),
\end{align}
which completes the proof.
\end{proof}

\subsection{Comparasion of Our Convergence Results with Recent Works}
\label{appsec:comparaion}

Recent works \cite{karimireddy2022bucketing,yu2022secure,el2021mda,allen2020safeguard} also analyze the convergence of Byzantine-robust FL in the non-IID setting.
We compare our convergence results with them.

\textbf{Similarities.}
We all guarantee that we can reach an approximate optimal point after a certain number of communication rounds.
Moreover, we all admit that convergence in the presence of Byzantine clients may be impossible due to non-IID data, i.e., $\|\nabla\loss(\param)\|$ may never decrease to zero. 

\textbf{The difference from \citet{karimireddy2022bucketing}.}
Our result is orthogonal to one in \citet{karimireddy2022bucketing} since our \proposedmethod method is orthogonal to the Bucketing scheme proposed by \citet{karimireddy2022bucketing}:
we focus on how gradient splitting can alleviate the curse of dimensionality and gradient heterogeneity at the same time while \cite{karimireddy2022bucketing} considers how partitioning gradients into buckets can help with non-IID data.
In fact, we can obtain a better convergence result by combining our method with Bucketing scheme \cite{karimireddy2022bucketing}. The result would enjoy the strengths of both our \proposedmethod method and Bucketing scheme: (1) free from the curse of dimensionality; (2) handle gradient heterogeneity that comes from non-IID data; (3) the variance term diminishes when there is no Byzantine client.

\textbf{The difference from \citet{el2021mda}.}
Technically, our result is orthogonal from one in \citet{el2021mda}.
\citet{el2021mda} consider how to improve robust AGRs to achieve optimal Byzantine resilience.
We focus on how to handle the high-dimension nature of gradients.
Moreover, \citet{el2021mda} focus on decentralized FL with a server and provide an order optimal upper bound.
However, this strong result requires a Byzantine ratio lower than $1/3$.
By contrast, we consider a centralized FL setting and only assume the Byzantine ratio to be lower than $1/2$.

\textbf{The difference from \citet{peng2022vr}.}
\citet{peng2022vr} consider how client variance reduction and robust AGRs can jointly improve Byzantine resilience. And we concentrate more on gradient dimensions
\citet{peng2022vr} consider an ideal case where the objective function is strongly convex, while we consider a more general non-convex case.

\textbf{The difference from \citet{yu2022secure}.}
We considered different settings. We consider standard federated learning with a central server and \cite{yu2022secure} considers distributed optimization without a central server. Besides, the convergence analysis is based on different assumptions.

\begin{itemize}
\item 
\citet{yu2022secure} assume the strong convexity of the loss function (Assumption 3) while we do not. This assumption is restrictive since global models are neural networks in practical settings.
\item 
\citet{yu2022secure} do not assume uniformly bounded gradient differences but assume a common global minimizer. Instead, they assume a common minimizer among different agents (clients).
\end{itemize}

Due to different settings and assumptions, our convergence results are different.
\citet{yu2022secure} guarantee almost sure convergence while we ensure that we can approach an approximate optimal parameter.
Note that our upper bound matches the lower bound in \cite{karimireddy2022bucketing}.

\section{Experiment Setup}
\label{appsec:setup}

\subsection{Setup for Main Experiments in Section \ref{sec:experiments}}

\textbf{Data distribution.}
For CIFAR-10, CIFAR-100 \cite{krizhevsky2009cifar} and ImageNet-12 \cite{li2021imagenet12}, we use Dirichlet distribution to generate non-IID data by following \citet{yurochkin2019bayesian, li2021federated}.
In particular, for each client $i$, we sample $p_i^{\target} \sim \text{Dir}(\beta)$ and allocate a $p^\target_i$ proportion of the data of label $\target$ to client $i$, where $\text{Dir}(\beta)$ represents the Dirichlet distribution with a concentration parameter $\beta$.
We follow \citet{li2021federated} and set the number of clients $\nclients=50$ and the concentration parameter $\beta=0.5$ as default.

\textbf{Other setups.}
The setups for datasets FEMNIST \cite{caldas2018leaf}, CIFAR-10 \cite{krizhevsky2009cifar}, CIFR-100 \cite{krizhevsky2009cifar} and ImageNet-12 \cite{russakovsky2015imagenet} are listed in below \cref{apptable:default_setup}.
% \clearpage
\begin{table}[h]
\caption{Default experimental settings for FEMNIST, CIFAR-10, CIFAR-100 and ImageNet-12.}
\label{apptable:default_setup}
\begin{center}
\resizebox{\textwidth}{!}{
\begin{tabular}{lllll}
\toprule 
Dataset & FEMNIST & CIFAR-10 & CIFAR-100 & ImageNet-12 \\
Architecture & \makecell[l]{CNN \\ \cite{caldas2018leaf}} & \makecell[l]{AlexNet \\ \cite{krizhevsky2017alexnet}} & \makecell[l]{SqueezeNet \\ \cite{iandola2016squeezenet}} & \makecell[l]{ResNet-18 \\ \cite{he2016resnet}} \\
\midrule
\makecell[l]{\# Communication rounds} & 1000 & 200 & 400 & 200 \\
Client sample ratio & 0.005 & 0.1 & 0.1 & 0.1 \\
\midrule
\# Local epochs & 1 & 5 & 1 & 1 \\
Optimizer & SGD & SGD & SGD & SGD \\
Batch size & 64 & 64 & 64 & 128 \\
Learning rate & 0.5 & 0.1 & 0.1 & 0.1 \\
Momentum & 0.5 & 0.5 & 0.5 & 0.9 \\
Weight decay & 0.0001 & 0.0001 & 0.0001 & 0.0001 \\
Learning rate decay & No & No & No & \makecell[l]{Reduce to 0.01 \\ after 100-th \\ communication \\ round} \\
Gradient clipping & Yes & Yes & Yes & Yes \\
Clipping norm & 2 & 2 & 2 & 2 \\
\bottomrule
\end{tabular}
}
\end{center}
\end{table}

The hyperparameters of six attacks: BitFlip \cite{allen2020safeguard}, LabelFlip \cite{allen2020safeguard}, LIE \cite{baruch2019lie}, Min-Max \cite{shejwalkar2021dnc}, Min-Sum \cite{shejwalkar2021dnc}, IPM \cite{xie2020ipm}, are listed in \cref{apptbl:attack_hyperparams} below.
\clearpage
\begin{table}[H]
\caption{The hyperparameters of six attacks. N/A indicates that the attack has no hyperparameters that need to be set.}
\label{apptbl:attack_hyperparams}
\begin{center}
\begin{tabular}{ll}
\toprule
Attacks & Hyperparameters \\
\midrule
BitFlip & N/A \\
LabelFlip & N/A \\
LIE & $z=1.5$ \\
Min-Max & $\gamma_{\text{init}}=10,\tau=1\times10^{-5}$, $\vdelta$: coordinate-wise standard deviation\\
Min-Sum & $\gamma_{\text{init}}=10,\tau=1\times10^{-5}$, $\vdelta$: coordinate-wise standard deviation \\
IPM & $\text{\# eval}=2$\\
\bottomrule
\end{tabular}
\end{center}
\end{table}

The hyperparameters of six robust AGRs: 
Multi-Krum \cite{blanchard2017krum},
Bulyan \cite{guerraoui2018bulyan},
Median \cite{yin2018mediantrmean},
RFA \cite{pillutla2019geometric},
DnC \cite{shejwalkar2021dnc},
RBTM \cite{el2021mda},
are listed in \cref{apptbl:defense_hyperparams} below.
\begin{table}[H]
\caption{The default hyperparameters of the AGRs. N/A indicates that the robust AGR has no hyperparameters that need to be set.}
\label{apptbl:defense_hyperparams}
\begin{center}
\begin{tabular}{ll}
\toprule
AGRs & Hyperparameters \\
\midrule
Multi-Krum & N/A \\
Bulyan & N/A \\
Median & N/A \\
RFA & $T=3$ \\
DnC & $c=4, \textsf{niters}=1, b=10000$ \\
RBTM & N/A \\
\bottomrule
\end{tabular}
\end{center}
\end{table}

\section{\proposedmethod mitigates the deviation of aggregated gradients}

In \cref{sec:theoretical}, we claim that our \proposedmethod approach can reduce the deviation of aggregated gradient $\aggsgrad{}$ from the average of honest gradients $\avgsgrad{}$.
To verify this fact, we compare the deviation of the aggregated gradient of different defenses and their \proposedmethod variants in \cref{fig:deviation}.
In particular, we use $\|\aggsgrad{}-\avgsgrad{}\|$, the distance between the aggregated gradient $\aggsgrad{}$ and the average of honest gradients $\avgsgrad{}$ to measure the deviation degree.
As shown in \cref{fig:deviation}, the gradient deviation degree of \proposedmethod-enhanced defenses is much lower than their original versions as expected, which validates that our \proposedmethod can mitigate the gradient deviation.
\begin{figure}[H]
\centering
\includegraphics[width=0.3\textwidth]{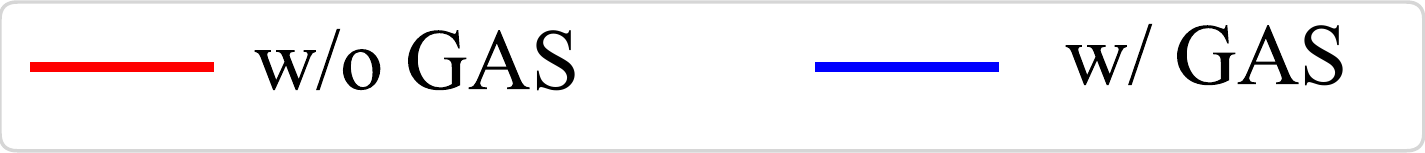}
\scalebox{0.7}{
\begin{subfigure}[]{\textwidth}
\centering
\includegraphics[width=0.3\textwidth]{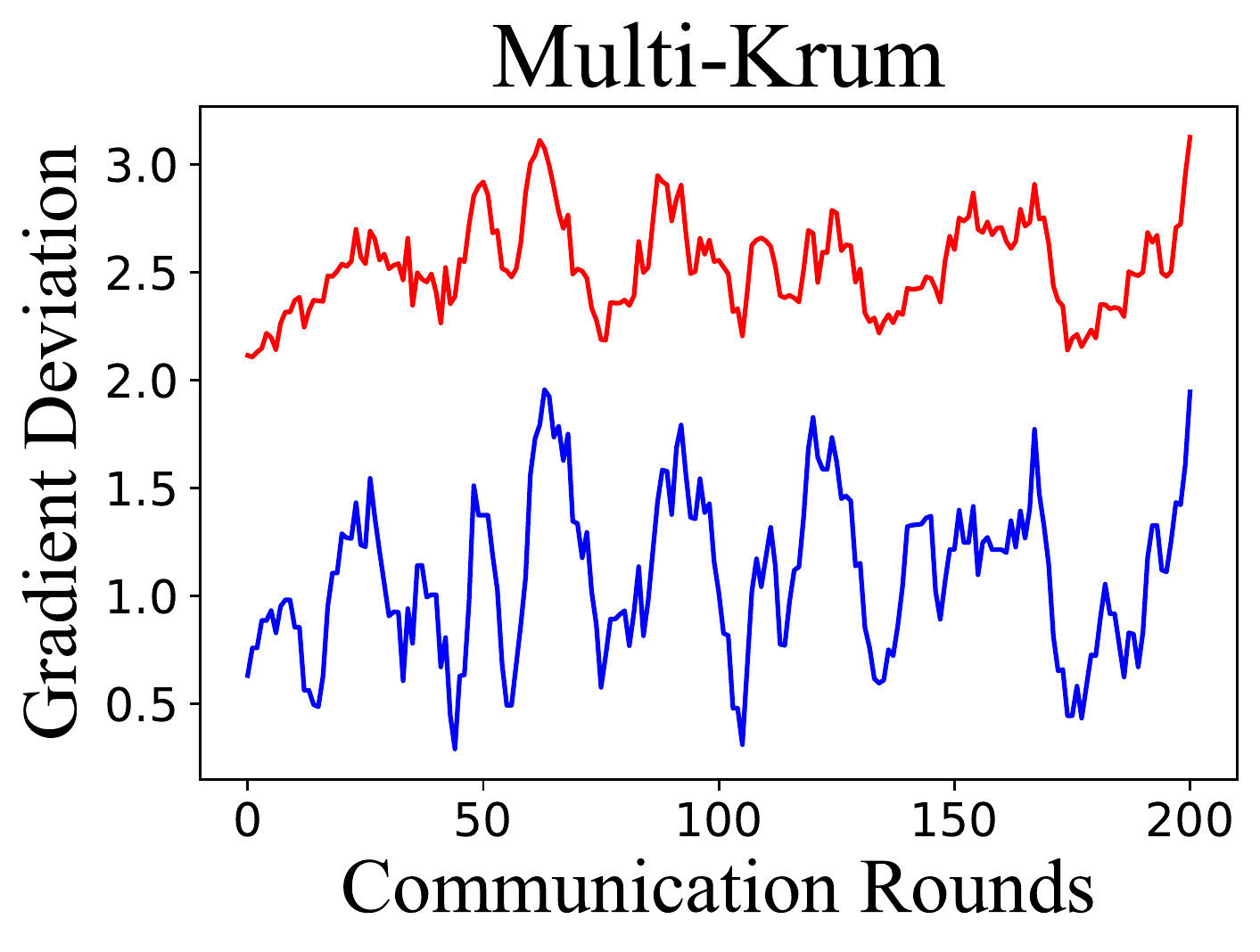}
\includegraphics[width=0.3\textwidth]{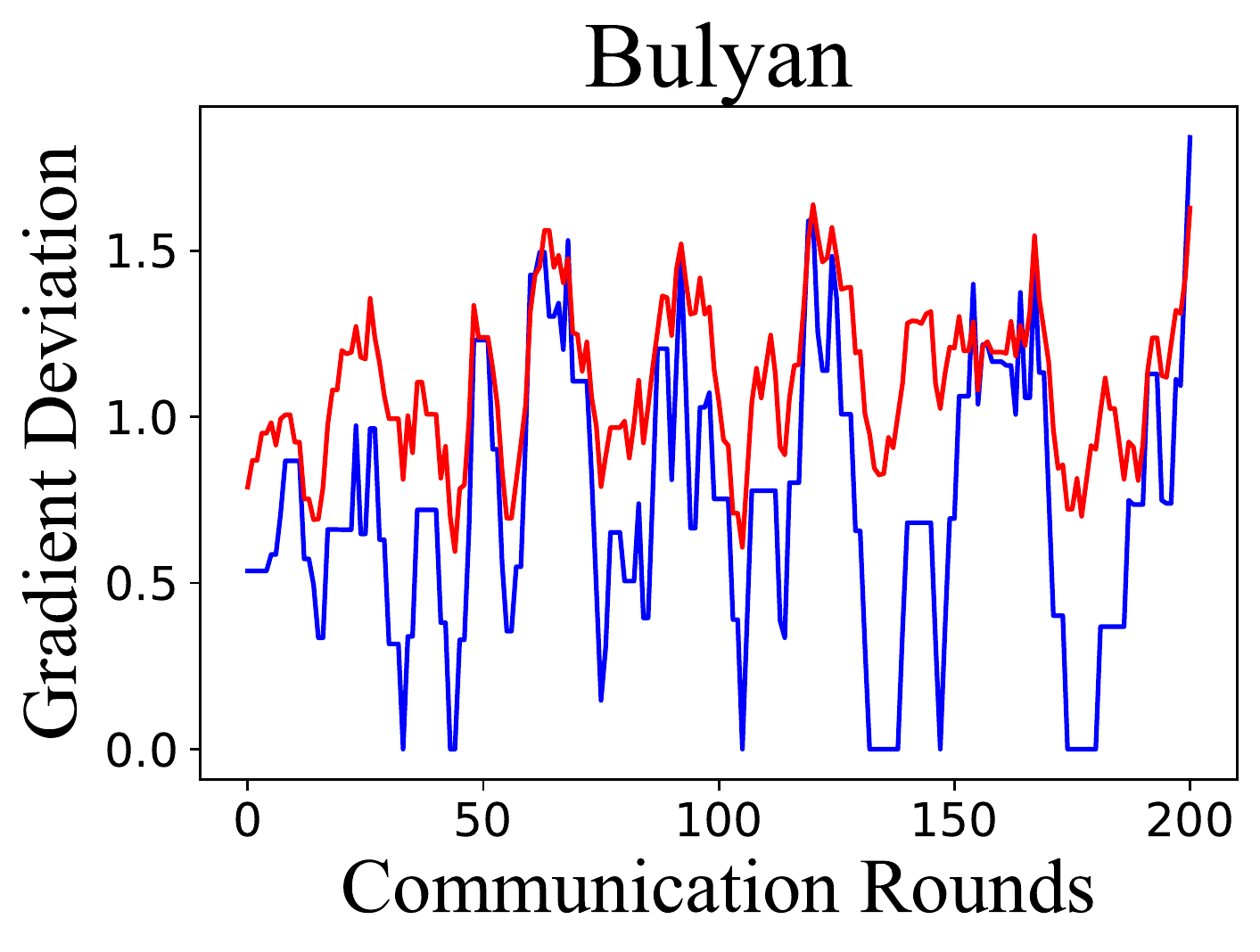}
\includegraphics[width=0.3\textwidth]{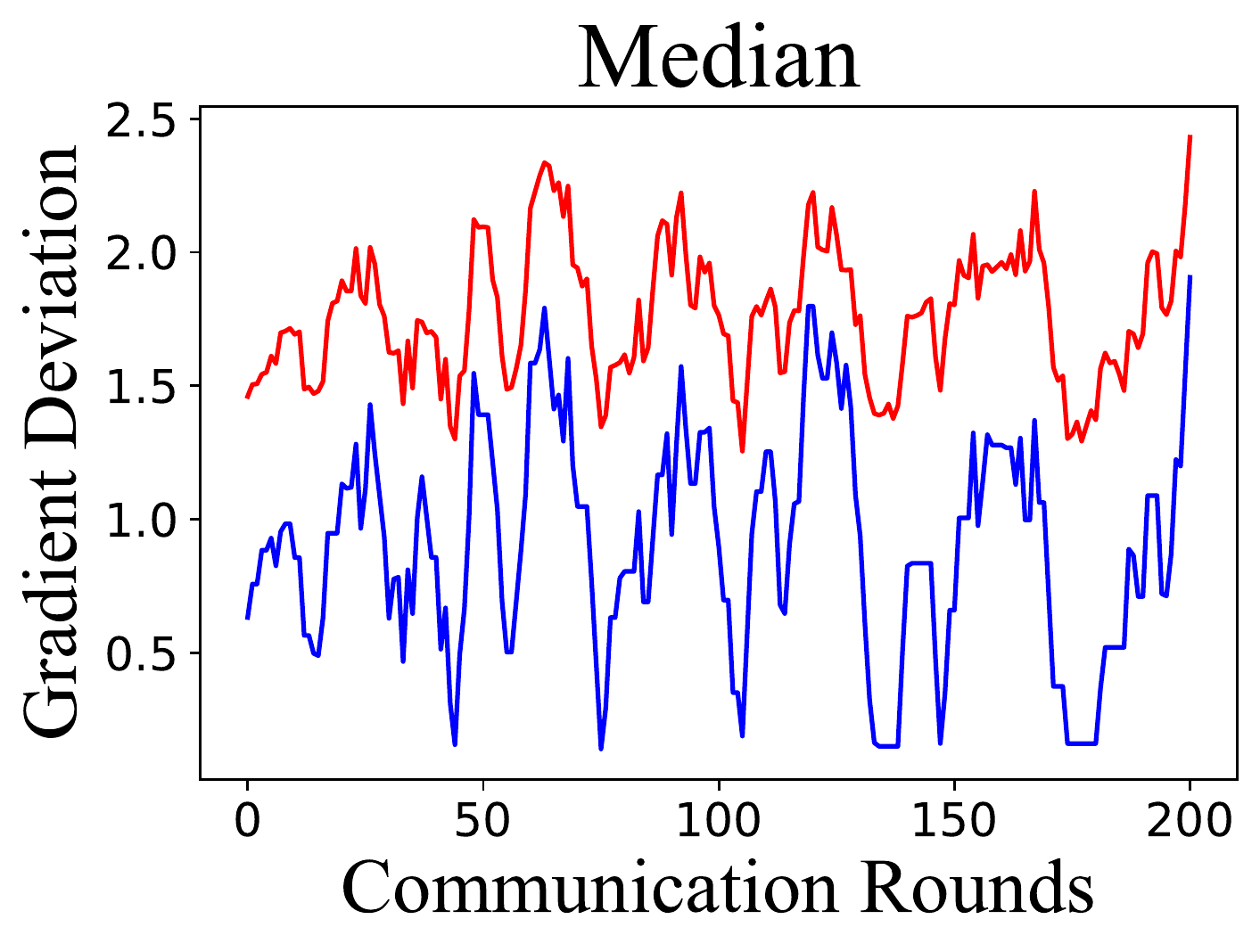}
\end{subfigure}
}
\scalebox{0.7}{
\begin{subfigure}[]{\textwidth}
\centering
\includegraphics[width=0.3\textwidth]{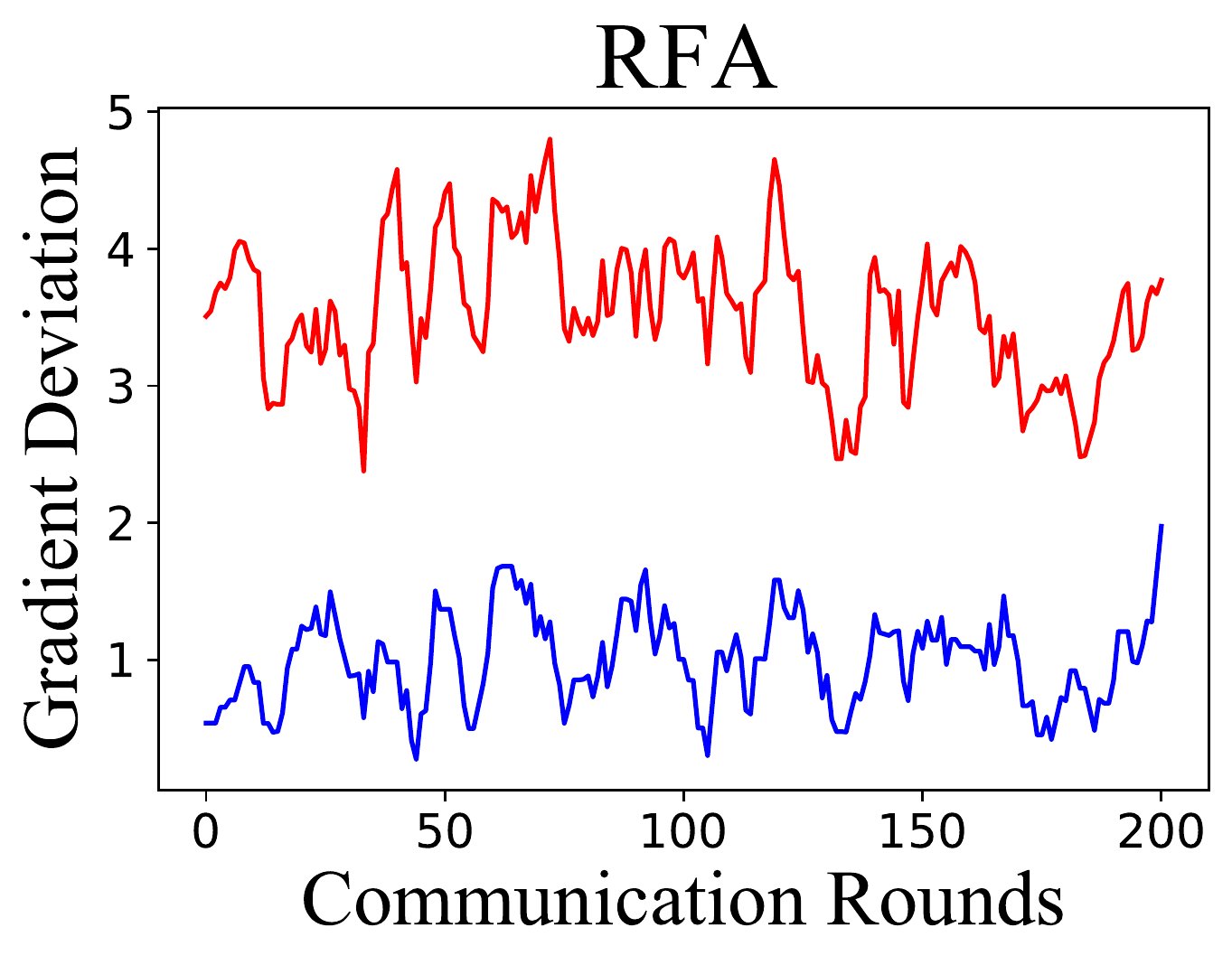}
\includegraphics[width=0.3\textwidth]{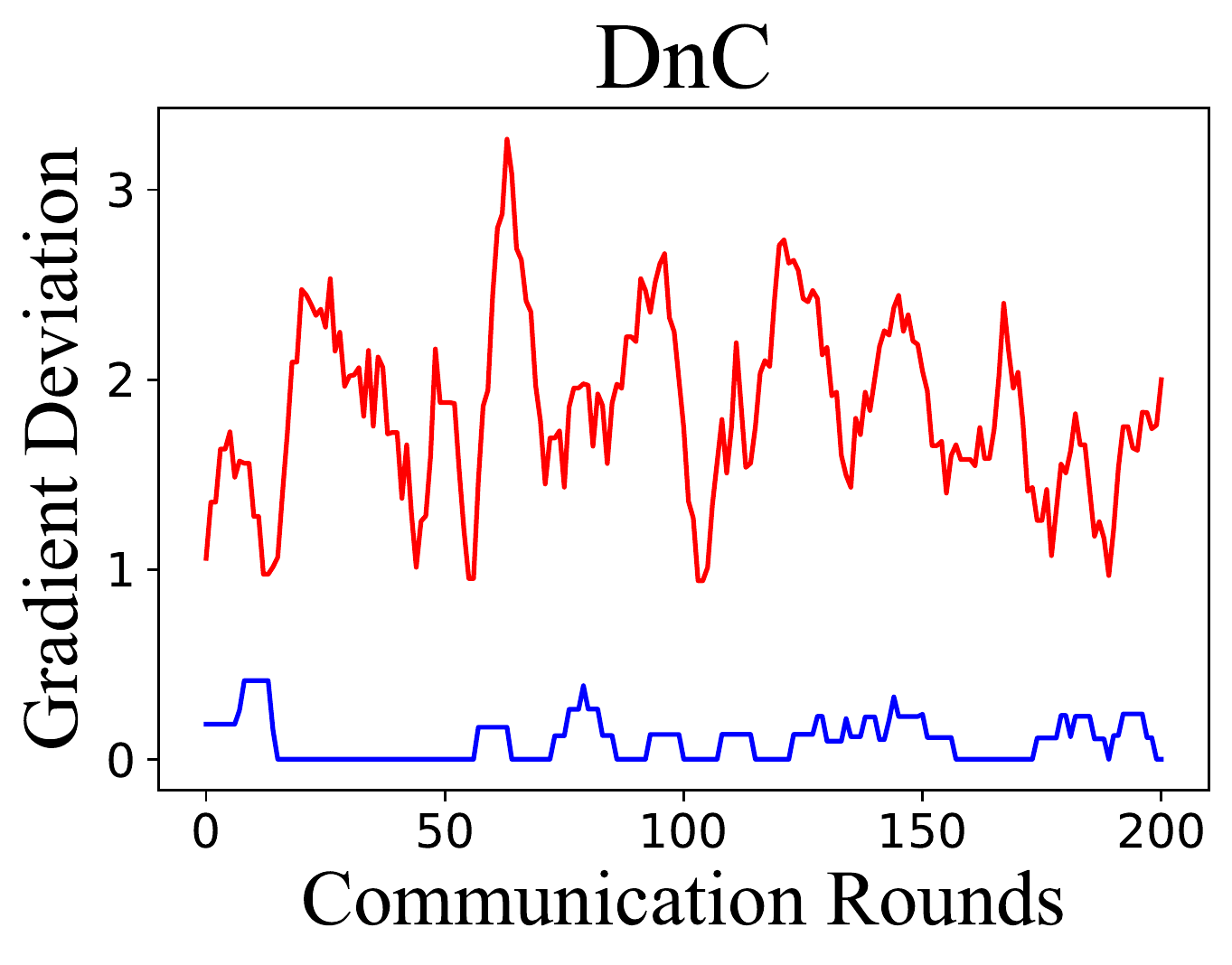}
\includegraphics[width=0.3\textwidth]{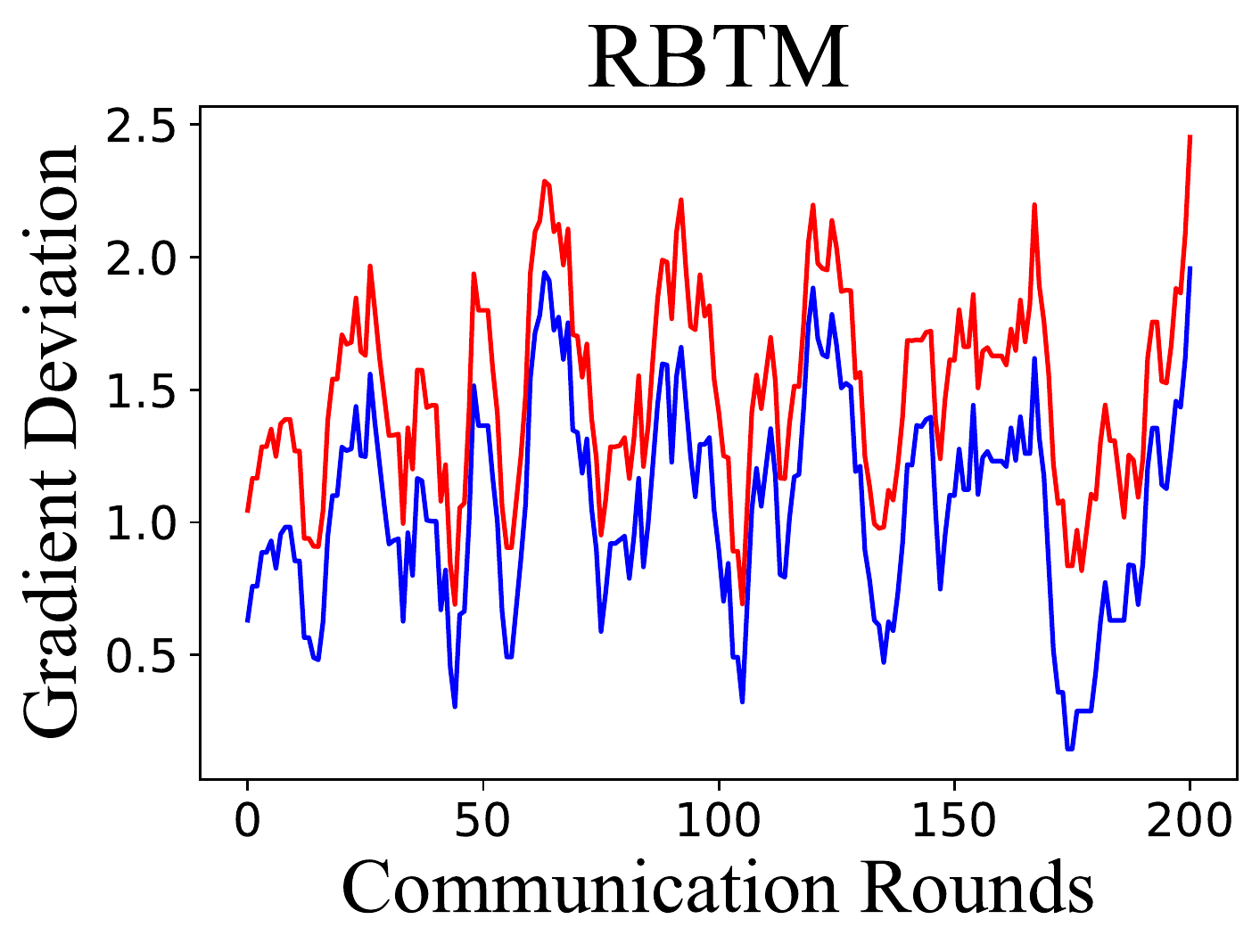}
\end{subfigure}
}
\caption{The gradient deviation $\|\aggsgrad{}-\avgsgrad{}\|$ of six different defenses w/ and w/o \proposedmethod under LIE attack on CIFAR-10. The lower the better.}
\label{fig:deviation}
\end{figure}

\end{document}